\newcommand{\papertitle}{
A Blessing of Dimensionality
in
Membership Inference through Regularization}
\begin{document}

%

%

\twocolumn[

\aistatstitle{\papertitle}

\aistatsauthor{Jasper Tan \And Daniel LeJeune \And Blake Mason \And Hamid Javadi \And Richard G.\ Baraniuk}

\aistatsaddress{ Rice University \And Stanford University \And Rice University \And Rice University \And Rice University} ]

\begin{abstract}
Is overparameterization a privacy liability?
In this work, we study the effect that the number of parameters has on a classifier's vulnerability to membership inference attacks.
We first demonstrate how the number of parameters of a model can induce a privacy--utility trade-off: increasing the number of parameters generally improves generalization performance at the expense of lower privacy.
However, remarkably, we then show that if coupled with proper regularization, increasing the number of parameters of a model can actually simultaneously increase \textit{both} its privacy and performance, thereby eliminating the privacy--utility trade-off.
Theoretically, we demonstrate this curious phenomenon for logistic regression with ridge regularization in a bi-level feature ensemble setting.
Pursuant to our theoretical exploration, we develop a novel leave-one-out analysis tool to precisely characterize the vulnerability of a linear classifier to the optimal membership inference attack.
We empirically exhibit this ``blessing of dimensionality'' for neural networks on a variety of tasks using early stopping as the regularizer.
\end{abstract}



\section{INTRODUCTION}\label{sec:intro}

Recently, the machine learning community has been gravitating towards the trend of increasingly overparameterized models, which have been shown both theoretically \citep{belkin2020two, hastie2019surprises, mei2022generalization} and empirically \citep{kaplan2020scaling,nakkiran2021deep} to generalize better than their smaller counterparts in diverse settings. These findings encourage machine learning system designers to opt for the largest possible model to maximize performance on unseen data.

However, when training machine learning models on sensitive data \citep{chen2019gmail, batmaz2019review, googlespeech}, it is also crucial to understand the attendant privacy issues to prevent data leaks.
Alarmingly, multiple attacks have been developed in the literature to perform {\em membership inference} (MI), which extracts information about specific examples in a model's training dataset, even when given only black-box access \citep{ fredrikson2015model,shokri2017membership}.

Is the trend of increasingly overparameterizing models detrimental to privacy? In this paper, we focus on the effect that the number of parameters of a model has on its vulnerability to MI attacks. We study this problem both theoretically and empirically.

We first demonstrate a parameter-wise privacy--utility trade-off: increasing the number of parameters of a model increases its generalization performance while also increasing its vulnerability to MI attacks. We show this theoretically for logistic regression and empirically for neural networks (NNs). This corroborates previous empirical \citep{carlini2021extracting, mireshghallah2022quantifying} and theoretical \citep{tan2022parameters} findings that larger models are less private.

However, we then show that this is not the end of the story between overparameterization and privacy. Remarkably, we discover that if proper \textit{regularization} is incorporated while increasing the number of parameters, the larger model can actually enjoy greater privacy (stronger protection from MI attacks) for the same generalization performance as its smaller counterpart. That is, there is a ``blessing of dimensionality,'' rather than a curse, and more overparameterized models can in some cases in fact be more private when paired with regularization. We show this behavior theoretically for logistic regression with ridge regularization and empirically for neural networks with early stopping.

This behavior is due to the fact that regularization induces its own privacy--utility trade-off: beyond a point, increasing regularization provide greater protection from MI attacks while decreasing generalization performance. However, the trade-off induced by regularization for a larger network traces a trajectory of lower MI vulnerability and better generalization performance than the trade-off for a smaller network. That is, larger networks have better regularization-wise privacy--utility trade-offs.

To demonstrate this effect theoretically, 
we must be able to precisely characterize the output distribution of a model on a fixed training data point over the randomness of all other training data.
We overcome this challenge by developing a novel leave-one-out analysis tool based on the convex Gaussian min-max theorem \citep{thrampoulidis2018cgmt,salehi_logistic_2019} that we apply to high-dimensional logistic regression in the asymptotic regime. 
We believe our theoretical tool may be of independent interest to other researchers pursuing theoretical studies of privacy for machine learning models.
In this work, we use our tool to provide a precise asymptotic characterization of MI for the optimal black-box MI attack. 

For the practitioner, our analysis encourages considering the number of parameters when designing privacy-preserving machine learning models. In particular, larger models may be more beneficial if they are carefully coupled with proper regularization. In summary, our paper has three core contributions:
\begin{enumerate}
    \item 
    We demonstrate how individually increasing either the number of parameters or decreasing the regularization of a classification model can \textbf{decrease its privacy.}
    
    \item 
    We discover multiple situations where wider NNs enjoy an improved regularization-induced privacy--utility trade-off compared to narrow ones, and that, controlling for the privacy level by regularization, \textbf{increased generalization performance due to overparameterization is not at odds with privacy.}
    
    \item We theoretically analyze high-dimensional logistic regression in the asymptotic regime and replicate our empirical NN observations for a bi-level feature ensemble using a \textbf{novel leave-one-out analysis that may be of independent interest.} Using this tool, we also derive the fundamental MI vulnerability for overparameterized logistic regression models.
\end{enumerate}

\paragraph{Related Work.}

This work contributes to the rapidly growing field of
membership inference (MI), a framework being increasingly used to study the privacy implications of machine learning models.
Previous works have shown how MI is in principle a task of hypothesis testing with the optimal adversary being the likelihood ratio test (LRT) \citep{sablayrolles2019white, carlini2021membership}.
We leverage this optimal LRT adversary in our theoretical analysis.
Since the distributions for the LRT are typically not known for general models such as neural networks, more practical attack strategies such as binary classification \citep{shokri2017membership, salem2018ml} and perturbation-based inference \citep{pmlr-v139-choquette-choo21a, kaya2020effectiveness} have been proposed.
We refer the reader to \citet{hu2021membership} for a comprehensive survey of MI attacks.
For our neural network experiments, we use the loss thresholding attack introduced by \citet{yeom2018} and improved by \citet{ye2021enhanced} due to its simplicity and effectiveness.

Prior work has also studied how various types of regularization affect MI attacks \citep{song2019membership, wang2020against, kaya2021does, galinkin2021influence, rezaei2021accuracy}.
There are limited studies on the effect of overparameterization on MI.
\citet{tan2022parameters} analyze how linear regression models are more susceptible to MI as they become more overparameterized, and \citet{carlini2021extracting,mireshghallah2022quantifying} empirically observe larger language models being more vulnerable to MI than their smaller counterparts.
\citet{yeom2018} study the theoretical connection between overfitting and membership advantage but do not connect this to (over)parameterization.

In addition to MI,
differential privacy (DP) is another popular framework used to study the privacy implications of machine learning algorithms \citep{dwork2008differential, abadi2016deep, ha2019differential}. 
Differentially private training algorithms ensure that models obtained when training on datasets differing in one data point do not differ much.
\citet{yu2021differentially,li2022largelanguage} show that larger models achieve better utility for the same DP amount when using fine-tuning, echoing our message that larger models can have better privacy--utility trade-offs than smaller ones.
By providing rigorous worst-case guarantees, DP also protects models from MI attacks \citep{yeom2018}, but typically at the cost of having very low utility \citep{rahman2018membership, jayaraman2019evaluating, cai2021costofprivacy}. Indeed, it has been shown that DP techniques provide poorer MI defense vs.\ utility trade-offs than other MI defense schemes \citep{kaya2020effectiveness, liu2021generalization}. 
Furthermore, while they provide powerful information-theoretic guarantees, it is not clear how the DP metrics of $(\epsilon, \delta)$ translate to vulnerability from real-world MI attacks.
As such, we believe both MI and DP analyses complement each other in providing a comprehensive understanding of privacy-preserving machine learning, and we focus on MI in this work.

\begin{figure*}[ht]
	\centering
	\includegraphics[width=\textwidth]{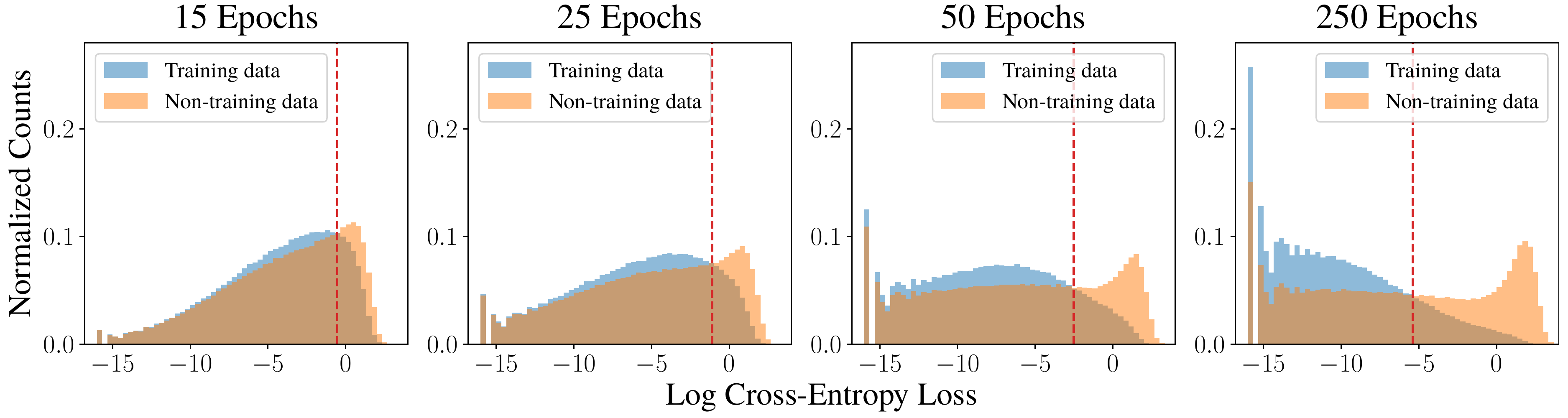}
	\caption{\textbf{Loss gap increases with epochs.} Empirical histograms of $\log$ cross-entropy losses for training and non-training data points of 20 ResNet18s ($w=64$) trained on CIFAR10 for different training epochs. While both distributions generally shift towards smaller losses with more epochs, losses for training points shift more quickly than those for non-training points, enabling loss-threshold attacks. The optimal threshold is depicted with a red dashed line. 
	This illustrates why loss threshold MI accuracy increases with epochs (Figure \ref{fig:epochwise}). For visualization purposes, we drop points that achieve 0 (to machine precision) loss.}
	\label{fig:losses}
\end{figure*}

Our work is strongly related to the ``double descent`` literature that studies the relationship of overparameterization and generalization error \citep{dar2021farewell}.
\citet{nakkiran2021deep} demonstrate double descent behavior in neural networks as a function of the number of parameters and number of training epochs.
To theoretically understand the trade-off between generalization error and an adversary's MI accuracy, we study the popular ``bi-level ensemble'' model that has been shown to exhibit benign overfitting in classification \citep{muthukumar2021classification, wang2021benign}. 
To characterize the difference of predictions on training points and test points, we leverage the proportional asymptotics regime, where precise analysis is enabled by tools such as the convex Gaussian min-max theorem \citep{thrampoulidis2018cgmt} and approximate message passing \citep{pmlr-v119-emami20a, gerbelot2020asymptotic}. 
In particular, we directly build upon \citet{salehi_logistic_2019} to analyze the behavior of logistic regression in the asymptotic regime.

\section{THEORETICAL FOUNDATIONS OF MEMBERSHIP INFERENCE}
\label{sec:theoretical}

We define our MI problem for classification as follows. Let $\mc{S} = ((\vx_i, y_i))_{i=1}^n$ be a training dataset of features $\vx_i \in \mc{X} \subseteq \reals^p$ and labels $y_i \in \mc{Y} = \{1, \ldots, k\}$ (i.e., multiclass classification). We assume that each data point and its associated label is an independent sample from a distribution $\mc{D}$ over the data such that $\mc{S} \sim \mc{D}^n$. Furthermore let $\mc{F}$ denote a class of machine learning models (e.g., linear models or neural networks) such that for $f \in \mc{F}$, $f: \mc{X} \rightarrow \reals^k$, producing a vector of confidence values from which the the final prediction is given as $\hat{y}(\vx) = \argmax_j [f(\vx)]_j$. For each pair $(\vx, y)$, we have access to a loss function $\ell: \mc{Y}\times \reals^k \rightarrow \reals_{\geq 0}$ that measures the performance of any $f \in \mc{F}$ on the data $\mc{S}$. 
The model's \textit{test (misclassification) error} is defined as $\error(f) = \Pr \paren{y \neq \hat{y}(\vx)}$, where $(\vx, y)$ is drawn from $\setD$ for our theoretical results or from the test set for our experiments.
Finally, let $A$ be a MI adversary.
For a fixed model $f \in \mc{F}$ trained on $\mc{S}$, we assume that $A: \mc{F}\times \setX \times \setY \rightarrow \{0, 1\}$ 
has access to $f$ and a sample $(\vx, y)$ and predicts $1$ if it believes $(\vx, y)\in \mc{S}$ and $0$ otherwise. 
To be rigorous,
we define MI as the following experiment \citep{yeom2018}.

\begin{experiment}\label{MI_experiment}
Given distribution $\mc{D}$, model class $\mc{F}$, loss function $\ell$, and adversary $A$, a membership inference experiment consists of the following:
\begin{enumerate}[topsep=-1pt,itemsep=-1pt]
    \item Sample $\mc{S} \sim \mc{D}^n$.
    \item Learn $\hat{f} \in \arg\min_{f\in \mc{F}} \sum_{i=1}^n \ell(y_i, f(\vx_i))$.
    \item Sample $m \in \{0, 1\}$ uniformly at random.
    \item If $m = 0$, sample a new test data point $(\vx, y) \sim \mc{D}$. If $m = 1$, sample a training data point $(\vx, y) \in \mc{S}$ uniformly at random.
    \item Observe the adversary's prediction $A(\hat{f}, \vx, y)$.
\end{enumerate}
\end{experiment}
In essence, Experiment~\ref{MI_experiment} reduces the problem of MI to one of hypothesis testing. Accordingly, we quantify the performance of an adversary in terms of its \emph{membership (inference) advantage}, defined as the difference between the adversary's true positive rate and the false positive rate.

\begin{definition}[\citealp{yeom2018}]
The membership advantage of an adversary $A$ against $\hat{f}$ is 
\begin{multline}
   \adv(A) = \Pr(A(\hat{f}, \vx, y) = 1 \mid m = 1) \\
   - \Pr(A(\hat{f}, \vx, y) = 1\mid m = 0), 
\end{multline}
where $\Pr(\cdot)$ is taken jointly over all randomness in Experiment~\ref{MI_experiment}.
\end{definition}

Membership inference can be performed successfully when the model treats points from the training dataset $\mc{S}$ ``differently'' than new test points. 
For instance, if the distribution 
of the model's output on a data point $(\vx, y)$ differs significantly when $(\vx, y)$ is a training point vs.\ when it is not, then MI attacks can distinguish between the two distributions to determine if $m=0$ or $m=1$.
Indeed, we observe in Figure \ref{fig:losses} that as a model trains on data, its loss on those data points decreases at a rate faster than its loss on non-training data points.
Then, even an attack as simple as thresholding the loss $\left(A(f, \vx, y) = \ind \set{\ell(y, f(\vx)) < \tau}\right)$ \citep{yeom2018, sablayrolles2019white} can successfully perform MI.

In this work, we consider single-query \emph{black-box adversaries}, which only have access to the data point $(\vx, y)$ and the model's output $\hat{f}(\vx)$ rather than the whole model. In this setting, the optimal attack is known to be the likelihood ratio test (LRT) \citep{sablayrolles2019white, carlini2021membership}: 

\begin{proposition}[\citealp{tan2022parameters}]
\label{proposition:optimal_adversary}
The adversary that maximizes membership advantage is:
\begin{align*}
    A^*(&\vx_0, \hat{f}(\vx_0)) \\&= \begin{cases} 
        1 &\mbox{if } P(\widehat{y_0} \mid m=1, \vx_0) > P(\widehat{y_0} \mid m=0, \vx_0),\\
        0 &\mbox{otherwise},
    \end{cases}
\end{align*}
where $\widehat{y_0} = \hat{f}(\vx_0)$ and $P$ denotes the distribution function for $\widehat{y_0}$ over the randomness in the membership inference experiment conditioned on $\vx_0$.
\end{proposition}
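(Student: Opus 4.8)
The plan is to treat this as a pointwise optimization over (randomized) decision rules, in the spirit of the Neyman--Pearson lemma. First, observe that a single-query black-box adversary is without loss of generality a measurable map $\phi \colon \setX \times \reals^k \to [0,1]$, where $\phi(\vx_0,\widehat{y_0})$ is the probability that $A$ outputs $1$ on input $(\vx_0,\widehat{y_0})$; deterministic rules are the special case $\phi \in \{0,1\}$, and allowing randomization can never decrease the advantage. Then
\[
  \adv(A) = \mathbb{E}\!\left[\phi(\vx_0,\widehat{y_0}) \mid m = 1\right] - \mathbb{E}\!\left[\phi(\vx_0,\widehat{y_0}) \mid m = 0\right],
\]
with all expectations taken over the randomness of Experiment~\ref{MI_experiment}.

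Next, I would condition on $\vx_0$. The key point is that the marginal law of $\vx_0$ is identical under $m=0$ and $m=1$: when $m=1$ we draw a uniformly random element of $\mc{S}\sim\mc{D}^n$, whose marginal is exactly $\mc{D}$, matching the law of $\vx_0$ when $m=0$. Writing $P(\cdot \mid m,\vx_0)$ for the (regular) conditional law of $\widehat{y_0}=\hat f(\vx_0)$ given $m$ and $\vx_0$ --- i.e.\ marginalizing over the remaining training points and any algorithmic randomness --- we may therefore pull the outer expectation over $\vx_0\sim\mc{D}$ out front:
\[
  \adv(A) = \mathbb{E}_{\vx_0\sim\mc{D}}\!\left[\int \phi(\vx_0,\widehat{y_0})\,\big(dP(\widehat{y_0}\mid m=1,\vx_0) - dP(\widehat{y_0}\mid m=0,\vx_0)\big)\right].
\]

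Finally, maximize the inner integral pointwise in $\vx_0$. With respect to any common dominating measure for the two conditional laws, the integrand is maximized by setting $\phi(\vx_0,\widehat{y_0}) = 1$ on the set where the density of $\widehat{y_0}$ under $m=1$ strictly exceeds that under $m=0$, and $\phi = 0$ elsewhere (the tie set contributes nothing, so the choice there is immaterial). This rule is measurable, is exactly $A^*$ as stated, and maximizes $\adv(A)$ because it maximizes the integrand for every $\vx_0$. I expect the only real work to be measure-theoretic bookkeeping rather than any new idea: justifying the existence of the regular conditional distributions $P(\widehat{y_0}\mid m,\vx_0)$, confirming that the two hypotheses genuinely induce the same marginal on $\vx_0$ (so that no density ratio in $\vx_0$ survives), and invoking the Neyman--Pearson inequality $\int\phi\,(p_1 - p_0) \le \int \ind\set{p_1 > p_0}\,(p_1 - p_0)$, which holds pointwise. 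If one is content to assume $\widehat{y_0}$ admits a density, this last inequality is immediate and the dominating-measure language can be dropped entirely.
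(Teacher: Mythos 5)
Your argument is correct. Note that the paper itself does not prove this proposition---it is imported verbatim from \citet{tan2022parameters}---so there is no in-paper proof to compare against; but your route (reduce to randomized rules, observe that the marginal law of $\vx_0$ is identical under $m=0$ and $m=1$ so the comparison localizes to the conditional laws of $\widehat{y_0}$ given $\vx_0$, then apply the pointwise Neyman--Pearson inequality) is exactly the standard derivation used in that reference, and the measure-theoretic caveats you flag are the only real content beyond the one-line inequality.
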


That is, given $(\vx, y)$ and $\hat{f}(\vx)$, the LRT adversary outputs 1 if the likelihood of the model outputting $\hat{f}(\vx)$ is higher if $(\vx, y)$ was a training point than if it was not a training point.

\subsection{Analysis Framework and Core Theoretical Result}

In this work, we theoretically analyze the role of parameters and regularization for MI against a regularized high-dimensional logistic regression model.
We define the logistic loss $\ell(y, z) = \rho(z) - yz$ in terms of the function $\rho(z) = \log(1 + \exp(z))$ whose derivative $\rho'(z) = 1 / (1 + \exp(-z))$ is the sigmoid function.
We let $\vx_i \sim \normal(\vzero, \frac{1}{p} \mSigma)$ for some positive definite covariance matrix $\mSigma \in \reals^{p \times p}$, and for ground truth coefficients $\vbeta^* \in \reals^p$, binary labels $y_i \in \set{0,1}$ are generated such that $\Pr(y_i = 1 | \vx_i) = \rho'(\vx_i^\transp \vbeta^*)$. Our learned decision function is $\hat{f}(\vx) = \vx^\transp \widehat{\vbeta}$, yielding predictions $\hat{y}(\vx) = \ind \{\hat{f}(\vx) > 0\}$, where
\begin{align}
    \label{eq:logistic-ridge}
    \widehat{\vbeta} = \argmin_\vbeta \frac{1}{n} \sum_{i=1}^n \ell(y_i, \vx_i^\transp \vbeta) + \frac{\lambda}{2p} \norm[2]{\vbeta}^2.
\end{align}
We study the accuracy of the LRT adversary in the asymptotic limit as $n, p \to \infty$ with $n/p \to \delta \in (0, \infty)$.
Being the optimal adversary, the LRT attack provides upper bounds on the membership advantage across single-query black-box adversaries (Proposition \ref{proposition:optimal_adversary}).
The asymptotic setting enables us to apply the analysis of \citet{salehi_logistic_2019}, who used the convex Gaussian min-max theorem (CGMT) \citep{thrampoulidis2018cgmt} to completely characterize the generalization performance of logistic regression in terms of the solution to a nonlinear system of equations of a few scalar variables; see Appendix~\ref{app:salehi-logistic} for details.

As observed in Proposition \ref{proposition:optimal_adversary}, analyzing the LRT requires a characterization of the distribution of model outputs for both training and test points. It is typically easy to characterize the test point output distribution because of the statistical independence between the model and the test point.  However, the distribution for the model's output on training points is much more difficult because the training procedure adds statistical dependence between the model and the training point that is nontrivial to address. Existing analyses from frameworks such as the CGMT are insufficient to give us the distributions of the outputs for a \emph{single} training point over the randomness of the remaining training dataset.

To address this, we provide a novel leave-one-out-based characterization of the distribution of the output of a linear model for any specific training point. We first recall the definition of the proximal operator, and we then provide the informal statement of our characterization with a more detailed version in Appendix~\ref{app:leave-one-out}.

\begin{definition}[Proximal operator]
The \emph{proximal operator} of a function $\Omega \colon \reals^p \to \reals$ is defined as
\begin{align}
    \prox{\Omega}{\vv} = \argmin_{\vw \in \reals^p} \Omega(\vw) + \frac{1}{2}\norm[2]{\vw - \vv}^2.
\end{align}
\end{definition}

\begin{theorem}[Informal version of Theorem~\ref{thm:leave-one-out-ridge}]
\label{thm:leave-one-out-ridge-informal}
Consider the solution $\widehat{\vbeta}$ to the optimization problem in \eqref{eq:logistic-ridge}. There exists $\gamma > 0$ such that in the limit as $n, p \to \infty$ with $n/p \to \delta \in (0, \infty)$, for any training point $\vx_i$,
\begin{align}
    \vx_i^\transp \widehat{\vbeta} \dconv \prox{\gamma \ell(y_i, \cdot)}{\vx_i^\transp \widehat{\vbeta}_{-i}},
\end{align}
where $\widehat{\vbeta}_{-i}$ is the solution to \eqref{eq:logistic-ridge} with $(\vx_i, y_i)$ omitted from the training set, and $\dconv$ denotes convergence in distribution where the randomness is over the other $n-1$ training points.
\end{theorem}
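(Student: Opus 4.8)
The plan is to combine a leave-one-out perturbation argument with the CGMT-based asymptotic characterization of \citet{salehi_logistic_2019}. First I would write the first-order optimality conditions. Since $\ell(y,z) = \rho(z) - yz$ is smooth and becomes strongly convex after adding the ridge term, $\widehat{\vbeta}$ is the unique solution of $\frac{1}{n}\sum_{j=1}^n \vx_j\bigl(\rho'(\vx_j^\transp\widehat{\vbeta}) - y_j\bigr) + \frac{\lambda}{p}\widehat{\vbeta} = \vzero$, and $\widehat{\vbeta}_{-i}$ solves the same equation with the $j=i$ term removed. The ridge term also keeps both minimizers in a bounded region and yields a uniform lower bound on the empirical Hessian; the analysis of \citet{salehi_logistic_2019} makes the relevant boundedness quantitative.

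Next I would linearize. Subtracting the two optimality conditions and Taylor-expanding $\rho'$ around $\widehat{\vbeta}_{-i}$ gives
\begin{align}
    \widehat{\vbeta} - \widehat{\vbeta}_{-i} = -\frac{1}{n}\,\mathbf{H}_{-i}^{-1}\vx_i\bigl(\rho'(\vx_i^\transp\widehat{\vbeta}) - y_i\bigr) + \mathbf{e},
\end{align}
where $\mathbf{H}_{-i} = \frac{1}{n}\sum_{j\neq i}\rho''(\vx_j^\transp\widehat{\vbeta}_{-i})\,\vx_j\vx_j^\transp + \frac{\lambda}{p}\mathbf{I}$ is the leave-one-out Hessian and $\mathbf{e}$ is a higher-order remainder. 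The a priori bounds force $\widehat{\vbeta} - \widehat{\vbeta}_{-i}$ to be small relative to $\widehat{\vbeta}$, so that $\mathbf{e}$ is negligible once projected onto $\vx_i$. Writing $u = \vx_i^\transp\widehat{\vbeta}$ and $v = \vx_i^\transp\widehat{\vbeta}_{-i}$, projection yields $u = v - \bigl(\frac{1}{n}\vx_i^\transp\mathbf{H}_{-i}^{-1}\vx_i\bigr)\bigl(\rho'(u) - y_i\bigr) + o(1)$.

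The crucial step is that $\vx_i \sim \normal(\vzero, \frac{1}{p}\mSigma)$ is independent of the other $n-1$ data points, hence of $\mathbf{H}_{-i}$. A Hanson--Wright argument then gives $\frac{1}{n}\vx_i^\transp\mathbf{H}_{-i}^{-1}\vx_i = \frac{1}{np}\mathrm{tr}(\mSigma\,\mathbf{H}_{-i}^{-1}) + o(1)$, and the CGMT fixed-point system of \citet{salehi_logistic_2019}, which controls exactly such resolvent trace functionals, shows $\frac{1}{np}\mathrm{tr}(\mSigma\,\mathbf{H}_{-i}^{-1}) \to \gamma$ for a deterministic $\gamma$; positivity $\gamma > 0$ is immediate since $\rho'' > 0$ and $\lambda > 0$ make $\mathbf{H}_{-i}$ positive definite. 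Hence $u + \gamma\bigl(\rho'(u) - y_i\bigr) = v + o(1)$. But $u + \gamma(\rho'(u) - y_i) = v$ is precisely the stationarity condition of $\min_w \gamma\ell(y_i, w) + \frac{1}{2}(w - v)^2$, so by $1$-Lipschitz continuity of the proximal operator, $\vx_i^\transp\widehat{\vbeta} = \prox{\gamma\ell(y_i,\cdot)}{\vx_i^\transp\widehat{\vbeta}_{-i}} + o(1)$, which is the claimed convergence in distribution (applying the CGMT test-point characterization with $\vx_i$ playing the role of a fresh test point also shows that the right-hand side itself converges in distribution over the randomness of the remaining points).

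I expect the main obstacle to be the rigorous control of the remainder $\mathbf{e}$ and the various $o(1)$ terms. Although $\mathbf{H}_{-i}$ is independent of $\vx_i$, it is correlated with each $\vx_j$ for $j \neq i$, so the second-order Taylor terms of the form $\sum_{j\neq i}\vx_j\bigl(\vx_j^\transp(\widehat{\vbeta} - \widehat{\vbeta}_{-i})\bigr)^2$ must be shown to cancel rather than accumulate; this requires the delicate decoupling and concentration estimates typical of leave-one-out analyses (in the style of El Karoui), bootstrapped from the boundedness supplied by the CGMT. A secondary technical point is confirming that $\frac{1}{np}\mathrm{tr}(\mSigma\,\mathbf{H}_{-i}^{-1})$ concentrates on the value predicted by the fixed-point equations tightly enough to be merged with the other approximations.
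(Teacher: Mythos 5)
Your proposal is a correct outline but follows a genuinely different route from the paper. The paper does not perform an explicit perturbation expansion at all: it absorbs the $i$-th loss term into the regularizer, defining $\overline{\Omega}_i(\widetilde{\vbeta}) = \widetilde{\Omega}(\widetilde{\vbeta}) + \frac{1}{\lambda\delta}\ell(y_i, \widetilde{\vx}_i^\transp\widetilde{\vbeta})$, applies the CGMT master theorem (Theorem~\ref{thm:salehi-logistic}) to this modified problem, and reads off the first-order optimality condition of $\prox{t\overline{\Omega}_i}{\cdot}$, which is exactly the implicit equation $u + \gamma\,\ell'(y_i,u) = v$ that you derive. The coefficient $\bar{\gamma}$ then falls directly out of the second fixed-point equation of \eqref{eq:salehi-fixed-point} via the quadratic form $\frac{1}{p}\vg^\transp\mSigma\inv{\mSigma + t\mI_p}\vg$, and the argument that the leave-one-out modification does not shift the fixed point handles the "is $\widehat{\vbeta}_{-i}$ governed by the same constants" question in one stroke. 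What the paper's route buys is that all of the delicate concentration is delegated to an already-proven theorem; what your route buys is independence from the CGMT (it is the classical El Karoui/Bean-et-al.\ M-estimation argument) and a transparent interpretation of $\gamma$ as the limiting resolvent trace $\frac{1}{np}\tr\bracket{\mSigma\mathbf{H}_{-i}^{-1}}$ of the leave-one-out Hessian.

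Two points in your sketch would need real work to close. First, the remainder $\mathbf{e}$: you correctly flag that the second-order terms $\frac{1}{n}\sum_{j\neq i}\vx_j\bigl(\vx_j^\transp(\widehat{\vbeta}-\widehat{\vbeta}_{-i})\bigr)^2\rho'''(\cdot)$ must be controlled, and in the proportional regime $\smallnorm[2]{\widehat{\vbeta}-\widehat{\vbeta}_{-i}}$ is only $O(1/\sqrt{n})$ in aggregate while individual inner products $\vx_j^\transp(\widehat{\vbeta}-\widehat{\vbeta}_{-i})$ need not be uniformly small; this is the hard half of any leave-one-out proof and is left entirely open in your proposal. Second, your identification of $\gamma$ is not a direct consequence of the cited CGMT result: Theorem~\ref{thm:salehi-logistic} characterizes separable functionals $\frac{1}{p}\sum_j\Psi(\hat{\beta}_j,\beta_j^*)$ of the estimator, not resolvent traces of the data-dependent Hessian $\mathbf{H}_{-i}$ evaluated at the solution, so the claim that the fixed-point system "controls exactly such trace functionals" requires an additional argument (it is true, but it is the content of a separate derivation, essentially the one the paper performs when it shows $\frac{\sigma\tau}{\delta p}\vg^\transp\mSigma\inv{\mSigma+\lambda\sigma\tau\mI_p}\vg = \gamma$). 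Neither issue is a wrong turn, but both mean your proof is a program rather than a proof, whereas the paper's reparameterization trick closes the argument with no new concentration estimates.
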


That is, the distribution of the model output for a training point is simply the distribution of the proximal operator of the loss function applied to the output of the training point as if it was a new test point. In essence, our theorem allows one to extend the ease of analyzing test points into the analysis of training points. Note also that the theorem shows how the model's loss for training points is driven closer to zero than for new test points, allowing an adversary to exploit this difference to perform MI as discussed above. We illustrate the strong match between the characterization in Theorem~\ref{thm:leave-one-out-ridge-informal} and the empirically obtained histograms for the output of a logistic regression model for practically sized problems in Figure~\ref{fig:output-histograms}.

\begin{figure*}[ht]
    \centering
    \includegraphics[width=\textwidth]{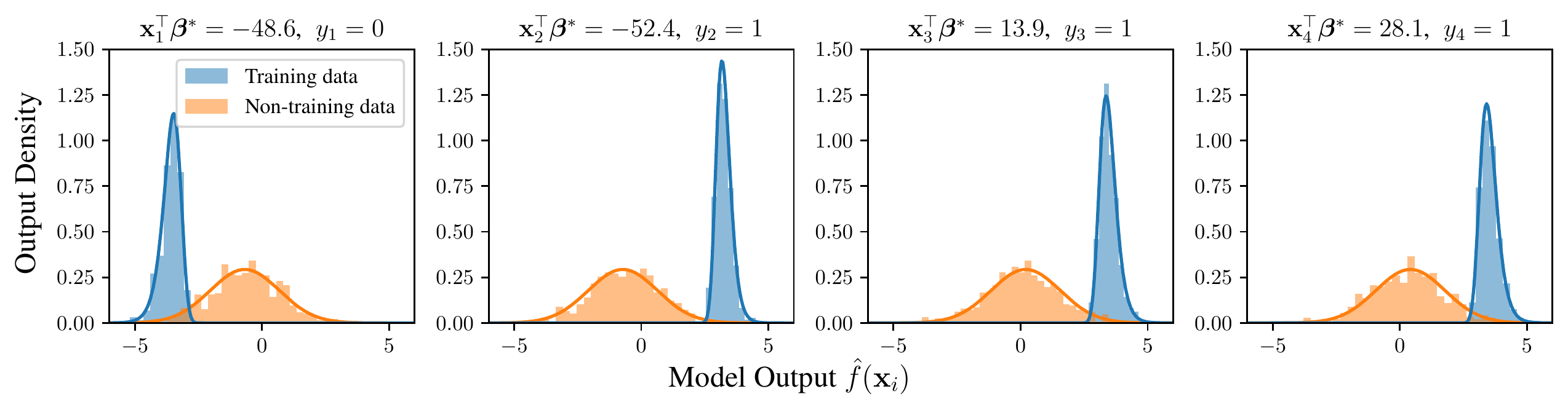}
    \vspace{-0.75cm}
    \caption{\textbf{Theoretical distributions match empirical observations.} We plot theoretical densities (solid line) according to Theorem~\ref{thm:leave-one-out-ridge-informal} (see details in Appendix~\ref{app:membership-advantage}) versus empirical histograms of a logistic regression model's output on a given sample $\vx_i$ when it is a test or a training point for four different fixed points ($i \in \set{1, 2, 3, 4}$) and fixed $\vbeta^*$ drawn from a bi-level ensemble with $n/d = 1/2$, $\phi = 3$, $\sigma_\beta = 50$, $\lambda = 0.1$, $\eta = 1$. Empirical histograms are outputs over 500 trials where $n=500$ additional random training points are used to train a logistic regression model on bi-level ensemble features either including or not including $(\vx_i, y_i)$. As we can see, the training point outputs are pulled toward the training labels.}
    \label{fig:output-histograms}
\end{figure*}

We strongly believe this theoretical tool to be of independent interest, opening the door to future theoretical study of privacy in high dimensional linear models, in particular with sharp asymptotics for any given adversary rather than simply worst-case bounds. Our proof strategy is general and applies to general convex losses and regularization penalties, as we describe in Appendix~\ref{app:leave-one-out}. A particularly exciting open question for future work is determining what types of losses, regularization, and feature distributions can lead to a small $\gamma$ such that the resulting model is the most private.

\subsection{A Bi-level Feature Ensemble} 

In order to study the trade-off between accuracy and privacy as a function of overparameterization in machine learning models, we need a setting in which benign overfitting occurs---that is, that as we increase the number of parameters of our model, generalization accuracy increases as well. To that end, we define a bi-level feature ensemble similar to that considered by \citet{muthukumar2021classification, wang2021benign}. 
In this model, we define $\mSigma$ and $\vbeta^*$ for some $d < p$ and $\eta > 0$ as
\begin{align}
    \label{eq:bilevel-model}
    \begin{aligned}
    [\mSigma]_{k, k'}^2 &= \begin{cases}
        \frac{p}{d} &\text{if } 1 \leq k = k' \leq d, \\
        \frac{\eta p}{p-d} & \text{if } d < k = k' \leq p, \\
        0 & \text{ if } k \neq k',
    \end{cases}
    \\
    \beta_k^* &\sim
    \begin{cases}
        \normal(0, \sigma_\beta^2) &\text{if } 1 \leq k \leq d, \\
        0 & \text{if } d < k \leq p.
    \end{cases}
    \end{aligned}
\end{align}
In this way, there is always a total variance of $1$ in the first $d$ features and of $\eta$ in the tail of $p-d$ features. As $\phi = p / d \to \infty$, this model is known to exhibit benign overfitting \citep{wang2021benign}.

The intuition behind this feature model is that the signal $\vbeta^*$ is fundamentally low dimensional and is aligned with a small subset of $d$ highly representative features. Meanwhile, there are an abundance of nuisance features of very small magnitude that are uncorrelated with the signal, such that they can absorb label noise \citep{bartlett2020benign} without adversely affecting prediction on new examples with uncorrelated nuisance features. In this way, training points can achieve perfect accuracy even under noise while the model still generalizes well. Furthermore, nonlinearities like those used in neural networks are known to add a similar low-magnitude tail of nonzero eigenvalues to the feature covariance in their Gaussian equivalents~\citep{pennington2017nonlinear,mei2022generalization}, connecting this feature model with realistic models like neural networks.

\subsection{Asymptotic Privacy and Utility}

Given the framework of the CGMT, we can easily determine the asymptotic generalization error for logistic regression~\citep{salehi_logistic_2019}. Thanks to Theorem~\ref{thm:leave-one-out-ridge-informal}, we can also determine the MI advantage given an adversary $A$. 
The following corollary captures these results, specializing the MI advantage to that of the worst-case optimal LRT adversary.

\begin{corollary}
\label{cor:error-and-mi}
Consider the bi-level feature ensemble in \eqref{eq:bilevel-model} and the decision function $\hat{f}(\vx) = \vx^\transp \widehat{\vbeta}$ for $\widehat{\vbeta}$ solving \eqref{eq:logistic-ridge}.
Then there exist $\alpha, \gamma, \sigma > 0$ such that, in the limit as $p \to \infty$ with $n/p \to \delta \in (0, \infty)$,
\begin{enumerate}[label=(\roman*)]
    \item \emph{Generalization error.} The misclassification error for a new test pair $(\vx, y)$ is given by 
    \begin{align}
        \error(\hat{f}) = \expect{\rho'(Z) \Phi(-\tfrac{\alpha Z}{\sigma})},
    \end{align}
    where $\Phi$ is the standard normal CDF and $Z \sim \normal(0, \sigma_\beta^2)$;
    \item \emph{Membership advantage.}
    For any training pair $(\vx_i, y_i)$, the membership advantage of the optimal adversary is given by
    \begin{multline}
        \max_{A} \adv(A,\hat{f}; \vx_i, y_i) \\
        = \tfrac{1}{\sigma} \int_\reals \max \big\{
        \Phi' \big( 
        \tfrac{z - \alpha \vx_i^\transp \vbeta^* + \gamma(\rho'(z) - y_i) }{\sigma} 
        \big) 
        (1 + \gamma \rho''(z)) \\
        - \Phi' \big( 
        \tfrac{z - \alpha \vx_i^\transp \vbeta^*}{\sigma} 
        \big) 
        , 0\big\} dz,
    \end{multline}
    where $\Phi'$ is the standard normal PDF.
\end{enumerate}
\end{corollary}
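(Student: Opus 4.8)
The plan is to obtain both formulas by combining two inputs through an elementary computation: the sharp asymptotic characterization of regularized logistic regression of \citet{salehi_logistic_2019} (via the CGMT of \citet{thrampoulidis2018cgmt}) and our leave-one-out identity, Theorem~\ref{thm:leave-one-out-ridge-informal}. First I would specialize the CGMT fixed-point system of \citet{salehi_logistic_2019} to the bi-level covariance and signal of \eqref{eq:bilevel-model}; this yields scalars $\alpha, \sigma, \gamma > 0$, where $\gamma$ is precisely the proximal scaling of Theorem~\ref{thm:leave-one-out-ridge-informal} and where, conditionally on $\vx^\transp\vbeta^*$, a fresh test point's output satisfies $\vx^\transp\widehat{\vbeta} \dconv \normal(\alpha\,\vx^\transp\vbeta^*, \sigma^2)$, i.e.\ has asymptotic density $p_0(z) = \tfrac1\sigma \Phi'\!\big(\tfrac{z - \alpha \vx^\transp\vbeta^*}{\sigma}\big)$. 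I would also record that, for the bi-level model, $\tfrac1p (\vbeta^*)^\transp\mSigma\vbeta^* = \tfrac1d\sum_{k \le d}(\beta_k^*)^2 \to \sigma_\beta^2$ by the law of large numbers, so $\vx^\transp\vbeta^* \dconv Z \sim \normal(0,\sigma_\beta^2)$.

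For part (i) I would condition on $Z = \vx^\transp\vbeta^*$: the label is Bernoulli with success probability $\rho'(Z)$ and, asymptotically independently of it, $\hat{y}(\vx) = \ind\{\vx^\transp\widehat{\vbeta} > 0\}$ equals $1$ with probability $\to \Phi(\alpha Z/\sigma)$ and $0$ with probability $\to \Phi(-\alpha Z/\sigma)$ (the test point being independent of the training data). Expanding the misclassification event $\{y \ne \hat{y}(\vx)\}$ into $\{y=1,\hat{y}=0\}$ and $\{y=0,\hat{y}=1\}$, taking expectations, and then using the symmetry of $Z$ together with $\rho'(-z) = 1-\rho'(z)$ and $\Phi(-t) = 1-\Phi(t)$ (which makes the two contributions equal) yields $\error(\hat{f})$ in the stated form.

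For part (ii) I would apply Theorem~\ref{thm:leave-one-out-ridge-informal}: for a training pair $(\vx_i, y_i)$ one has $\vx_i^\transp\widehat{\vbeta} \dconv \prox{\gamma\ell(y_i,\cdot)}{\vx_i^\transp\widehat{\vbeta}_{-i}}$, with randomness over the other $n-1$ points. Because $\widehat{\vbeta}_{-i}$ is independent of $(\vx_i, y_i)$, the argument $\vx_i^\transp\widehat{\vbeta}_{-i}$ obeys the fresh-test-point law from the first step, $\normal(\alpha\,\vx_i^\transp\vbeta^*, \sigma^2)$ with density $p_0$. Next I would read the proximal operator off its stationarity condition: since $\prox{\gamma\ell(y_i,\cdot)}{v} = \argmin_z \gamma(\rho(z) - y_i z) + \tfrac12(z-v)^2$ is characterized by $v = h(z) := z + \gamma(\rho'(z) - y_i)$, and $h$ is a strictly increasing bijection of $\reals$ with $h'(z) = 1 + \gamma\rho''(z) > 0$, the training-point output is the pushforward of $p_0$ through $h^{-1}$; the change-of-variables formula then gives its density $p_1(z) = \tfrac1\sigma\Phi'\!\big(\tfrac{z - \alpha\vx_i^\transp\vbeta^* + \gamma(\rho'(z) - y_i)}{\sigma}\big)\,(1 + \gamma\rho''(z))$. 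Finally I would invoke Proposition~\ref{proposition:optimal_adversary}: the optimal single-query black-box adversary thresholds the likelihood ratio $p_1/p_0$ at $1$, so its membership advantage equals the one-sided total variation $\int_{\{p_1 > p_0\}}(p_1 - p_0)\,dz = \int_\reals \max\{p_1(z) - p_0(z), 0\}\,dz$; substituting $p_1$ and $p_0$ and pulling $\tfrac1\sigma$ outside the integral reproduces the claimed expression.

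The step I expect to be the main obstacle is making rigorous the claim that $\vx_i^\transp\widehat{\vbeta}_{-i}$ is governed by exactly the same scalars $(\alpha,\sigma)$ as the full-data test-point output — i.e.\ that deleting one of $n$ training points is asymptotically negligible — and, relatedly, that conditioning on $\vx_i^\transp\vbeta^*$ leaves an asymptotically independent Gaussian residual; these are precisely the subtleties handled by the detailed version of Theorem~\ref{thm:leave-one-out-ridge-informal} in Appendix~\ref{app:leave-one-out}. A second, milder point is justifying the interchange of the distributional limit with the functional $\int \max\{p_1 - p_0, 0\}$ defining the advantage (a continuous-mapping plus uniform-integrability argument). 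Everything else — the Gaussian tail integrals in part (i), the prox computation and change of variables in part (ii), and the Neyman--Pearson characterization of the optimal adversary already recorded in Proposition~\ref{proposition:optimal_adversary} — is routine given the results already stated.
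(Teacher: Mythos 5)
Your proposal is correct and follows essentially the same route as the paper's own proof: part (i) via the CGMT scalars and the Gaussian law of the test-point output (with $\kappa^2 \to \sigma_\beta^2$), and part (ii) via the leave-one-out proximal characterization, a change of variables through the inverse map $z \mapsto z + \gamma(\rho'(z)-y_i)$ with Jacobian $1+\gamma\rho''(z)$, and the Neyman--Pearson identity expressing the optimal advantage as $\int \max\{\mu_\mathrm{train}-\mu_\mathrm{test},0\}\,dz$. The leave-one-out subtlety you flag (that dropping one point does not perturb the fixed-point scalars) is exactly what the formal Theorem~\ref{thm:leave-one-out-ridge} establishes, so nothing further is needed.
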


\begin{figure*}[t]
	\centering
	\includegraphics[width=\textwidth]{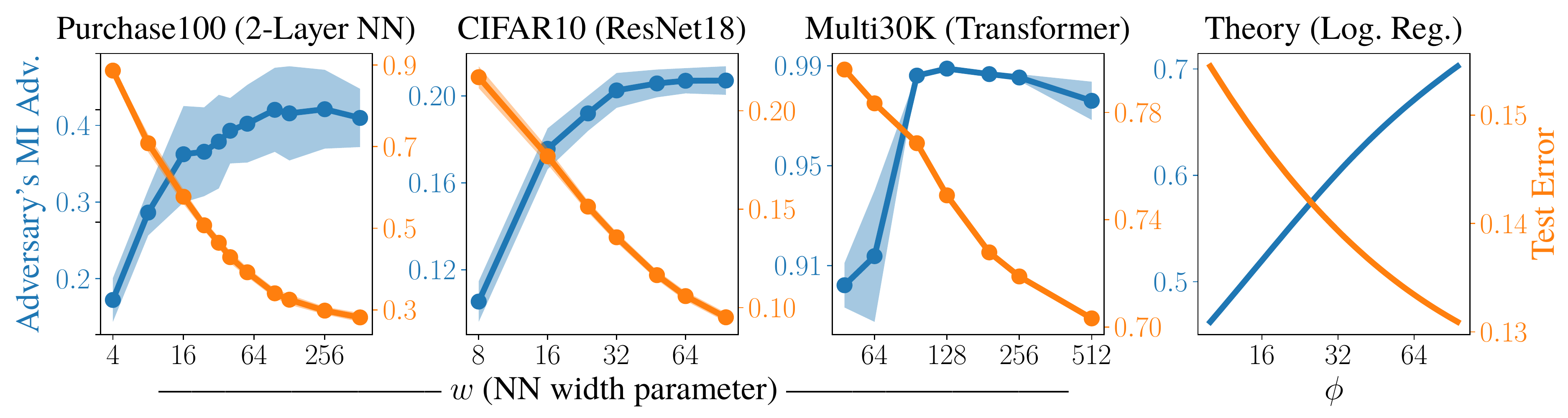}
	\caption{\textbf{Privacy vs.\ parameters.} For NNs trained to optimal early stopping with respect to validation error, we show cases where increasing the network's width generally increases MI advantage on the network even as test error decreases. We see a similar effect for logistic regression with the bi-level ensemble theoretically when $\lambda$ is tuned to minimize test error.}
	\label{fig:MI_vs_params_nn}
\end{figure*}

It is not possible to determine closed-form expressions for $(\alpha, \gamma, \sigma)$ in terms of the parameters $(\lambda, \delta, \phi, \eta, \sigma_\beta)$ of the regularized bi-level feature ensemble estimator in general, as the former are the solutions to a system of nonlinear equations (see Theorem~\ref{thm:salehi-logistic} in Appendix~\ref{app:salehi-logistic}). 
This makes direct theoretical analysis of the privacy--utility trade-offs difficult.

However, we can obtain the values $(\alpha, \gamma, \sigma)$ by solving the nonlinear system numerically.\footnote{In addition to describing this procedure in Appendix~\ref{app:salehi-logistic}, we also provide our code at \url{https://github.com/tanjasper/benign_overparam_MI}.} In the next sections, when we plot theoretical trade-off curves for logistic regression, we solve the nonlinear system and then evaluate the above expressions using numerical integration, reporting the average sample-specific membership advantage $\E_{(\vx_i, y_i) \sim \setD}[\max_A \adv(A, \hat{f}; \vx_i, y_i)]$.  We refer the reader to Appendix~\ref{app:membership-advantage} for proof details for Corollary~\ref{cor:error-and-mi}, where we also derive the expressions for the density functions for the bi-level feature ensemble that we plot in Figure~\ref{fig:output-histograms}.
\section{INDIVIDUAL PRIVACY--UTILITY TRADE-OFFS}

We now present multiple scenarios that demonstrate privacy--utility trade-offs as a function of either the number of model parameters or the amount of regularization, individually. Specifically, when either increasing the number of parameters or decreasing the amount of regularization from an over-regularized state, the resulting machine learning model becomes more accurate (improved generalization performance) but becomes less private (higher adversary MI advantage). 
The increase in accuracy with overparameterization has been discussed in detail in the double descent literature \citep{belkin2019reconciling, nakkiran2021deep, dar2021farewell}.
The decrease of MI privacy with overparameterization has been observed for linear regression models by \cite{tan2022parameters}, but we show that the phenomenon is robust, extending to classification models and even highly nonlinear models such as deep NNs.
We show parameter-wise and regularization-wise tradeoffs experimentally on various machine learning tasks and provide some theoretical insights to their origins.
Experimental details not in the main text can be found in Appendix~\ref{sec:experimental_setup}.
Shaded areas in NN plots indicate one standard deviation over repeated trials.

\subsection{Parameter-Wise Privacy--Utility Trade-Off}
\label{sec:params_tradeoff_nn}

In Figure~\ref{fig:MI_vs_params_nn}, we consider a variety of neural networks and plot both the adversary's membership advantage and the NN's test error as a function of the NN's width (number of parameters). We observe how MI increases (thus damaging privacy) while test error decreases (yielding a more accurate model) as the number of parameters grows. Here, we consider NNs that are trained with optimal (with respect to validation error) early stopping: we stop training at the number of training epochs that maximizes validation accuracy. We consider three machine learning tasks: feature vector classification on the Purchase100 dataset \citep{shokri2017membership} using a 2-layer NN, image classification on CIFAR10 \citep{krizhevsky2009learning} using the ResNet18 architecture \citep{he2016deep}, and language translation on the Multi30K dataset \citep{elliott2016multi30k} using the Transformer architecture \citep{vaswani2017attention}. We control the number of parameters of the networks by scaling the size of the hidden dimensions by a width parameter $w$. 
The MI attack we employ is the sample-specific loss threshold attack (``attack R'' of \citealp{ye2021enhanced}): $A(f, \vx, y) = \ind \set{\ell(y, f(\vx)) < \tau(\vx, y)}$, where $\tau(\vx, y)$ is a sample-specific threshold learned for each data point over reference/shadow models.
We also include similar experiments demonstrating the same phenomenon for support vector machines (SVMs) in Appendix \ref{sec:svm}.

\begin{figure*}[t]
	\centering
	\includegraphics[width=\textwidth]{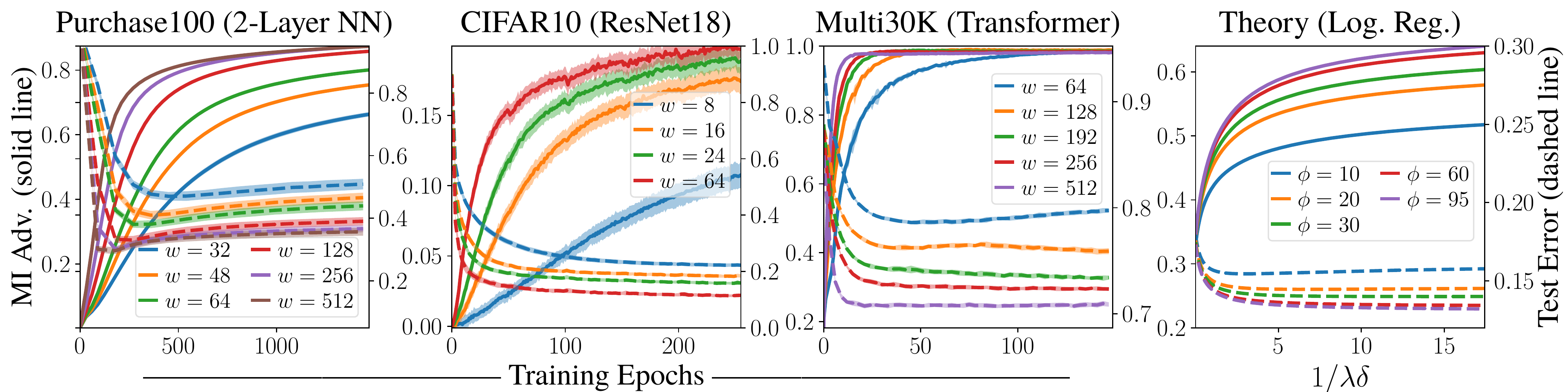}
	\caption{\textbf{Privacy vs.\ regularization.} Regardless of neural network width (parameterized by $w$), increasing the number of training epochs (decreasing regularization) increases the adversary's MI advantage (solid line) while simultaneously decreasing its test error (dashed line). This induces a regularization-wise privacy--utility trade-off. The same holds theoretically for logistic regression when decreasing ridge regularization under the bi-level feature ensemble setting.
 }
	\label{fig:epochwise}
\end{figure*}

\paragraph{Theoretical Insights.}
\label{sec:params-theory}

Using our theoretical tool from Theorem~\ref{thm:leave-one-out-ridge-informal}, we can in fact prove that for an extremely broad class of settings, including the bi-level ensemble which exhibits benign overfitting, extreme overparameterization leads to perfect MI by any loss-thresholding adversary.
We capture this result in the following theorem.
We have omitted some technical conditions related to the convergence of a system of fixed point equations for the statement of part (a); please see Theorem~\ref{thm:logistic-high-dim-mi-formal} in Appendix~\ref{app:proof-logistic-high-dim-mi} for precise details.  
In the theorem statements, we assume all scalar variables (such as $\lambda$ and $\eta$) to be fixed unless otherwise specified.
\begin{theorem}
\label{thm:logistic-high-dim-mi}
    If $\hat{f}(\vx) = \vx^\transp \widehat{\vbeta}$, where $\widehat{\vbeta}$ is the solution to \eqref{eq:logistic-ridge}, and for some $\tau > 0$ we have an adversary $A(f, \vx, y) = \ind \set{\ell(y, f(\vx)) < \tau}$, then as $n, p \to \infty$ with $n/p \to \delta \in (0, \infty)$,
    \begin{enumerate}[label=(\alph*)]
        \item If $\lim_{p \to \infty} \norm[2]{\mSigma^{1/2} \vbeta^*} /\sqrt{p}$ exists and is finite, and $\liminf_{p \to \infty} \lambda_{\min}(\mSigma) > 0$, where $\lambda_{\min}(\mSigma)$ is the smallest eigenvalue of $\mSigma$, then as $\delta \to 0$, $\adv(A) \to 1$.
        \item For the bi-level ensemble in \eqref{eq:bilevel-model}, if $p / d \to \phi \in (1, \infty)$ and $d/n$ converges to a fixed value, then as $\phi \to \infty$, $\adv(A) \to 1$, and in the limit as $\lambda \to \infty$, $\error(\hat{f})$ is decreasing in $\phi$.
    \end{enumerate}
\end{theorem}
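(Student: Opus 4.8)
The plan is to leverage the leave-one-out characterization of Theorem~\ref{thm:leave-one-out-ridge-informal} together with the asymptotic formulas of Corollary~\ref{cor:error-and-mi} and the known generalization analysis of \citet{salehi_logistic_2019}. For part (a), the key observation is that the loss-thresholding adversary's advantage is controlled by the separation between the loss distribution on training points versus test points. Using Theorem~\ref{thm:leave-one-out-ridge-informal}, the output on a training point $\vx_i$ is $\prox{\gamma\ell(y_i,\cdot)}{\vx_i^\transp\widehat{\vbeta}_{-i}}$, and the proximal step drives the loss toward zero by an amount governed by the scalar $\gamma$. I would first show that as $\delta \to 0$ (extreme overparameterization relative to sample size), the effective regularization parameter $\gamma$ in the leave-one-out map diverges---intuitively, with vanishingly few samples per parameter, the training procedure interpolates and pulls each training point's prediction essentially all the way to a loss-minimizing value. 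Under the stated conditions ($\norm[2]{\mSigma^{1/2}\vbeta^*}/\sqrt{p}$ bounded, $\lambda_{\min}(\mSigma)$ bounded away from zero), the test-point output $\vx_i^\transp\widehat{\vbeta}_{-i}$ remains stochastically bounded (its variance $\sigma^2$ and signal strength $\alpha$ stay controlled), so the gap between the training-point loss (near zero) and the test-point loss (bounded away from zero with probability approaching one) becomes detectable by a fixed threshold $\tau$, forcing $\adv(A)\to 1$. Concretely, I would track the fixed-point system of \citet{salehi_logistic_2019} as $\delta\to 0$ and extract the limiting behavior of $(\alpha,\gamma,\sigma)$, then plug into the membership advantage integral of Corollary~\ref{cor:error-and-mi}(ii) and show it converges to $1$ by dominated convergence, the integrand's support shifting so that the two normal densities become mutually singular.

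For part (b), the bi-level ensemble gives more explicit structure. The condition $d/n \to$ const with $p/d \to \phi$ means $\delta = n/p = (n/d)/\phi \to 0$ as $\phi\to\infty$, so the MI claim $\adv(A)\to 1$ should follow as a special case of the mechanism in part (a)---I would verify that the bi-level covariance satisfies the qualitative requirements (bounded signal energy since total variance in the first $d$ coordinates is normalized, and the tail eigenvalues $\eta p/(p-d)$ stay bounded), though note that $\lambda_{\min}(\mSigma)\to 0$ is allowed here, so I cannot directly invoke (a) and instead must redo the fixed-point asymptotics in the $\phi\to\infty$ limit, using the bi-level structure to show $\gamma$ still diverges (or at least that the training/test loss gap is eventually larger than any fixed $\tau$) while the test-point output stays bounded. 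The second half of (b)---that $\error(\hat{f})$ is decreasing in $\phi$ in the limit $\lambda\to\infty$---is a pure generalization statement. As $\lambda\to\infty$, $\widehat{\vbeta} \approx \frac{1}{\lambda}\cdot(\text{something})$, and the relevant quantity is the direction $\widehat{\vbeta}/\norm{\widehat{\vbeta}}$ and the ratio $\alpha/\sigma$ appearing in $\error(\hat{f}) = \expect{\rho'(Z)\Phi(-\alpha Z/\sigma)}$. In the ridgeless-direction limit, $\widehat{\vbeta}\propto \frac{1}{n}\sum_i(y_i - \tfrac12)\vx_i$ (the gradient of the loss at zero), and computing $\alpha = $ (signal alignment) and $\sigma = $ (noise magnitude) for this estimator under the bi-level ensemble, I would show $\alpha/\sigma$ increases with $\phi$---the tail features contribute to $\sigma^2$ a term that shrinks as more nuisance directions dilute the label noise, the classic benign-overfitting effect from \citet{wang2021benign, muthukumar2021classification}, so $\error(\hat{f})$ decreases.

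The main obstacle I anticipate is the first half of part (b), specifically controlling the \citet{salehi_logistic_2019} fixed-point system in the joint limit where $\lambda_{\min}(\mSigma)\to 0$: the CGMT-based equations involve integrals against the spectral distribution of $\mSigma$, which becomes a two-point mass (at $p/d \to \phi\cdot\infty$-scaled locations)---I need to show this degeneracy does not cause $\gamma$ to stay bounded or $\sigma$ to blow up in a way that kills the loss gap. A careful asymptotic expansion of the proximal-operator equations, isolating the contribution of the $d$ informative coordinates from the $p-d$ nuisance ones, will be required; I would handle the two blocks separately, showing the nuisance block behaves like ridge regression with effective regularization that interpolates the data (hence $\gamma\to\infty$ on the relevant scale) while the informative block keeps the signal term $\alpha$ and residual variance $\sigma$ bounded. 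The rest---dominated-convergence arguments for the membership-advantage integral and the monotonicity computation for the error in the $\lambda\to\infty$ limit---I expect to be routine once the scalar asymptotics are pinned down.
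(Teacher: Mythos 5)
Your overall architecture for part (a) --- track the fixed-point system of \citet{salehi_logistic_2019} as $\delta \to 0$, show the proximal scale $\gamma$ diverges, and conclude that the train and test output distributions separate --- matches the paper's proof. But there is a genuine gap in the mechanism you state for why a \emph{fixed}-threshold adversary succeeds. You claim it suffices that the test-point output $\vx_i^\transp \widehat{\vbeta}_{-i}$ ``remains stochastically bounded,'' so that the test loss is ``bounded away from zero with probability approaching one.'' That implication is false: if the test output is asymptotically $\normal(0, \kappa^2\alpha^2 + \sigma^2)$ with variance merely bounded (and bounded away from zero), then $\Pr(\ell(y, \hat{f}(\vx)) < \tau)$ converges to a strictly positive constant, the false positive rate does not vanish, and $\adv(A) \not\to 1$; indeed, since the training-output law is the push-forward of the test-output law under the strictly increasing prox map, the two laws would remain mutually absolutely continuous and even the optimal adversary could not reach advantage $1$. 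What the argument actually requires --- and what the paper proves --- is the stronger statement $\kappa^2\bar{\alpha}^2 + \bar{\sigma}^2 \to 0$: the test output concentrates at $0$ (loss near $\log 2$) while the training output is pushed to $\pm\infty$ by the prox with $\bar{\gamma}\to\infty$ (loss near $0$), so a small fixed $\tau$ separates them. Establishing this vanishing is the technical core of the proof and is done by a two-stage bootstrap: crude bounds $r^2 \le 2$, $\theta \le 1/2$ give only $\kappa^2\alpha^2+\sigma^2 = O(1/\delta)$, and then a tail bound on the proximal operator of the logistic loss (Lemma~\ref{lem:prox-logistic-bound}) is fed back into the fourth and fifth fixed-point equations to obtain $r, \theta = O(\delta\log(1/\delta))$ and hence variance $O(\delta\log^2(1/\delta))$. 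Your plan does not anticipate this step, and the condition you propose to verify in its place is insufficient.

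Two smaller points. For part (b), your worry that $\lambda_{\min}(\mSigma)\to 0$ blocks an appeal to part (a) is a misreading of the normalization: the $1/p$ scaling sits in the feature covariance $\frac{1}{p}\mSigma$, while the tail eigenvalues of $\mSigma$ itself tend to a positive constant depending on $\eta$, and $\delta = (n/d)/\phi \to 0$, so the MI claim of (b) follows directly from (a) exactly as the paper argues --- no separate two-block analysis is needed. For the generalization claim, your route via the explicit $\lambda\to\infty$ limit $\widehat{\vbeta}\propto \frac{1}{n}\sum_i (y_i - \tfrac12)\vx_i$ is a legitimate and arguably more elementary alternative to the paper's expansion of the nonlinear system (whose $\lambda\to\infty$ limit indeed recovers $r^2 = 1/4$ and $\theta = \expect{\rho''(-\kappa Z_1)}$, i.e., the same first-gradient-step estimator); either way the remaining work is to verify that $\alpha^2/\sigma^2$ is increasing in $\phi$ with $n/d$ held fixed.
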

This theorem highlights that as $\delta \to 0$ (the model becomes increasingly overparameterized), \emph{any} constant-threshold adversary's MI advantage converges to 1, yielding perfect MI attacks on the learned model. We emphasize that the constant-threshold adversary is much weaker than the sample-specific loss threshold adversary we consider in our experiments, and it need not be adapted to the problem in any way, yet overparameterized models are still vulnerable. This is true regardless of any (fixed) value of regularization strength, meaning that ridge regularization is not sufficient to protect against MI attacks, echoing the observation of \citet{tan2022parameters} in linear regression. This result applies not only to standard isotropic data covariances, but also to highly anisotropic covariances such as the bi-level ensemble.

Part (b) highlights how in the right circumstances, we can still see generalization performance improving with overparameterization---that there is a trade-off between generalization and privacy, just as in our experimental results.
We illustrate this alongside neural networks in Figure~\ref{fig:MI_vs_params_nn}  for the bi-level model with fixed $n / d = 5$, $\sigma_\beta = 10$, and $\eta = 1$, with $\lambda$ tuned to minimize test error, analogously to the optimal validation error early stopping in the NN experiments. This plot is generated using the expressions in Corollary~\ref{cor:error-and-mi} for test error and the optimal adversary's MI advantage. We see that the generalization error decreases but the adversary's MI advantage increases as the length of the tail of small eigenvalues of $\mSigma$ increases for larger values of $\phi$.

\subsection{Regularization-Wise Privacy--Utility Trade-Off}
\label{sec:epochwise}

Using the same classification tasks and NN architectures as in Section \ref{sec:params_tradeoff_nn}, we empirically demonstrate an epoch-wise privacy--utility trade-off in Figure \ref{fig:epochwise}, where we plot the adversary's MI advantage and the model's generalization error as a function of training epochs. Stopping training at earlier epochs corresponds to higher regularization, as the model has less opportunity to overfit to training data. We include a variety of NN widths in our plot, demonstrating similar trade-offs across widths.

We also plot the theoretical test error and MI advantage from Corollary~\ref{cor:error-and-mi} for logistic regression with the bi-level feature ensemble as a function of the regularization strength. Specifically, we plot the regularization as a function of $1 / \lambda \delta$, where $\lambda$ is the $\ell_2$ regularization parameter, such that smaller values of $1 / \lambda \delta$ correspond to more regularization. Just as we explore a variety of widths for NNs, we consider a variety of values of $\phi = p/d$, measuring the amount of overparameterization for the bi-level feature ensemble.

Interestingly, Figure \ref{fig:epochwise} shows how the adversary's MI advantage can continue to increase with epochs even if test error stays the same. Thus, generalization error does not completely characterize MI. 
Instead, it is the increasing generalization (cross-entropy) loss gap that leads to increased MI advantage.
As the NN is trained for more epochs, or the logistic regression model is less regularized, training loss decreases at a greater rate than test loss, making it easier to divide the training and test losses with a loss threshold, as illustrated in Figure~\ref{fig:losses}. The losses continue separating even after test error has converged, causing the MI advantage to continue to increase.

\section{A BLESSING OF DIMENSIONALITY: ELIMINATING THE PRIVACY--UTILITY TRADE-OFF
}\label{sec:joint_tune}

\begin{figure*}[t]
	\centering
	\includegraphics[width=\textwidth]{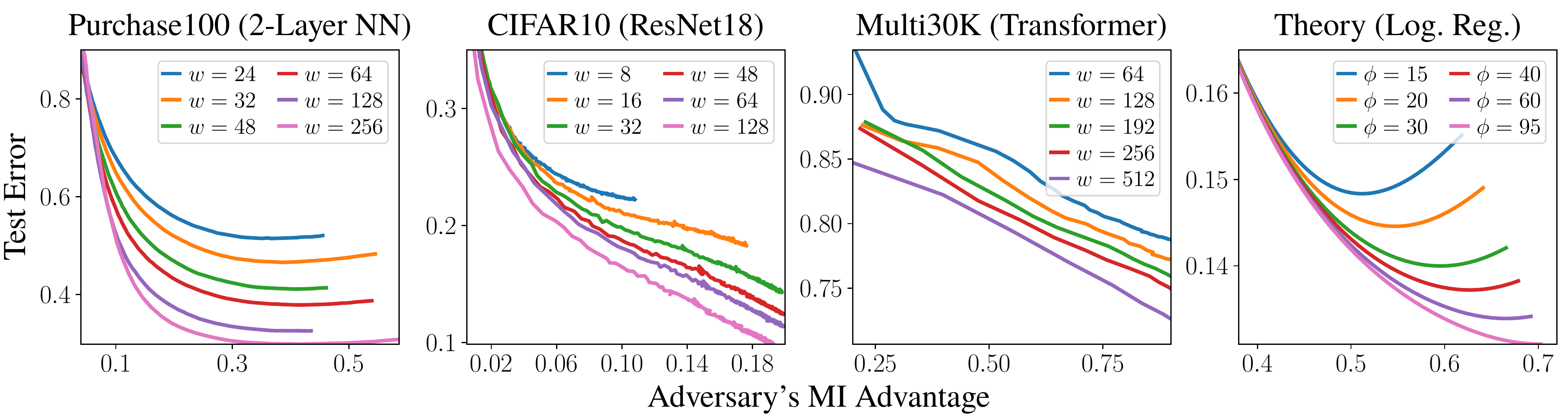}
	\caption{\textbf{Trade-offs are better with increased width.}
	We plot the regularization-wise privacy--utility trade-off of networks of different widths by sweeping through different numbers of training epochs. Observe that, for sufficiently low validation errors, wider networks are closer to the lower-left (high accuracy, high privacy) region compared to narrower networks. The same holds theoretically for logistic regression in the bi-level feature ensemble sweeping through the ridge penalty.}
	\label{fig:tradeoff}
\end{figure*}

We now show that, perhaps counter-intuitively, if we jointly tune both the numbers of parameters and the amount of regularization, we can eliminate the privacy--utility trade-off.
The main idea is to \emph{increase} the number of parameters while also \emph{increasing} the regularization appropriately.

\begin{figure*}[t]
	\begin{subfigure}{0.49\textwidth}
    	\includegraphics[width=\textwidth]{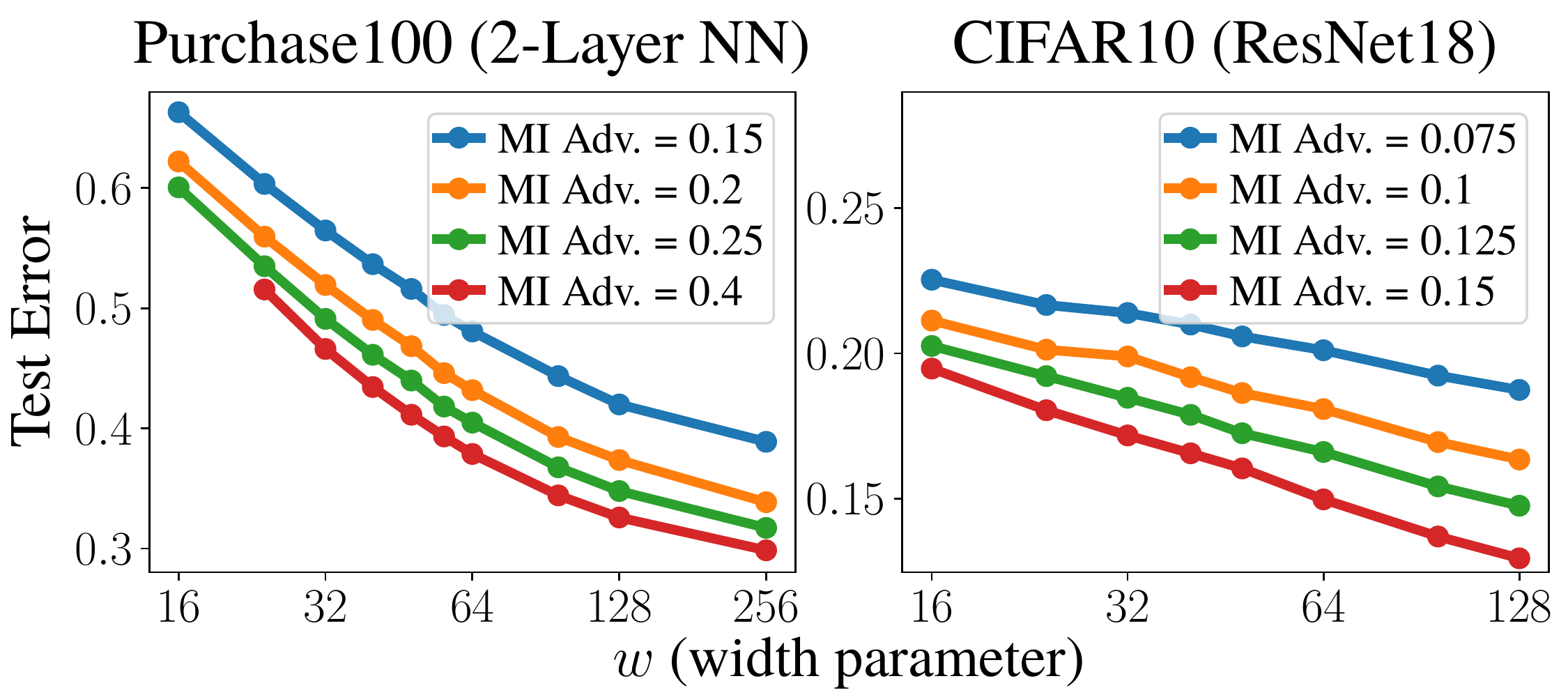}
    	\caption{Test error vs.\ network width for fixed MI adv.}
    	\label{fig:fixed_MI}
	\end{subfigure}
	\hspace*{\fill}
	\begin{subfigure}{0.49\textwidth}
    	\includegraphics[width=\textwidth]{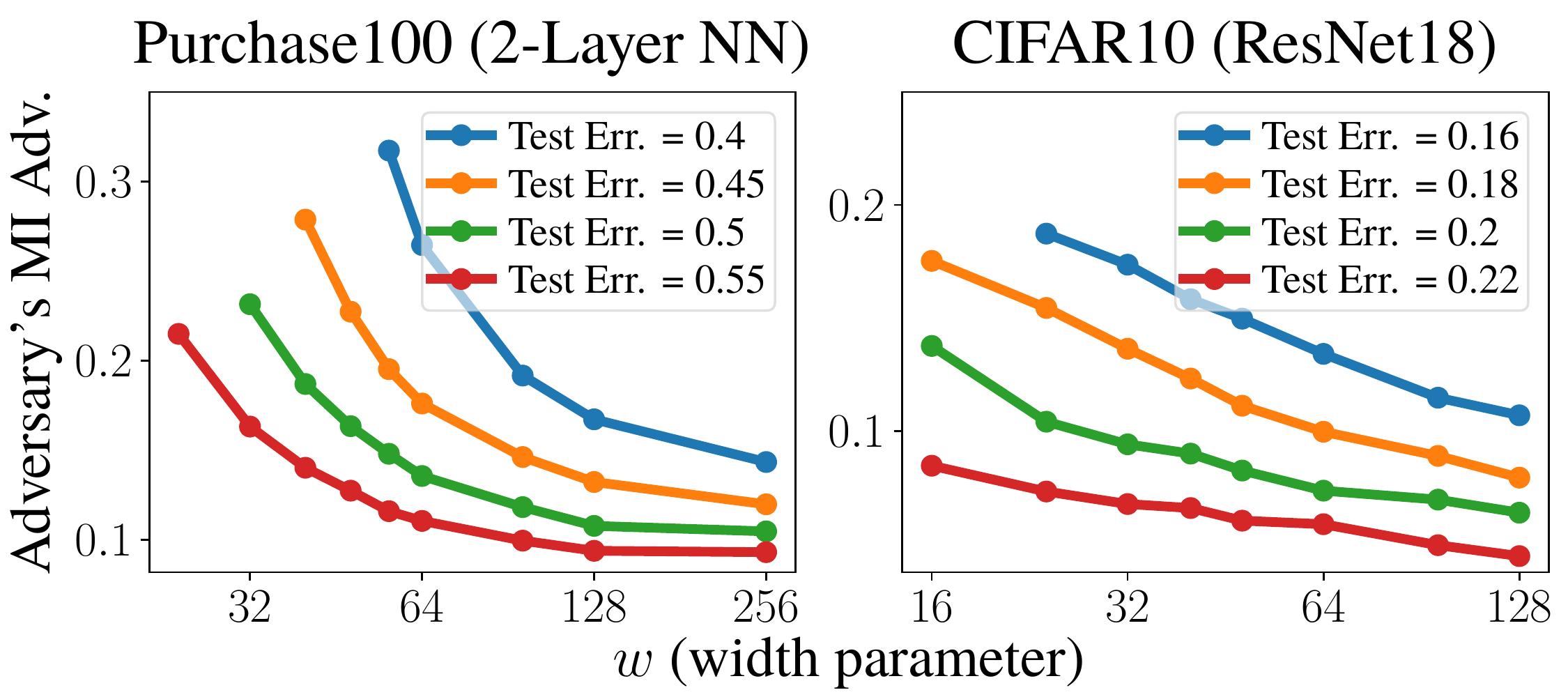}
    	\caption{MI adv.\ vs.\ network width for fixed test error.}
    	\label{fig:fixed_val}
	\end{subfigure}
	\vspace{8pt}
	\caption{\textbf{Overparameterization with early stopping eliminates the privacy--utility trade-off.} (a)~For each network width, we train the network until it reaches a given MI advantage value. We then plot the test error of the networks. Observe how test error decreases with parameters at a fixed MI advantage value, showing how proper tuning of parameters and epochs together improves model accuracy without damaging its privacy. Thus, this eliminates the privacy--utility trade-off. (b) Same as (a) but switching the roles of MI advantage and test error.}
	\label{fig:fixed_MI_val}
\end{figure*}

Our key observation is that the 
decrease in the model's generalization error and the increase in an adversary's MI advantage occur at different rates during training for NNs of different widths (recall Figure~\ref{fig:epochwise}). 
However, it is difficult to compare these rates across different NN widths when privacy and utility are individually plotted against regularization.
Hence, we plot parametric curves for varying widths as a function of regularization (epochs for NNs, and ridge penalty for logistic regression) in a {\em privacy--utility plane} in Figure~\ref{fig:tradeoff}, which enables us to abstract away the regularization strength and compare trade-off curves across widths directly.
In the plot, ideal performance is the lower-left corner, as this represents low MI advantage (high privacy) and low test error. 
In this representation, the story becomes clear: wider networks can induce better privacy--utility trade-offs. That is, they are both below and to the left of the trade-off curves for narrower networks. The same occurs for theoretical logistic regression with the bi-level ensemble. Thus, increased parameterization is not inherently a privacy liability and can instead actually improve the privacy of a model.

We explicitly show how early stopping (with the appropriate stopping rule) \emph{eliminates} the privacy--utility trade-off for overparameterization in Figure~\ref{fig:fixed_MI_val}. If we tune the number of training epochs for each width such that a fixed MI advantage is reached (which takes fewer epochs for larger widths), then we see from Figure~\ref{fig:fixed_MI} that overparameterization only \emph{decreases} the generalization error. Similarly, tuning the number of epochs to a fixed validation error results in a decrease of the adversary's MI advantage with increasing width, as shown in Figure~\ref{fig:fixed_val}.
In essence, either privacy or improved generalization can be obtained without taking a hit in the other by opting for a larger network with proper regularization.
While we do not recommend early stopping alone as a sufficient privacy-preserving mechanism (practitioners should likely also consider the wide collection of existing MI defense schemes), this strongly suggests that practitioners should include wider networks in their model search and then tune their regularization appropriately to achieve a desired level of privacy.

In Appendix~\ref{sec:additional-experiments}, we include additional experiments that we could not include in the main paper for space reasons, including a version of Figure~\ref{fig:fixed_MI_val} for Transformers on Multi30K, repeating all of the experiments in Figures~\ref{fig:MI_vs_params_nn}--\ref{fig:fixed_MI_val} for global loss thresholding attacks, and the MI vs.\ test error trade-off for networks trained with DP-SGD~\citep{abadi2016deep} on CIFAR10. In all cases, we see the same behavior---when the regularization is tuned for MI, larger models achieve better protection from MI and better classification accuracy than smaller models.
\section{DISCUSSION}\label{sec:conc}

We began this exploration with a question: is overparameterization a privacy liability? In our theoretical and empirical investigation, we have demonstrated cases wherein overparameterization \emph{can} be a privacy risk, but that it \emph{need not be}, and that, in fact, it can provide even greater privacy when coupled with appropriate regularization. To the best of our knowledge we have provided the first study of this effect in the context of membership inference. 
While our work shows a number of common scenarios where larger models coupled with regularization achieve greater privacy, we acknowledge that we do not prove the generality of this phenomenon.
We encourage further investigation into this topic to better understand how universal this blessing of dimensionality is.

While we showed how ridge regularization for logistic regression and early stopping for neural networks bring out this blessing of dimensionality, many other types of regularization are used in practice. 
For one example, we include a preliminary experiment using DP-SGD \citep{abadi2016deep} in Figure~\ref{fig:dpsgd} in Apppendix~\ref{sec:additional-experiments}, for which we also observe wider networks having better trade-offs.
However, not every regularizer may induce the same effect, and an interesting open research direction is to discover which types of regularization or other learning techniques can draw out even more privacy benefits from large models. 
For example, in the field of differential privacy, by fine-tuning pre-trained language models, \citet{yu2021differentially,li2022largelanguage} achieve better accuracy with larger models than smaller ones for the same privacy budget. 
A nascent regularization approach strongly worth further study is network pruning, which has been observed to be an effective defense against membership inference attacks~\citep{wang2020against} as well as a vulnerability in some settings~\citep{yuan2022membership}.

The phenomenon of better privacy--utility trade-offs for overparameterized models also has important takeaways for our general understanding of the benefits of overparameterization.
As we have shown, highly overparameterized models not only have more capacity to memorize than smaller networks (which leads to increased risk of MI), but they also appear to learn the underlying structure of the data \emph{even more quickly} than they memorize data.
Identifying the mechanism that provides this benefit in overparameterized models and developing appropriate measures for an ``effective'' number of parameters that reflects the memorization capacity of the model as a function of both the true number of parameters and forms of regularization are important open questions.
We believe our leave-one-out characterization of the training output distribution in Theorem~\ref{thm:leave-one-out-ridge-informal} 
will be helpful in answering these questions with respect to privacy.

\smallskip
\subsection*{Acknowledgements}
This work was supported by NSF grants CCF-1911094, IIS-1838177, and IIS-1730574; ONR grants N00014-18-12571, N00014-20-1-2534, and MURI N00014-20-1-2787; AFOSR grant FA9550-22-1-0060; and a Vannevar Bush Faculty Fellowship, ONR grant N00014-18-1-2047.


\bibliographystyle{unsrtnat}
\bibliography{refs}

\clearpage
\onecolumn

\appendix

\section{Limitations and Considerations}\label{sec:limit_broad}

\subsection{Limitations of this work}
A possible limitation of this work is that we focus on a particular class of inference attacks, the loss threshold attack, in most of our experimental results. More subtly, the procedure we propose for estimating membership inference vulnerability involves computing an empirical estimate. As such, there is uncertainty in this process. In practical settings where the training and validation sets are large, this is likely not a major concern. That said, in settings where the privacy budget is very low and/or privacy is paramount it may additionally be necessary to use high-probability bounds on the adversary's MI advantage rather than the estimate directly for an added layer of security. Furthermore, the theoretical guarantees are in the asymptotic regime. While they show a strong correlation with finite dimension experiments (e.g., Figure~\ref{fig:epochwise}), developing tight, non-asymptotic results is an open question. \cite{wang2021benign}, for instance, are able to derive non-asymptotic guarantees to connect generalization error to overparameterization, but the same technique does not apply in the case of membership inference: it is important to consider the distribution of the model's output for specific inputs---not just the population on average. 

\subsection{Ethical Considerations}

Ensuring that models protect the data that they are trained on is important for modern machine learning systems. In order to achieve benign overparameterization for membership inference and generalization error jointly, we perform precise tuning and early stopping. When implementing these ideas in practical scenarios, it is recommended that a sensitivity analysis additionally be conducted to ensure that the chosen parameters are sufficiently tight. Without doing so, applying this method may lead to false confidence in a method's robustness to MI attacks. 
In general, the authors believe that in settings where privacy is of the utmost concern, such as when training with medical data, additional measures beyond those covered in this work should be taken to ensure that the data stays private. Finally, this paper focuses on membership inference in particular and these results are not as general as complete differential privacy. Practitioners should consider additional privacy vulnerabilities beyond membership inference alone. 
\section{Background material}

Here we include a few definitions and results borrowed from other works.

\subsection{Definitions}
\label{app:definitions}

We again define the proximal operator for a function $\Omega$ as follows.
\begin{definition}[Proximal operator]
The \emph{proximal operator} of a function $\Omega \colon \reals^p \to \reals$ is defined as
\begin{align}
    \prox{\Omega}{\vv} = \argmin_{\vw \in \reals^p} \Omega(\vw) + \frac{1}{2}\norm[2]{\vw - \vv}^2.
\end{align}
\end{definition}
It will be valuable to consider the first-order optimality condition of the proximal operator; for differentiable penalties, the minimizer $\vw^*$ satisfies
\begin{align}
    \nabla \Omega(\vw^*) + \vw^* - \vv = \vzero.
\end{align}
For our work, we will need the form of the scalar proximal operator for $\Omega(\vv) = \frac{1}{2} \norm[2]{\mA \vv}^2$ for symmetric $\mA \in \reals^{p \times p}$, which for $t > 0$ is given by
\begin{align}
    \prox{t\Omega}{v} = \inv{\mI_p + t \mA^2} \vv.
\end{align}

We also have the definition of local Lipschitzness from \citet{salehi_logistic_2019}.
\begin{definition}[Locally Lipschitz]
A function $\Phi \colon \reals^d \to \reals$ is said to be \emph{locally Lipschitz} if $\forall M > 0$, $\exists L_M \geq 0$, such that $\forall \vx, \vy \in [-M, +M]^d$, $|\Phi(\vx) - \Phi(\vy)| \leq L_M \norm{\vx - \vy}$.
\end{definition}

\subsection{Fixed point equations for logistic regression}
\label{app:salehi-logistic}

We borrow the following theorem (slightly adapted to our notation) from \citet{salehi_logistic_2019}.
\begin{theorem}[Theorem 1 of \citealp{salehi_logistic_2019}]
\label{thm:salehi-logistic}
For training data  $\vx_i \overset{\iid}{\sim} \normal(\vzero, \frac{1}{p}\mI_p)$ and $y_i \sim \mathrm{Bernoulli}(\vx_j^\transp \vbeta^*)$, consider the optimization program
\begin{align}
    \widehat{\vbeta} = \argmin_{\vbeta \in \reals^p} \frac{1}{n} \sum_{i=1}^n \ell(y_i, \vx_i^\transp \vbeta) + \frac{\lambda}{p} \Omega(\vbeta),
\end{align}
where $\ell(y, z) = \rho(z) - yz$ for $\rho(z) =  \log(1 + \exp(-z))$ is the logistic loss, and $\Omega \colon \reals^p \to \reals$ is a convex regularization function. Consider also the following nonlinear system
\begin{equation}
    \label{eq:salehi-fixed-point}
    \left\{
    \begin{aligned}
    \kappa^2 \alpha &= \frac{1}{p} {\vbeta^*}^\transp \prox{\lambda \sigma \tau \Omega}{\sigma \tau (\theta \vbeta^* + \frac{r}{\sqrt{\delta}} \vg)}, \\
    \gamma &= \frac{1}{r \sqrt{\delta} p} \vg^\transp \prox{\lambda \sigma \tau \Omega}{\sigma \tau (\theta \vbeta^* + \frac{r}{\sqrt{\delta}} \vg)}, \\
    \kappa^2 \alpha^2 + \sigma^2 &= \frac{1}{p} \norm[2]{ \prox{\lambda \sigma \tau \Omega}{\sigma \tau (\theta \vbeta^* + \frac{r}{\sqrt{\delta}} \vg)}}^2, \\
    \gamma^2 &= \frac{2}{r^2} \expect{\rho'(-\kappa Z_1) \paren{\kappa \alpha Z_1 + \sigma Z_2 - \prox{\gamma \rho}{\kappa \alpha Z_1 + \sigma Z_2} }^2}, \\
    \theta \gamma &= -2 \expect{\rho''(-\kappa Z_1) \prox{\gamma \rho}{\kappa \alpha Z_1 + \sigma Z_2}}, \\
    1 - \frac{\gamma}{\sigma \tau} &= \expect{\frac{2 \rho'(-\kappa Z_1)}{1 + \gamma \rho''\paren{\prox{\gamma \rho}{\kappa \alpha Z_1 + \sigma Z_2}}}},
    \end{aligned}
    \right.
\end{equation}
where $\vg \sim \normal(\vzero, \mI_p)$ is independent of $\vbeta^*$ and $\Omega$, and $Z_1$ and $Z_2$ are independent standard normal variables. Assume that as $p \to \infty$, $n / p \to \delta$, $\smallnorm[2]{\vbeta}/\sqrt{p} \to \kappa$, and that the system in \eqref{eq:salehi-fixed-point} has a unique solution $(\bar{\alpha}, \bar{\sigma}, \bar{\gamma}, \bar{\theta}, \bar{\tau}, \bar{r})$. Then, as $p \to \infty$, for any locally-Lipschitz function $\Psi \colon \reals \times \reals \to \reals$, we have
\begin{align}
    \frac{1}{p} \sum_{j=1}^p \Psi(\hat{\beta}_j, \beta_j^*) \pconv \frac{1}{p} \sum_{j=1}^p \Psi([\mGamma(\vbeta^*, \vg)]_j, \beta_j^*),
\end{align}
where $\mGamma(\vv, \vz) = \prox{\lambda \bar{\sigma} \bar{\tau} \Omega}{\bar{\sigma} \bar{\tau} (\bar{\theta} \vv + \frac{\bar{r}}{\sqrt{\delta}} \vz)}$.
\end{theorem}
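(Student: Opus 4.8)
The plan is to establish the statement via the Convex Gaussian Min-max Theorem (CGMT) of \citet{thrampoulidis2018cgmt}, following the now-standard blueprint for sharp asymptotics of convex regularized $M$-estimators. First I would recast the training objective as a min-max problem with a single bilinear Gaussian term. Writing $\mathbf{X} \in \reals^{n \times p}$ for the design with rows $\vx_i^\transp$, I introduce auxiliary variables $v_i = \vx_i^\transp \vbeta$ and a dual vector $\mathbf{u} \in \reals^n$ to get
\begin{align}
    \min_{\vbeta, \vv} \max_{\mathbf{u}} \; \tfrac{1}{n} \mathbf{u}^\transp (\mathbf{X}\vbeta - \vv) + \tfrac{1}{n}\sum_{i=1}^n \ell(y_i, v_i) + \tfrac{\lambda}{p}\Omega(\vbeta).
\end{align}
The obstruction to applying the CGMT directly is that the labels $y_i \sim \mathrm{Bernoulli}(\rho'(\vx_i^\transp\vbeta^*))$ are themselves functions of $\mathbf{X}$ through the signal direction. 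I would resolve this by splitting $\mathbf{X} = \mathbf{X}\mathbf{P} + \mathbf{X}\mathbf{P}^\perp$, where $\mathbf{P}$ is the projection onto $\mathrm{span}(\vbeta^*)$: the labels depend on $\mathbf{X}$ only through $\mathbf{X}\mathbf{P}$, a single Gaussian direction that can be conditioned on and carried along as a known ``signal'' term, while $\mathbf{X}\mathbf{P}^\perp$ acts on the component of $\vbeta$ orthogonal to $\vbeta^*$ and is independent of everything else — exactly the structure the CGMT requires.

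Next I would replace the primary problem by the auxiliary optimization (AO). The CGMT turns $\mathbf{u}^\transp (\mathbf{X}\mathbf{P}^\perp)\vbeta$ into $\norm[2]{\mathbf{P}^\perp \vbeta}\, \vg_1^\transp \mathbf{u} + \norm[2]{\mathbf{u}}\, \vg_2^\transp \mathbf{P}^\perp\vbeta$ with independent standard Gaussians $\vg_1 \in \reals^n$, $\vg_2 \in \reals^p$, provided the feasible sets are convex and compact and the objective is convex-concave: convexity in $(\vbeta,\vv)$ follows since $\ell$ and $\Omega$ are convex, concavity in $\mathbf{u}$ is immediate, and compactness needs an a priori bound on $\norm[2]{\widehat{\vbeta}}$ (supplied by the ridge term) and on the optimal dual variable (from boundedness of $\rho'$). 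In the AO the maximization over $\mathbf{u}$ decouples across samples and the minimization over $\vbeta$ decouples across coordinates once one optimizes over the scalars $\norm[2]{\mathbf{P}^\perp\vbeta}$, $\norm[2]{\mathbf{u}}$, and $\langle \vbeta, \vbeta^*\rangle$. Completing the resulting one-dimensional minimizations yields Moreau envelopes whose minimizers are $\prox{\gamma \rho}{\cdot}$ for the loss part and $\prox{\lambda \sigma\tau\,\Omega}{\cdot}$ for the penalty; collecting the stationarity conditions of the remaining scalar max-min over $(\alpha,\sigma,\gamma,\theta,\tau,r)$ produces precisely the six equations in \eqref{eq:salehi-fixed-point}, with $\widehat{\vbeta}$ behaving coordinatewise like $\mGamma(\vbeta^*,\vg)$.

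To transfer back to the primary problem I would invoke the optimizer-level conclusion of the CGMT: under the assumed uniqueness of the fixed point, any bounded Lipschitz functional of the AO optimizer concentrates around its deterministic value, and the CGMT then forces the corresponding functional of $\widehat{\vbeta}$ to concentrate around the same value. Applying this to $\frac{1}{p}\sum_{j=1}^p \Psi(\hat\beta_j,\beta_j^*)$ — after the usual truncation argument reducing a locally Lipschitz $\Psi$ to a globally Lipschitz one, using the a priori boundedness of $\widehat{\vbeta}$ — gives the claimed convergence in probability. The step I expect to be the main obstacle is the scalarization together with its concentration: one must show that the random finite-dimensional AO objective converges uniformly over a compact parameter region to a deterministic convex-concave function, that this limit has a unique saddle point coinciding with the solution of \eqref{eq:salehi-fixed-point}, and that the saddle point is stable under perturbations — standard but delicate steps, compounded here by the extra ``signal'' coordinate introduced by the $\mathbf{X}\mathbf{P}$ split and by the fact that the logistic loss is not strongly convex, so uniqueness of the minimizer in $\vbeta$ is not automatic and must be extracted from strong convexity of the ridge penalty.
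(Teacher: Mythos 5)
This theorem is imported verbatim (up to notation) from \citet{salehi_logistic_2019}; the paper does not prove it but defers entirely to that reference, whose argument is exactly the CGMT pipeline you outline — conditioning on the signal direction to decouple the labels from the design, passing to the auxiliary optimization, scalarizing to obtain the six fixed-point equations, and transferring concentration back to the primary problem. Your sketch therefore matches the source's approach; the only substantive point worth flagging is the one the authors themselves note after the statement, namely that Salehi et al.\ assume separable regularizers and i.i.d.\ coordinates of $\vbeta^*$, and a complete proof of the version stated here would need to verify (as the authors assert without proof) that the scalarization step survives dropping those assumptions.
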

The astute reader may note that \citet{salehi_logistic_2019} require separable regularizers and drawing $\vbeta^*$ element-wise $\iid$ from some distribution, but that neither of these are required for their proof technique to go through, so we have stated the more general result here, as we will need both of these assumptions to be relaxed. 

For a given problem, we can obtain the limiting solution $(\bar{\alpha}, \bar{\sigma}, \bar{\gamma}, \bar{\theta}, \bar{\tau}, \bar{r})$ by iterating the system of fixed point equations \eqref{eq:salehi-fixed-point}. That is, we can compute all six right hand sides via numerical integration, then obtain the corresponding values of $(\alpha, \sigma, \gamma, \theta, \tau, r)$ according to the expressions on the left-hand side, and then plugging these values back into the right-hand side and repeating until convergence.

\section{Leave-one-out analysis for membership inference}
\label{app:leave-one-out}

In order to study MI attacks, we need to understand how the distribution of training points differs from test points. We prove the following result to this end for logistic regression with a ridge penalty; however, the proof strategy is general and applies readily to other losses and penalties for general linear models that admit a result similar to Theorem~\ref{thm:salehi-logistic}, which includes many common models in machine learning \citep{thrampoulidis2018cgmt,pmlr-v119-emami20a,gerbelot2020asymptotic}.
\begin{theorem}
\label{thm:leave-one-out-ridge}
Consider the solution $\widehat{\vbeta}$ to the optimization problem in \eqref{eq:logistic-ridge}. Let $\widetilde{\vbeta}^* = \mSigma^{1/2} \vbeta$, $\widetilde{\vx}_i = \mSigma^{-1/2} \vx_i$, and $\widetilde{\Omega}(\widetilde{\vbeta}) = \frac{1}{2}\smallnorm[2]{\mSigma^{-1/2} \widetilde{\vbeta}}^2$. Assume Theorem~\ref{thm:salehi-logistic} holds for $\widetilde{\vbeta}^*$ in place of $\vbeta^*$ and $\widetilde{\Omega}$ in place of $\Omega$. Then for any training point $\vx_i$,
\begin{align}
    \vx_i^\transp \widehat{\vbeta} \dconv \prox{\bar{\gamma} \ell(y_i, \cdot)}{\vx_i^\transp \widehat{\vbeta}_{-i}},
\end{align}
where $\bar{\gamma}$ is from the result of Theorem~\ref{thm:salehi-logistic}, and
\begin{align}
    \widehat{\vbeta}_{-i} = \argmin_{\vbeta \in \reals^p} \frac{1}{n} \sum_{i' \neq i}^n \ell(y_{i'}, \vx_{i'}^\transp \vbeta) + \frac{\lambda}{2p} \norm[2]{\vbeta}^2.
\end{align}
\end{theorem}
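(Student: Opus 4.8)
The plan is a leave-one-out perturbation analysis that isolates the influence of the single training datum $(\vx_i, y_i)$ on its own fitted value. Split the objective in \eqref{eq:logistic-ridge} as $\frac{1}{n}\ell(y_i, \vx_i^\transp\vbeta) + L_{-i}(\vbeta)$, where $L_{-i}$ is the leave-one-out objective minimized by $\widehat{\vbeta}_{-i}$. The stationarity conditions are $\frac{1}{n}\ell'(y_i, \vx_i^\transp\widehat{\vbeta})\,\vx_i + \nabla L_{-i}(\widehat{\vbeta}) = \vzero$ and $\nabla L_{-i}(\widehat{\vbeta}_{-i}) = \vzero$. Expanding $\nabla L_{-i}$ around $\widehat{\vbeta}_{-i}$ and letting $\mathbf{H} = \nabla^2 L_{-i}(\widehat{\vbeta}_{-i})$ (whose smallest eigenvalue is at least $\lambda/p$, thanks to the ridge term), the perturbation is to leading order $\widehat{\vbeta} - \widehat{\vbeta}_{-i} = -\frac{1}{n}\ell'(y_i, \vx_i^\transp\widehat{\vbeta})\,\mathbf{H}^{-1}\vx_i + (\text{remainder})$. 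Writing $z = \vx_i^\transp\widehat{\vbeta}$, $z_{-i} = \vx_i^\transp\widehat{\vbeta}_{-i}$, and $g = \frac{1}{n}\vx_i^\transp\mathbf{H}^{-1}\vx_i$, contracting with $\vx_i$ gives the scalar identity $z + g\,\ell'(y_i, z) = z_{-i} + (\text{remainder})$. Once $g$ is shown to converge to a deterministic constant $\bar{\gamma} > 0$ and the remainder to vanish, $z$ asymptotically solves $w + \bar{\gamma}\,\ell'(y_i, w) = z_{-i}$, which is exactly the first-order optimality condition defining $w = \prox{\bar{\gamma}\ell(y_i,\cdot)}{z_{-i}}$. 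Since $\ell(y_i,\cdot)$ is convex, $w \mapsto w + \bar{\gamma}\ell'(y_i, w)$ is strictly increasing, so this solution is unique and $1$-Lipschitz in $z_{-i}$; this both legitimizes the identification and turns ``remainder $\to 0$'' into ``$z - \prox{\bar{\gamma}\ell(y_i,\cdot)}{z_{-i}} \to 0$ in probability'', from which the claimed convergence in distribution over the other $n-1$ points follows by continuity of the proximal map.

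Two ingredients remain: the concentration $g \to \bar{\gamma}$ and the negligibility of the remainder. For the first, $(\vx_i, y_i)$ is independent of the leave-one-out data, so $\mathbf{H}$ is independent of $\vx_i$; writing $\vx_i = \tfrac{1}{\sqrt p}\mSigma^{1/2}\vg_i$ with $\vg_i \sim \normal(\vzero, \mI_p)$, the Hanson--Wright inequality gives $\vx_i^\transp\mathbf{H}^{-1}\vx_i = \tfrac{1}{p}\operatorname{tr}(\mSigma\mathbf{H}^{-1})\,(1 + o_P(1))$, the eigenvalue lower bound on $\mathbf{H}$ controlling the fluctuation terms. To identify the deterministic limit of $\tfrac{1}{np}\operatorname{tr}(\mSigma\mathbf{H}^{-1})$, pass to the whitened coordinates $\widetilde{\vx}_{i'} = \mSigma^{-1/2}\vx_{i'}$, $\widetilde{\vbeta} = \mSigma^{1/2}\vbeta$, $\widetilde{\Omega}$ as in the theorem statement, in which the leave-one-out problem is exactly of the form covered by Theorem~\ref{thm:salehi-logistic} (isotropic features, convex penalty), and removing one of $n$ points leaves its limiting fixed point unchanged. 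The fitted values $\vx_{i'}^\transp\widehat{\vbeta}_{-i}$ that enter $\mathbf{H}$ through $\rho''(\cdot)$ then have the limiting law prescribed by that theorem, and the resulting value of $\tfrac{1}{np}\operatorname{tr}(\mSigma\mathbf{H}^{-1})$ coincides with $\bar{\gamma}$: this is essentially the content of the degrees-of-freedom fixed-point equation $1 - \gamma/(\sigma\tau) = \expect{2\rho'(-\kappa Z_1)/(1 + \gamma\rho''(\prox{\gamma\rho}{\kappa\alpha Z_1 + \sigma Z_2}))}$, since $1/(1+\gamma\rho''(\prox{\gamma\rho}{\cdot}))$ is the derivative of $\prox{\gamma\rho}{\cdot}$ and relates back to $\tfrac{1}{n}\vx_i^\transp\mathbf{H}^{-1}\vx_i$ via the standard resolvent identity for the proximal Newton step.

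The main obstacle is making the perturbation estimate rigorous: the second-order Taylor remainder in $\nabla L_{-i}$ involves $\rho'''$ contracted twice against the design vectors $\vx_{i'}$ with $\widehat{\vbeta} - \widehat{\vbeta}_{-i}$, and that difference is itself correlated with every $\vx_{i'}$ appearing in $\mathbf{H}$, so a clean bound needs either a second-order (leave-two-out) expansion or a self-consistent bootstrap in the style of approximate message passing / El Karoui-type analyses. What makes this tractable for the present model is that $\rho''$ and $\rho'''$ are bounded and the ridge penalty contributes uniform strong convexity (smallest eigenvalue of $\mathbf{H}$ at least $\lambda/p$); intertwined with it is the identification of the limit of $\tfrac{1}{np}\operatorname{tr}(\mSigma\mathbf{H}^{-1})$ with $\bar{\gamma}$, which is not a standalone computation but must be read off the fixed point system of Theorem~\ref{thm:salehi-logistic} applied to the whitened leave-one-out problem. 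I would stress that beyond convexity (for uniqueness of the prox) and smoothness (for the expansion), no structure specific to the logistic loss or the ridge penalty is used, so the identical argument delivers the analogous statement for any convex loss and regularizer admitting a characterization of the form of Theorem~\ref{thm:salehi-logistic}.
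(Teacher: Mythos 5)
Your route is genuinely different from the paper's. You run the classical leave-one-out perturbation analysis (in the style of El Karoui and of Donoho--Montanari): split off the single loss term, Taylor-expand the stationarity condition of the reduced objective around $\widehat{\vbeta}_{-i}$, contract with $\vx_i$ to obtain the scalar identity $z + g\,\ell'(y_i,z) = z_{-i} + (\text{remainder})$ with $g = \tfrac{1}{n}\vx_i^\transp \mathbf{H}^{-1}\vx_i$, and identify $g \to \bar{\gamma}$ through the degrees-of-freedom equation of the fixed-point system. The paper instead folds the left-out loss \emph{into the regularizer}, defining $\overline{\Omega}_i(\widetilde{\vbeta}) = \widetilde{\Omega}(\widetilde{\vbeta}) + \tfrac{1}{\lambda\delta}\ell(y_i, \widetilde{\vx}_i^\transp\widetilde{\vbeta})$, applies Theorem~\ref{thm:salehi-logistic} to that modified problem, and extracts the prox identity from the first-order optimality condition of $\prox{t\overline{\Omega}_i}{\cdot}$; the constant $\bar{\gamma}$ is then read off the \emph{second} fixed-point equation via $\tfrac{1}{p}\vg^\transp\mSigma\inv{\mSigma + t\mI_p}\vg = \bar{\gamma}\delta/(\bar{\sigma}\bar{\tau})$, i.e., from the resolvent of the ridge penalty rather than from the loss Hessian. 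Both arguments converge on the same scalar fixed point $z = \prox{\bar{\gamma}\ell(y_i,\cdot)}{z_{-i}}$, and your observation that $w \mapsto w + \bar{\gamma}\ell'(y_i,w)$ is strictly increasing (hence the prox is well defined and $1$-Lipschitz) is a correct and useful supplement that the paper leaves implicit.

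The gap is the one you name yourself: the Taylor remainder and the concentration $g \to \bar{\gamma}$ are the entire technical content of the El Karoui-style route, and your proposal only gestures at them ("a second-order leave-two-out expansion or a self-consistent bootstrap"). In particular, $\widehat{\vbeta} - \widehat{\vbeta}_{-i}$ is correlated with every $\vx_{i'}$ entering $\mathbf{H}$, so Hanson--Wright alone does not close the argument, and the identification of $\lim \tfrac{1}{np}\tr(\mSigma\mathbf{H}^{-1})$ with $\bar{\gamma}$ requires its own leave-one-out argument inside the trace. This is precisely the difficulty the paper's device is designed to avoid: by treating the left-out loss as part of the penalty, the CGMT equivalence of Theorem~\ref{thm:salehi-logistic} characterizes the joint dependence of $\widehat{\vbeta}$ on $(\vx_i, y_i)$ in one stroke, with no perturbation estimate needed (the paper only has to check that the modification does not shift the limiting fixed point, which follows because a single summand is asymptotically negligible in each of the six equations). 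If you wish to complete your route as stated, you would essentially be reproving a known but lengthy result; if you adopt the paper's reparameterization of the regularizer, your scalar identity and your monotonicity/Lipschitz observations slot in directly as the final step.
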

\begin{proof}
We first make a leave-one-out modification the optimization problem for a general loss and regularizer:
\begin{align}
    \widehat{\vbeta} = \mSigma^{-1/2} \cdot \argmin_{\widetilde{\vbeta}} \frac{1}{n} \sum_{i' \neq i}^n \ell(y_{i'}, \widetilde{\vx}_{i'}^\transp \widetilde{\vbeta}) + \frac{\lambda}{p} \overline{\Omega}(\widetilde{\vbeta}),
\end{align}
where 
\begin{align}
    \overline{\Omega}_i(\widetilde{\vbeta}) = \widetilde{\Omega}(\widetilde{\vbeta}) + \frac{1}{\lambda \delta} \ell(y_i, \widetilde{\vx}_i^\transp \widetilde{\vbeta}).
\end{align}
Applying Theorem~\ref{thm:salehi-logistic} to this problem, the solution is  equivalent to one of the form
\begin{align}
    \widehat{\vbeta}_\mathrm{equiv} = \mSigma^{-1/2} \cdot \prox{t \overline{\Omega}_i}{a \widetilde{\vbeta}^* + b \vg},
\end{align}
where $t = \lambda \bar{\sigma} \bar{\tau}$, $a = \bar{\sigma} \bar{\tau} \bar{\theta}$, and $b = \bar{\sigma} \bar{\tau} \bar{r} / \sqrt{\delta}$. This proximal operator is the solution $\vw^*$ to the equation
\begin{align}
    t \nabla \widetilde{\Omega}(\vw^*) + \frac{t}{\lambda \delta} \ell'(y_i, \widetilde{\vx}_i^\transp \vw^*) \widetilde{\vx}_i + \vw^* - (a \widetilde{\vbeta}^* + b \vg) = 0,
\end{align}
where $\ell'(y_i, z) = \partial \ell(y_i, z) / \partial z$. Note that this is equivalent to
\begin{align}
    \vw^* = \prox{t \widetilde{\Omega}}{a \widetilde{\vbeta}^* + b \vg - \frac{t}{\lambda \delta} \ell'(y_i, \widetilde{\vx}_i^\transp \vw^*) \widetilde{\vx}_i}.
\end{align}
Here we specialize to the ridge penalty, but this can be extended to separable regularizers with careful application of Stein's lemma. Plugging in the form of the proximal operator for generalized ridge penalties, we have
\begin{align}
    \vw^* = \mSigma \inv{\mSigma + t \mI_p} \paren{a \widetilde{\vbeta}^* + b \vg - \frac{t}{\lambda \delta} \ell'(y_i, \widetilde{\vx}_i^\transp \vw^*) \widetilde{\vx}_i}.
\end{align}
We wish to characterize $\vx_i^\transp \widehat{\vbeta}$, which is equivalent to characterizing $\vx_i^\transp \widehat{\vbeta}_\mathrm{equiv} = \widetilde{\vx}_i^\transp \vw^*$. Firstly, we note that for any random vector $\vu$ such that $\smallnorm[2]{\vu}^2 / \sqrt{p} \to C_\vu < \infty$ that is independent of $\widetilde{\vx}_i$,
\begin{align}
    \frac{1}{p} \vu^\transp \vw^* \pconv \frac{1}{p} \vu^\transp \mSigma \inv{\mSigma + t \mI_p} \paren{a \widetilde{\vbeta}^* + b \vg}.
\end{align}
Appealing to Theorem~\ref{thm:salehi-logistic} again, this means that the nonlinear system is in fact unaffected by our leave-one-out modification asymptotically, and that both cases have the same solution $(\bar{\alpha}, \bar{\sigma}, \bar{\gamma}, \bar{\theta}, \bar{\tau}, \bar{r})$ to the nonlinear system \eqref{eq:salehi-fixed-point}. Therefore,  
\begin{align}
    \vx_i^\transp \widehat{\vbeta}_{-i} \dconv \widetilde{\vx}_i^\transp \mSigma \inv{\mSigma + t \mI_p} \paren{a \widetilde{\vbeta}^* + b \vg} \sim \normal(0, \kappa^2 \bar{\alpha}^2 + \bar{\sigma}^2).
\end{align}
Since $\vg/\sqrt{p}$ and $\widetilde{\vx}_i$ have the same distribution, from the second equation in the nonlinear system \eqref{eq:salehi-fixed-point} we know that
\begin{align}
    \widetilde{\vx}_i^\transp \mSigma \inv{\mSigma + t \mI_p} \widetilde{\vx}_i \asconv \frac{1}{p} \vg^\transp \mSigma \inv{\mSigma + t \mI_p} \vg = \frac{\bar{\gamma} \delta}{\bar{\sigma} \bar{\tau}}.
\end{align}
All together, this gives us
\begin{align}
    \vx_i^\transp \widehat{\vbeta} &\pconv \widetilde{\vx}_i^\transp \mSigma \inv{\mSigma + t \mI_p} \paren{a \widetilde{\vbeta}^* + b \vg} - \bar{\gamma} \ell'(y_i, \vx_i^\transp \widehat{\vbeta}) \\
    \implies 
    \vx_i^\transp \widehat{\vbeta} &\dconv \prox{\bar{\gamma} \ell(y_i, \vx_i^\transp \widehat{\vbeta})}{\vx_i^\transp \widehat{\vbeta}_{-i}},
\end{align}
which is the stated result.
\end{proof}

\section{Formal version of Theorem~\ref{thm:logistic-high-dim-mi} and proof}
\label{app:proof-logistic-high-dim-mi}

Theorem~\ref{thm:logistic-high-dim-mi} is a slightly informal version of the following theorem. The only difference is technical, as we must assume the convergence of the nonlinear system~\eqref{eq:salehi-fixed-point} for part (a). The convergence of MI advantage to 1 of part (b) of Theorem~\ref{thm:logistic-high-dim-mi} is implied by part (a) of the following theorem.
\begin{theorem}
\label{thm:logistic-high-dim-mi-formal}
Consider the solution $\widehat{\vbeta}$ to the optimization problem in \eqref{eq:logistic-ridge}. Then
\begin{enumerate}[label=(\alph*)]
    \item If the result of Theorem~\ref{thm:leave-one-out-ridge} holds and the minimum eigenvalue of $\mSigma$ is lower bounded by a positive constant for sufficienly small $\delta$, then as $\delta \to 0$, $\adv(A) \to 1$.
    \item For the bilevel model in \eqref{eq:bilevel-model}, if $p / d \to \phi \in (1, \infty)$ and $d/n$ converges to a fixed value, then in the limit as $\lambda \to \infty$, $\error(f)$ is decreasing in $\phi$.
\end{enumerate}
\end{theorem}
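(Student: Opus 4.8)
I will handle parts (a) and (b) separately; both reduce $\hat{f}$ to a simple limiting object (a perfect interpolator in (a), the ``averaging''/one-gradient-step classifier in (b)) and then compute directly, combining the leave-one-out characterization of Theorem~\ref{thm:leave-one-out-ridge} with the error formula of Corollary~\ref{cor:error-and-mi}.

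\textbf{Part (a).} The intuition is that as $\delta\to 0$ the model is so overparameterized that it drives every training loss to $0$, yet because the ridge penalty stays active it extracts essentially no signal, so the test output collapses to $0$; this separates the training and test loss distributions and makes any fixed-threshold adversary with $\tau\in(0,\log 2)$ perfect. Concretely, I would: (i) upper-bound the optimal value of \eqref{eq:logistic-ridge} by a near-interpolator — since $p>n$ and $\liminf_p\lambda_{\min}(\mSigma)>0$, for each $M>0$ there is $\vbeta_0$ with $\vx_i^\transp\vbeta_0=M(2y_i-1)$ for all $i$, and the least-norm such vector satisfies $\norm[2]{\vbeta_0}^2/p = O(M^2\delta)$ by the Bai--Yin lower bound on the least singular value of the sub-Gaussian design, so its objective is $\log(1+e^{-M})+O(\lambda M^2\delta)$; taking $M=M(\delta)\to\infty$ with $M^2\delta\to 0$ sends the optimal objective to $0$, hence $\frac1n\sum_i\ell(y_i,\vx_i^\transp\widehat{\vbeta})\to 0$ and $\norm[2]{\widehat{\vbeta}}^2/p\to 0$. (ii) The former, with Markov, forces the fraction of training points with loss $<\tau$ to $1$ (true positive rate $\to 1$); equivalently, plugging it into the prox first-order condition $\bar\gamma\,\ell'(y_i,\vx_i^\transp\widehat{\vbeta})=\vx_i^\transp\widehat{\vbeta}_{-i}-\vx_i^\transp\widehat{\vbeta}$ from Theorem~\ref{thm:leave-one-out-ridge} shows $\bar\gamma\to\infty$. (iii) The latter makes the test output $\vx^\transp\widehat{\vbeta}\mid\widehat{\vbeta}\sim\normal(0,\tfrac{1}{p}\widehat{\vbeta}^\transp\mSigma\widehat{\vbeta})$ concentrate at $0$, so the test loss $\to\ell(y,0)=\log 2$ in probability and the false positive rate $\Pr(\ell(y,\vx^\transp\widehat{\vbeta})<\tau)\to 0$ for $\tau<\log 2$. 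Combining, $\adv(A)=\mathrm{TPR}-\mathrm{FPR}\to 1$.

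\textbf{Part (b).} The plan is to identify the $\lambda\to\infty$ limit of $\hat{f}$ with the ``averaging'' classifier and to compute its bi-level error in closed form. Since $\ell'(y,0)=\rho'(0)-y=\tfrac{1}{2}-y$, expanding \eqref{eq:logistic-ridge} around $\vbeta=\vzero$ gives $\widehat{\vbeta}\propto\sum_i(y_i-\tfrac{1}{2})\vx_i$ to leading order as $\lambda\to\infty$, so (the scaling being irrelevant to $\ind\{\vx^\transp\widehat{\vbeta}>0\}$) the limiting classifier is $\vx\mapsto\ind\{\vx^\transp\sum_i(y_i-\tfrac{1}{2})\vx_i>0\}$; I would confirm this degeneration either directly or by taking $\lambda\to\infty$ in the fixed-point system of Theorem~\ref{thm:salehi-logistic} and checking that the resulting $(\bar\alpha,\bar\sigma)$ are exactly this estimator's signal/noise parameters. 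Conditioning on a fresh $(\vx,y)$ with $s:=\vx^\transp\vbeta^*$, the statistic $T=\sum_i(y_i-\tfrac{1}{2})\vx_i^\transp\vx$ is asymptotically Gaussian given $(\vx,s)$: its conditional mean, obtained from $\mathrm{Cov}(\vx_i^\transp\vbeta^*,\vx_i^\transp\vx)$ via Stein's lemma, is proportional to $s$ with a $\phi$-independent constant (the $p/d$ scaling of the signal block of $\mSigma$ cancels), while its conditional variance, reduced via the identity $(y_i-\tfrac{1}{2})^2=\tfrac{1}{4}$ to $\tfrac{1}{4}\cdot\tfrac{1}{p}\vx^\transp\mSigma\vx$, is a $\phi$-independent signal contribution plus a tail contribution of order $1/\phi$ (proportional to $\eta^2/(\phi-1)$). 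Hence the signal-to-noise ratio $\alpha/\sigma$ is strictly increasing in $\phi$. Finally, the error formula of Corollary~\ref{cor:error-and-mi}(i), $\error(\hat{f})=\expect{\rho'(Z)\Phi(-\tfrac{\alpha Z}{\sigma})}$ with $Z\sim\normal(0,\sigma_\beta^2)$, is strictly decreasing in $t:=\alpha/\sigma$, since $\tfrac{d}{dt}\expect{\rho'(Z)\Phi(-tZ)}=-\expect{Z\rho'(Z)\Phi'(tZ)}=-\expect{Z(\rho'(Z)-\tfrac{1}{2})\Phi'(tZ)}<0$ because $z(\rho'(z)-\tfrac{1}{2})>0$ for $z\neq 0$ and the remaining odd part integrates to zero against the even weight $\Phi'(tz)$; composing, $\error(\hat{f})$ is decreasing in $\phi$.

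\textbf{Main obstacle.} The crux is the quantitative $\delta\to 0$ analysis underlying part (a): making rigorous, within the proportional-asymptotics framework and while interchanging the $p\to\infty$ and $\delta\to 0$ limits, that $\bar\gamma\to\infty$ and that the output variance $\kappa^2\bar\alpha^2+\bar\sigma^2\to 0$, and in particular controlling the spectrum of $\mSigma$ well enough (beyond the stated lower bound on $\lambda_{\min}$) that $\tfrac{1}{p}\widehat{\vbeta}^\transp\mSigma\widehat{\vbeta}\to 0$ follows from $\tfrac{1}{p}\norm[2]{\widehat{\vbeta}}^2\to 0$. For part (b) the analogous but milder point is justifying that the asymptotic error is continuous in $\lambda$ up to the degenerate $\lambda=\infty$ limit, so that the averaging-classifier computation genuinely describes $\lim_{\lambda\to\infty}\error(\hat{f})$.
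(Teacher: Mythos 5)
Your proposal reaches the right conclusions but by a genuinely different route from the paper, most notably in part (a). The paper never touches the finite-sample optimization problem directly: it works entirely inside the nonlinear system \eqref{eq:salehi-fixed-point}, first showing $\lambda\bar{\gamma} \geq c/\delta$ (so $\bar{\gamma}\to\infty$) from the second and sixth equations together with the lower bound on $\lambda_{\min}(\mSigma)$, then showing the test-output variance $\kappa^2\bar{\alpha}^2+\bar{\sigma}^2\to 0$ by bounding $\bar{r}$ and $\bar{\theta}$ via a Lambert-$W$ tail estimate for $\prox{\gamma\rho}{\cdot}$, and finally converting these two facts into $\adv(A)\to1$ through two elementary lemmas about the proximal operator of the logistic loss. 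Your variational argument (exhibiting a near-interpolator with $\norm[2]{\vbeta_0}^2/p = O(M^2\delta)$ to force the optimal objective to zero, then Markov for the TPR and Gaussian conditioning for the FPR) is more elementary and sidesteps the fixed-point system entirely. But the obstacle you flag at the end is a genuine gap under the theorem's stated hypotheses: your step (iii) needs $\tfrac{1}{p}\widehat{\vbeta}^\transp\mSigma\widehat{\vbeta}\to 0$ to follow from $\tfrac{1}{p}\norm[2]{\widehat{\vbeta}}^2\to 0$, which requires an upper bound on the spectrum of $\mSigma$ that the theorem does not assume. The paper's route avoids this entirely because the variance appears through the resolvent $\mSigma^2(\mSigma+\lambda\sigma\tau\mI_p)^{-2}$, and $\frac{(\sigma\tau)^2 s^2}{(s+\lambda\sigma\tau)^2}\leq 1/\lambda^2$ uniformly over all eigenvalues $s>0$ with no upper spectral control needed. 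So your approach, as written, proves a slightly weaker statement (or the same statement under an added assumption such as $\lambda_{\max}(\mSigma)=O(1)$); to match the paper you would need to replace the crude bound $\lambda_{\max}(\mSigma)\cdot\tfrac{1}{p}\norm[2]{\widehat{\vbeta}}^2$ by something resolvent-based. The claim that the prox first-order condition ``shows $\bar{\gamma}\to\infty$'' is also not needed for your argument and is not itself a proof.

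For part (b) your route is again packaged differently but lands in the same place. The paper takes $\lambda\to\infty$ directly in the six fixed-point equations, obtaining $\bar{\sigma}\bar{\tau}\to4$, $\theta=\expect{\rho''(-\kappa Z_1)}$, and $\bar{\alpha}^2/\bar{\sigma}^2 = \frac{4\theta^2\delta/\phi}{1+\eta^2/(\phi-1)}$, which is increasing in $\phi$ at fixed $d/n=\delta/\phi$. Your identification of the $\lambda\to\infty$ limit with the one-step ``averaging'' classifier $\sum_i(y_i-\tfrac12)\vx_i$, plus the Stein's-lemma mean and the $(y_i-\tfrac12)^2=\tfrac14$ variance computation, reproduces exactly this signal-to-noise ratio (your $\expect{\rho''}$ factor is the paper's $\theta$), and you correctly locate the entire $\phi$-dependence in the vanishing tail-variance term. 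Your closing monotonicity argument for $\expect{\rho'(Z)\Phi(-tZ)}$ in $t$ (splitting off the odd part and using $z(\rho'(z)-\tfrac12)>0$) is a nice explicit justification of a step the paper only asserts (``it can be shown that this is decreasing in $\alpha/\sigma$''). The one thing you would still owe, as you note, is the continuity/interchange argument confirming that the averaging classifier really is the $\lambda\to\infty$ limit of the fixed-point solution; the paper handles this by performing the limit inside the system itself.
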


This theorem makes claims of two natures: that MI advantage of the adversary goes to 1, and that generalization error is decreasing. For the former, we will show that the output distributions diverge for train and test points such that it becomes trivial to distinguish between the two distributions, and for the latter, we will determine the form of the generalization error and show that it is decreasing in the proposed variable.

\subsection{Part (a): membership inference advantage}

We will assume the notation and setting from the proof of Theorem~\ref{thm:leave-one-out-ridge}. When rewriting equations from \eqref{eq:salehi-fixed-point}, we will omit the bars (e.g., $\bar{\gamma}$ in the next section) when describing general implications of the equations, and then use bars to describe conclusions about the \emph{unique} fixed point solution that characterizes the limiting estimator, which we assumed to exist in applying Theorem~\ref{thm:salehi-logistic}.

\subsubsection{Growth of $\bar{\gamma}$}

First, we show that $\bar{\gamma}$, the scaling factor of the proximal operator in Theorem~\ref{thm:leave-one-out-ridge}, tends to infinity as $\delta \to 0^+$. This will drive training points to be much different from test points as long as the test point distribution variance doesn't increase.
From the sixth equation in the nonlinear system~\eqref{eq:salehi-fixed-point}, since the right hand side is greater than 0 and the fixed point variables are non-negative, we can conclude that $\sigma \tau > \gamma$. We can combine this with the second equation to yield 
\begin{align}
    \gamma &= \frac{1}{p} \vg^\transp \mSigma \inv{\mSigma + \lambda \sigma \tau \mI_p} \vg \frac{\sigma \tau}{\delta} \\
    &= \frac{1}{\lambda \delta p} \vg^\transp \inv{\frac{1}{\lambda \sigma \tau} \mI_p + \mSigma^{-1}} \vg \\
    &> \frac{1}{\lambda \delta p} \vg^\transp \inv{\frac{1}{\lambda \gamma} \mI_p + \mSigma^{-1}} \vg \\
    &\asconv \frac{1}{\lambda \delta p}
    \tr \bracket{\inv{\frac{1}{\lambda \gamma} \mI_p + \mSigma^{-1}}}
\end{align}
Because the smallest eigenvalue $\lambda_{\min{}}(\mSigma) > 0$, this implies that
\begin{align}
    \lambda \gamma > \frac{1}{\delta} \frac{1}{\frac{1}{\lambda \gamma} + \frac{1}{\lambda_{\min{}}(\mSigma)}} \implies \frac{\lambda \gamma}{\lambda_{\min{}}(\mSigma)} > \frac{1}{\delta} - 1.
\end{align}
Therefore, asymptotically, there exists a constant $c_{\bar{\gamma}} > 0$ such that for sufficiently small $\delta$, we have $\lambda \bar{\gamma} \geq c_{\bar{\gamma}} / \delta$, so $\bar{\gamma} \to \infty$ as $\delta \to 0^+$.

\subsubsection{Vanishing of output variance.}

We next argue that $\kappa^2 \bar{\alpha}^2 + \bar{\sigma}^2$ tends to 0 as $\delta \to 0$. We remind the reader that as in the proof of Theorem~\ref{thm:leave-one-out-ridge}, this is the variance of $\vx_i^\transp \widehat{\vbeta}_{-i}$, which is also equal to the variance of the output for an unseen test point.

First, we consider the fourth equation in the nonlinear system~\eqref{eq:salehi-fixed-point}. Applying the first-order optimality condition of the proximal operator, this is equivalent to
\begin{align}
    \label{eq:r-squared-upper-bound}
    r^2 = 2 \expect{\rho'(-\kappa Z_1) \rho' \paren{\prox{\gamma \rho}{\kappa \alpha Z_1 + \sigma Z_2}}^2} \leq 2.
\end{align}
Similarly, the fifth equation can be written as
\begin{align}
    \theta &= \frac{-2}{\gamma} \expect{\rho''(-\kappa Z_1) \paren{\kappa \alpha Z_1 + \sigma Z_2 - \gamma \rho' \paren{\prox{\gamma \rho}{\kappa \alpha Z_1 + \sigma Z_2}}}} \\
    \label{eq:theta-upper-bound}
    &= 2 \expect{\rho''(-\kappa Z_1) \rho' \paren{\prox{\gamma \rho}{\kappa \alpha Z_1 + \sigma Z_2}}} \\
    & \leq \frac{1}{2},
\end{align}
where we have used the fact that the expectation of any odd function of a standard normal variable is zero, and that $\rho''(u) \leq 1/4$ for all $u \in \reals$. Thus, both $r$ and $\theta$ are upper bounded by constants. Let us now consider the third equation.
\begin{align}
    \kappa^2 \alpha^2 + \sigma^2
    &= \frac{(\sigma \tau)^2}{p} 
    (\theta \widetilde{\vbeta}^* + \frac{r}{\sqrt{\delta}} \vg)^\transp
    \mSigma^2 \paren{\mSigma + \lambda \sigma \tau \mI_p}^{-2}
    (\theta \widetilde{\vbeta}^* + \frac{r}{\sqrt{\delta}} \vg) \\
    \label{eq:variance-upper-bound}
    &\leq \frac{1}{\lambda^2} \paren{\kappa^2 \theta^2 + \frac{r^2}{\delta}} \\
    &\leq \frac{1}{\lambda^2} \paren{4 \kappa^2 + \frac{1}{4\delta}}.
\end{align}
Here the first inequality is obtained by letting $\sigma \tau$ tend to infinity, and the second is obtained by applying our upper bounds for $\theta$ and $r$.
Therefore, for sufficiently small $\delta$, there exists $c_1$ such that
$\kappa^2 \alpha^2 + \sigma^2 \leq c_1^2 / \delta$.

We now wish to return to \eqref{eq:r-squared-upper-bound} and \eqref{eq:theta-upper-bound} to determine tighter upper bounds. To that end, we first prove the following lemma
\begin{lemma}
\label{lem:prox-logistic-bound}
Let $Z$ be a standard normal random variable. For any $a_0, b_0 > 0$, there exist $\delta_0 > 0$ and $c > 0$ such that for all $a \geq a_0$, $b \leq b_0$, and $0 < \delta < \delta_0$,
\begin{align}
    \Pr \paren{\prox{a \rho / \delta}{\frac{b Z}{\sqrt{\delta}}} > \log \paren{c \delta \log (1  / \delta)}} \leq \delta^2.
\end{align}
\end{lemma}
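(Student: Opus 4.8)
The plan is to reduce the claim to a one‑dimensional Gaussian tail estimate using monotonicity of the proximal operator. For a convex $g$, the scalar map $v \mapsto \prox{g}{v}$ is a continuous, strictly increasing bijection of $\reals$ (the objective defining it is strictly convex in $w$, and $g'(w)+w$ sweeps all of $\reals$ here). Since $Z$ and $-Z$ have the same law we may assume $b \geq 0$, so that $Z \mapsto \prox{a\rho/\delta}{bZ/\sqrt{\delta}}$ is non‑decreasing. Writing $w^\star := \log(c\delta\log(1/\delta))$ and letting $v^\star$ be the unique point with $\prox{a\rho/\delta}{v^\star} = w^\star$, the event in the lemma is exactly $\{Z > Z^\star\}$ with $Z^\star := \sqrt{\delta}\,v^\star/b$ (the degenerate case $b=0$ reduces to the deterministic inequality $\prox{a\rho/\delta}{0} \leq w^\star$, which holds for the same choice of $c$). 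By the standard bound $\Pr(Z > t) \leq \exp(-t^2/2)$ for $t \geq 0$, it then suffices to prove $\delta\,(v^\star)^2 \geq 4\,b^2\log(1/\delta)$ for all sufficiently small $\delta$, uniformly over $a \geq a_0$ and $0 \leq b \leq b_0$, since this yields $\Pr(Z > Z^\star) \leq \exp(-2\log(1/\delta)) = \delta^2$.

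The next step is to pin down $v^\star$ via the first‑order optimality condition for the proximal operator: $v^\star = w^\star + (a/\delta)\,\rho'(w^\star)$, where $\rho'(w) = 1/(1+e^{-w})$. Since $e^{w^\star} = c\delta\log(1/\delta)\to 0$, we have $\rho'(w^\star) = e^{w^\star}(1+o(1))$, and because the multiplier $a/\delta$ is enormous the term $(a/\delta)\,\rho'(w^\star)$ dominates $w^\star$; hence for small $\delta$ the preimage $v^\star$ is positive and comparable to $(a/\delta)\,\rho'(w^\star)$. In words, the displacement $v^\star - w^\star$ produced by the prox is essentially the weight $a/\delta$ times the sigmoid evaluated at the threshold, and this is the quantity that must be shown to lie far out in the tail of $Z$ after rescaling by $\sqrt{\delta}/b$.

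The main obstacle is this last uniform quantitative matching: one must exhibit a single constant $c$ and a single $\delta_0$ so that the lower bound extracted for $v^\star$ from the first‑order condition is large enough relative to $b_0\sqrt{\log(1/\delta)}/\sqrt{\delta}$ that $\delta\,(v^\star)^2 \geq 4\,b^2\log(1/\delta)$ holds for every $a \geq a_0$, $0 \leq b \leq b_0$, and $0 < \delta < \delta_0$. This forces one to track carefully how $\rho'(w^\star)$, hence $v^\star = w^\star + (a/\delta)\rho'(w^\star)$, scales jointly in $c$, $\delta$, $a$ and $b$, and in particular to exploit that with a weight as large as $a/\delta$ the proximal operator drives even an argument of size $O(1/\sqrt{\delta})$ sharply downward — this is precisely what allows the Gaussian tail to close with a fixed $c$. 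All remaining ingredients — continuity and monotonicity of the prox, the first‑order condition, and the Gaussian tail bound — are routine.
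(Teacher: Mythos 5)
Your skeleton---strict monotonicity of the scalar proximal map, the first-order condition $v = w + (a/\delta)\rho'(w)$, and a Gaussian tail bound---is the same as the paper's; the paper runs it forward (restrict to the event $Z \le 4\log(1/\delta)$, which has probability at least $1-\delta^2$, then upper-bound $\prox{a\rho/\delta}{4b\log(1/\delta)/\sqrt{\delta}}$ by solving the first-order condition via the Lambert $W$ function), while you run it backward through the preimage $v^\star$ of the threshold. The two directions are equivalent, so the route itself is fine. The problem is that your proposal stops at exactly the step that constitutes the entire proof: you never establish $\delta(v^\star)^2 \ge 4b^2\log(1/\delta)$, you only assert that careful bookkeeping will yield it. It will not. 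From your own first-order condition, with $w^\star = \log(c\delta\log(1/\delta))$ we have $\rho'(w^\star) \le e^{w^\star} = c\delta\log(1/\delta)$, hence $v^\star = w^\star + (a/\delta)\rho'(w^\star) \le ac\log(1/\delta)$, so $v^\star = O(\log(1/\delta))$ for any fixed $a$ and $c$. Your sufficient condition requires $v^\star \ge 2b\sqrt{\log(1/\delta)}/\sqrt{\delta}$, which grows like $\delta^{-1/2}$. Consequently $\delta(v^\star)^2 = O(\delta\log^2(1/\delta)) \to 0$ while $4b^2\log(1/\delta) \to \infty$: no fixed $c$ closes the tail. Concretely, with $a=b=c=1$ and $\delta=10^{-6}$ one finds $v^\star \approx 2.5$, so $Z^\star = \sqrt{\delta}\,v^\star/b \approx 2.5\times 10^{-3}$ and $\Pr(Z > Z^\star) \approx 1/2$, nowhere near $\delta^2$.

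What your computation shows, once actually carried out, is that the tail closes only if the threshold is weakened to roughly $\log(c\sqrt{\delta}\log(1/\delta))$, so that $(a/\delta)e^{w^\star}$ is of order $\sqrt{\log(1/\delta)/\delta}$ and matches the Gaussian quantile after rescaling by $\sqrt{\delta}/b$. This is also all that the paper's own Lambert-$W$ estimate delivers: its penultimate display bounds $w^*$ by $\log$ of a quantity of order $\sqrt{\delta}\log(1/\delta)$, and the final passage to $\log(c\delta\log(1/\delta))$ does not follow for small $\delta$. So the obstacle you flagged as ``the main uniform quantitative matching'' is not a routine verification to be deferred---it is the whole content of the lemma, and as you have set it up it cannot be discharged for the stated threshold. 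A complete argument must either prove the weaker $\sqrt{\delta}$ version and verify that it still suffices where the lemma is invoked downstream, or take a genuinely different route.
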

\begin{proof}
We begin by observing that is sufficient to prove the claim for $a = a_0$ and $b = b_0$, since the probability is monotonically decreasing and increasing, respectively, in each variable for sufficiently small $\delta$.
By standard Gaussian tail bounds, for sufficiently small $\delta$, 
\begin{align}
    \Pr(Z > 4 \log (1/\delta)) \leq \delta^2.
\end{align}
The proximal operator is a strictly increasing function of $Z$, so we can determine the bound on its tail by determining an upper bound on $\prox{a \rho / \delta}{\frac{4b \log (1/\delta))}{\sqrt{\delta}}}$.
The first-order optimality condition for the proximal operator is
\begin{align}
    w^* = \frac{4b \log (1/\delta))}{\sqrt{\delta}} - \frac{a}{\delta} \rho'(w^*).
\end{align}
It is clear that for sufficiently small $\delta$, $w^* < 0$, since $\rho'(u) \geq 1/2$ for $u \geq 0$. Therefore, since $\rho'(u) = e^u / (1 + e^u)$, there exists $c_\delta \in (1/2, 1)$ such that $\rho'(w^*) = c_\delta e^{w^*}$. We can then solve for and bound $w^*$ for some $c > 0$ and sufficiently small $\delta$ as
\begin{align}
    w^* &= \frac{4b \log (1/\delta))}{\sqrt{\delta}} - W_0\paren{ \frac{a c_\delta}{\delta} \exp\paren{\frac{4b \log (1/\delta))}{\sqrt{\delta}}}} \\
    &\leq -\log \paren{\frac{a c_\delta}{\delta}} + \log \paren{\frac{4b \log (1/\delta))}{\sqrt{\delta}} + \log \paren{\frac{a c_\delta}{\delta}}} \\
    &\leq \log ( c \delta \log(1/\delta)),
\end{align}
where $W_0$ is the principal branch of the Lambert $W$ function, and the first inequality follows from the lower bound $W_0(x) \geq \log x - \log \log x$ for $x \geq e$.
Let $\delta_0$ be a sufficiently small so that the above arguments hold, and the claim is proved.
\end{proof}

Applying Lemma~\ref{lem:prox-logistic-bound} with $a_0 = c_{\bar{\gamma}}$ and $b_0=c_1$ to \eqref{eq:r-squared-upper-bound}, we can use the facts that $\rho'(u) \leq 1$ and that $\rho'(u) \leq e^{u}$ to obtain for some $c_r > 0$
\begin{align}
    r^2 \leq 2 \paren{c_r^2 \delta^2 \log^2(1 / \delta) + \delta^2}.
\end{align}
Thus for some $c_{\bar{r}} > 0$, $\bar{r} \leq c_{\bar{r}} \delta \log(1/\delta)$ for sufficiently small $\delta$. We then apply Lemma~\ref{lem:prox-logistic-bound} to \eqref{eq:theta-upper-bound} to similarly obtain for some $c_\theta > 0$
\begin{align}
    \theta \leq \frac{1}{2} \paren{c_\theta \delta \log (1 / \delta) + \delta^2}
\end{align}
Thus for some $c_{\bar{\theta}} > 0$, $\bar{\theta} \leq c_{\bar{\theta}} \delta \log(1/\delta)$ for sufficiently small $\delta$. Therefore, returning again to \eqref{eq:variance-upper-bound}, there exists some $c_2 > 0$ such that for sufficiently small $\delta$,
\begin{align}
    \kappa^2 \bar{\alpha}^2 + \bar{\sigma}^2 \leq c_2^2 \delta \log^2 (1 / \delta).
\end{align}
Hence the output variance tends to zero as $\delta \to 0^+$.

\subsubsection{Membership inference advantage}

We wrap up the proof by proposing two more lemmas for the proximal operator of the logistic loss
\begin{lemma}
\label{lem:prox-logistic-limit}
Fix $C > 0$. For all $v$ such that $|v| < C$ and $y \in \set{0, 1}$,
\begin{align}
    \lim_{a \to \infty} |\prox{a \ell(y, \cdot)}{v}| = \infty \text{ uniformly},
\end{align}
where $\ell(y, z) = \log(1 + \exp(z)) - yz$ is the logistic loss.
\end{lemma}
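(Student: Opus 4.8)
The statement to prove is Lemma~\ref{lem:prox-logistic-limit}: for fixed $C > 0$, all $|v| < C$, and $y \in \{0,1\}$, $\lim_{a \to \infty} |\prox{a\ell(y,\cdot)}{v}| = \infty$ uniformly, where $\ell(y,z) = \log(1+e^z) - yz$.

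The plan is to work directly from the first-order optimality condition of the proximal operator. Let $w^* = \prox{a\ell(y,\cdot)}{v}$. Since $\ell(y,\cdot)$ is differentiable and convex, $w^*$ is the unique solution to
\begin{align}
    a\,\ell'(y, w^*) + w^* - v = 0,
\end{align}
where $\ell'(y, z) = \rho'(z) - y = \frac{1}{1+e^{-z}} - y$. Rearranging, $\ell'(y, w^*) = (v - w^*)/a$. The strategy is to consider the two cases $y = 0$ and $y = 1$ separately and show $w^* \to -\infty$ and $w^* \to +\infty$, respectively, uniformly over $|v| < C$.

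First I would handle $y = 1$. Here $\ell'(1, z) = \rho'(z) - 1 = -\frac{1}{1+e^z} < 0$, so the optimality condition forces $v - w^* < 0$, i.e. $w^* > v > -C$; in particular $w^*$ is bounded below uniformly. I claim $w^* \to +\infty$: suppose not, so along some sequence $a \to \infty$ we have $w^* \le M$ for some finite $M$. Then $\ell'(1, w^*) = -\frac{1}{1+e^{w^*}} \le -\frac{1}{1+e^M} < 0$ is bounded away from zero, so $|a\,\ell'(1,w^*)| \to \infty$, while $|w^* - v| \le |M| + C$ stays bounded — contradicting the optimality equation. Hence $w^* \to +\infty$. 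For uniformity: since $w^* \ge -C$, we have $\frac{1}{1+e^{w^*}} \le \frac{1}{1+e^{-C}}$, but more usefully, from $\frac{a}{1+e^{w^*}} = w^* - v \le w^* + C$ and $w^* \ge -C$ we can solve: $1 + e^{w^*} = \frac{a}{w^*-v} \ge \frac{a}{w^*+C}$, so $e^{w^*} \ge \frac{a}{w^*+C} - 1$. Combined with a crude upper bound $w^* \le v + a \le C + a$ on the right, this gives $e^{w^*} \ge \frac{a}{2C + a} - 1$... which is too weak — I would instead argue: $w^* - v = \frac{a}{1+e^{w^*}}$ shows $w^* \le v + a$, and plugging back, $e^{w^*} = \frac{a}{w^*-v} - 1 \ge \frac{a}{C + a - v} - 1 \ge \frac{a}{2C+a}-1$, still weak. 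The clean route for uniformity: fix any threshold $T > C$; I want to show $w^* \ge T$ for all $a$ large enough, independent of $v$. If $w^* < T$ then $\ell'(1, w^*) = -\frac{1}{1+e^{w^*}} \le -\frac{1}{1+e^T}$, so $v - w^* = a\,\ell'(1,w^*) \le -\frac{a}{1+e^T}$, giving $w^* \ge v + \frac{a}{1+e^T} \ge -C + \frac{a}{1+e^T}$, which exceeds $T$ once $a > (1+e^T)(T+C)$ — a bound depending only on $T$ and $C$, not on $v$. This establishes uniform divergence. The case $y = 0$ is entirely symmetric: $\ell'(0,z) = \rho'(z) = \frac{1}{1+e^{-z}} > 0$ forces $w^* < v < C$, and the same argument with $w^* \le -T$ shows $w^* \to -\infty$ uniformly (using $\ell'(0, w^*) \ge \frac{1}{1+e^{T}}$ when $w^* \le -T$).

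The main obstacle is not conceptual but bookkeeping: getting the uniformity statement stated cleanly rather than just pointwise convergence. The key realization that makes it painless is that the optimality condition $a\,|\ell'(y,w^*)| = |w^* - v| \le |w^*| + C$, together with the fact that $|\ell'(y,w^*)|$ is bounded below by a positive constant whenever $w^*$ is on the "wrong" bounded side, immediately yields a contradiction with an explicit $a$-threshold depending only on the target level $T$ and the bound $C$. No estimation of Lambert $W$ functions or delicate tail analysis is needed here (unlike in Lemma~\ref{lem:prox-logistic-bound}), because we only need divergence to infinity, not a sharp rate.
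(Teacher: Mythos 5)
Your proof is correct and follows essentially the same route as the paper's: both start from the optimality condition $w^* = v + a\,(y - \rho'(w^*))$ and derive a contradiction from the fact that $y - \rho'(w^*)$ is bounded away from zero whenever $w^*$ stays on a bounded set, so the term $a\,(y-\rho'(w^*))$ must blow up. Your explicit threshold $a > (1+e^T)(T+C)$ makes the uniformity slightly more transparent than the paper's sequence-based contradiction, but the underlying argument is identical.
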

\begin{proof}
The proximal operator $\prox{a \ell(y, \cdot)}{v}$ is the unique solution $w \in \reals$ to the equation
\begin{align}
    w = v + a(y - \rho'(w)).
\end{align}
Consider $y = 1$, and suppose the claim was not true. Then there exists $c_1 > 0$ such that for all $a_0 > 0$, there exists $a > a_0$ and $v \in (-C, C)$ such that $|w| < c_1$. Let $c_2 = \rho'(c_1)$. This implies that
\begin{align}
    c_1 + a c_2 > v + a.
\end{align}
Since $c_2 < 1$, this inequality does not hold for any $a > a_0$ if $a_0$ is sufficiently large, leading to a contradiction. The case for $y = 0$ is entirely analogous if we make the substitution $\rho'(w) = 1 - \rho'(-w)$.
\end{proof}

\begin{lemma}
\label{lem:prox-logistic-threshold}
Let $Z$ be a standard normal random variable. Then for any $\tau > 0$, if $a_n$ and $b_n$ are sequences such that as $n \to \infty$, $a_n \to \infty$ and $b_n \to 0$, then
\begin{align}
    \lim_{n \to \infty} \Pr\paren{|\prox{a_n \ell(y, \cdot)}{b_n Z}| > \tau} - \Pr\paren{|b_n Z| > \tau} = 1,
\end{align}
\end{lemma}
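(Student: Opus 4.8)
The plan is to split the expression into its two probabilities and estimate each one separately, using that $b_n Z \to 0$ makes the argument of the proximal operator concentrate near the origin while $a_n \to \infty$ makes the proximal operator of the logistic loss blow up uniformly on any bounded set, by Lemma~\ref{lem:prox-logistic-limit}.

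First I would dispose of the subtracted term. Since $b_n \to 0$, for any fixed $\tau > 0$ we have $\Pr(|b_n Z| > \tau) = \Pr(|Z| > \tau / b_n) \to 0$ as $n \to \infty$, either by the standard Gaussian tail bound or simply because $b_n Z \to 0$ almost surely and hence in probability. It therefore remains to show that $\Pr(|\prox{a_n \ell(y, \cdot)}{b_n Z}| > \tau) \to 1$.

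For this main term, fix $\epsilon > 0$. Because $b_n \to 0$, there is $N_1$ with $\Pr(|b_n Z| < 1) \geq 1 - \epsilon$ for all $n \geq N_1$. Now apply Lemma~\ref{lem:prox-logistic-limit} with $C = 1$: the uniform divergence statement yields an $a_0 > 0$ (a single threshold valid for every $v$ with $|v| < 1$, and, taking the maximum over $y \in \set{0,1}$, for both labels) such that $a > a_0$ and $|v| < 1$ imply $|\prox{a \ell(y, \cdot)}{v}| > \tau$. Since $a_n \to \infty$, pick $N_2$ with $a_n > a_0$ for $n \geq N_2$. Then for $n \geq \max(N_1, N_2)$, on the event $\set{|b_n Z| < 1}$ we have $|\prox{a_n \ell(y, \cdot)}{b_n Z}| > \tau$, so $\Pr(|\prox{a_n \ell(y, \cdot)}{b_n Z}| > \tau) \geq \Pr(|b_n Z| < 1) \geq 1 - \epsilon$. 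Letting $\epsilon \downarrow 0$ gives $\Pr(|\prox{a_n \ell(y, \cdot)}{b_n Z}| > \tau) \to 1$, and subtracting the two limits ($1$ and $0$) establishes the claim.

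There is no serious obstacle here: the heavy lifting is already contained in Lemma~\ref{lem:prox-logistic-limit}. The one point that must be handled with care is that the convergence in that lemma is genuinely \emph{uniform} over the compact input set $\set{|v| < C}$, not merely pointwise in $v$ — this is exactly what lets us substitute the random argument $b_n Z$ and use one threshold $a_0$ that works simultaneously for every value $b_n Z$ takes on the good event. The first-order optimality characterization $w = v + a(y - \rho'(w))$ used in the proof of that lemma does deliver this uniformity, so the argument goes through.
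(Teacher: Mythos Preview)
Your proposal is correct and follows essentially the same approach as the paper: both arguments use Gaussian concentration to localize $b_n Z$ to a bounded set and then invoke the uniform divergence of Lemma~\ref{lem:prox-logistic-limit} on that set. The only cosmetic differences are that the paper localizes to $\set{|b_n Z| < \tau}$ (rather than your $\set{|b_n Z| < 1}$) and bounds the difference of the two probabilities directly as $\geq 1 - 2 e^{-(\tau/b_n)^2/2}$, whereas you compute the two limits separately; neither choice affects the substance of the argument.
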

\begin{proof}
For sufficiently large $n$, by a standard tail bound for Gaussian variables, with probability at least $1 - e^{-(\tau / b_n)^2 / 2}$, we know that $|b_n Z| < \tau$. Again for sufficiently large $n$, we know that $|\prox{a_n \ell(y, \cdot)}{b_n Z}| > \tau$ for all $|b_n Z| < \tau$ by Lemma~\ref{lem:prox-logistic-limit}.
Thus,
\begin{align}
    \Pr\paren{|\prox{a_n \ell(y, \cdot)}{b_n Z}| > \tau} - \Pr\paren{|b_n Z| > \tau} \geq 1 - 2 e^{-(\tau / b_n)^2 / 2},
\end{align}
which tends to $1$ as $n \to \infty$.
\end{proof}

Applying Lemma~\ref{lem:prox-logistic-threshold} to our problem, using the fact that $\bar{\gamma} \to \infty$ and $\kappa^2 \bar{\alpha}^2 + \bar{\sigma}^2 \to 0$, we see that any adversary that applies a threshold $|\hat{f}(\vx)| > \tau$ for a fixed threshold $\tau$ will achieve MI advantage of 1 as $\delta \to 0$. Any loss-based fixed-threshold adversary inherits this behavior, as for the logistic loss, $\ell(y, \hat{f}(\vx))$ is a monotonically decreasing function of $|\hat{f}(\vx)|$, so thresholding the loss is equivalent to thresholding the magnitude of the model output.

\subsection{Part (b): test accuracy for the bi-level ensemble}

In the bi-level ensemble, when applying Theorem~\ref{thm:salehi-logistic} for $\widetilde{\vbeta}^*$ in place of $\vbeta^*$, asympotically, the first three equations in the nonlinear system~\eqref{eq:salehi-fixed-point} become
\begin{equation}
\left\{
    \begin{aligned}
    \kappa^2 \alpha &= \frac{\sigma \tau \theta \kappa^2}{1 + \frac{\lambda \sigma \tau}{\phi}}, \\
    \gamma &= \frac{\sigma \tau}{\delta} \paren{\frac{1}{\phi + \lambda \sigma \tau} + \frac{\phi - 1}{\phi + \frac{\lambda \sigma \tau}{\eta} (\phi - 1)}}, \\
    \kappa^2 \alpha^2 + \sigma^2 &= \frac{(\sigma \tau \theta \phi \kappa)^2 + (\sigma \tau r)^2 \frac{\phi}{\delta}}{(\phi + \lambda \sigma \tau)^2} + \frac{(\sigma \tau r)^2 \frac{\phi}{\delta} (\phi - 1)}{(\phi + \frac{\lambda \sigma \tau}{\eta} (\phi - 1))^2}.
    \end{aligned}
    \right.    
\end{equation}
As we discussed in the proof of part (a), $r$ and $\theta$ are always upper bounded by constants, so as $\lambda \to \infty$, regardless of the behavior of $\sigma \tau$, the left-hand sides of all three equations tend to zero. For this reason, applying our reformulations of the proximal operators and taking appropriate limits, the last three equations in the nonlinear system become
\begin{equation}
\left\{
    \begin{aligned}
    r^2 &= \frac{1}{4}, \\
    \theta &= \expect{\rho''(-\kappa Z_1)}, \\
    \sigma \tau &= 4.
    \end{aligned}
    \right.    
\end{equation}
These simplifications largely result from applying $\rho'(0) = 1/2$ and appealing to symmetry arguments. The final equation results from the algebraic manipulation
\begin{align}
    \frac{\gamma}{\sigma \tau} &= \expect{2 \rho'(-\kappa Z_1) \paren{1 - \frac{1}{1 + \gamma \rho''\paren{\prox{\gamma \rho}{\kappa \alpha Z_1 + \sigma Z_2}}}}} \\
    &= \expect{2 \rho'(-\kappa Z_1) \frac{\gamma \rho''\paren{\prox{\gamma \rho}{\kappa \alpha Z_1 + \sigma Z_2}}}{1 + \gamma \rho''\paren{\prox{\gamma \rho}{\kappa \alpha Z_1 + \sigma Z_2}}}}.
\end{align}
Now knowing that $\bar{\sigma} \bar{\tau} = 4$, we can consider very large $\lambda \to \infty$ to obtain
\begin{equation}
\left\{
    \begin{aligned}
    \alpha &= \frac{\theta \phi}{\lambda} + o \paren{\tfrac{1}{\lambda}}, \\
    \gamma &= \frac{2}{\lambda \delta} + o \paren{\tfrac{1}{\lambda}}, \\
    \kappa^2 \alpha^2 + \sigma^2 &= \frac{1}{\lambda^2} \paren{(\theta \phi \kappa)^2 + \frac{\phi}{4 \delta} \paren{1 + \frac{\eta^2 }{\phi - 1}}} + o \paren{\tfrac{1}{\lambda}}.
    \end{aligned}
    \right.
\end{equation}
Generalization error equals $\Pr\paren{y \oplus \ind \set{\vx^\transp \widehat{\vbeta} > 0} = 1}$, where $\oplus$ is the exclusive or operator, which by symmetry we can compute as
\begin{align}
    \Pr\paren{y \oplus \ind \set{\vx^\transp \widehat{\vbeta} > 0} = 1} &= 2 \Pr \paren{y = 0, \bar{\alpha} \vx^\transp \vbeta^* + \bar{\sigma} Z > 0} \\
    &= 2 \expect[\vx]{\Pr(y = 0 | \vx^\transp \vbeta^*) \Phi \paren{\frac{\vx^\transp \vbeta^*}{\bar{\sigma} / \bar{\alpha}}}},
    \label{eq:misclassification-error-expectation}
\end{align}
where $\Phi \colon \reals \to [0, 1]$ is the standard normal CDF, and $Z$ is a standard normal random variable. It can be shown that this is decreasing in $\alpha / \sigma$, and from the above, in the limit as $\lambda \to \infty$,
\begin{align}
    \frac{\bar{\alpha}^2}{\bar{\sigma}^2} = \frac{4 \theta^2 \frac{\delta}{\phi}}{1 + \frac{\eta^2}{\phi - 1}},
\end{align}
which is increasing in $\phi$ for fixed $d/n = \delta / \phi$.

\section{Proof of Corollary~\ref{cor:error-and-mi}}
\label{app:membership-advantage}

\begin{proof}
The generalization error result immediately follows from \eqref{eq:misclassification-error-expectation} the previous section, since 
\begin{align}
\kappa^2 = \big\| \widetilde{\vbeta}^* \big\|_2^2 / p = \vbeta^{*\transp} \mSigma \vbeta^* / p \to \sigma_\beta^2. 
\end{align}

For membership advantage, we know from the previous section, Theorem~\ref{thm:leave-one-out-ridge}, and Theorem~\ref{thm:salehi-logistic} that the predictions on training and test points follow
\begin{align}
    \vx_i^\transp \widehat{\vbeta} \dconv \prox{\bar{\gamma}\ell(y_i, \cdot)}{\bar{\alpha} Z_i + \bar{\sigma} W},
    \quad
    \vx^\transp \widehat{\vbeta} \dconv \bar{\alpha} Z + \bar{\sigma} W,
\end{align}
where $\vx_i^\transp \vbeta^* \dconv Z_i$, $\vx^\transp \vbeta^* \dconv Z$, and $W \sim \normal(0, 1)$ is independent of $Z_i$ or $Z$. Here randomness is over the training dataset, so for a fixed $\vbeta^*$ and $\vx_i$ (or $\vx$), we have a fixed $Z_i$ (or $Z$).
Suppose the adversary is given some $\vx'$ and its (noisy) training label $y'$. If $\vx'$ (with corresponding $Z'$) is \emph{not} a training point,
\begin{align}
    \mu_\test (\hat{z} | \vx = \vx', y = y')
    & = \mu_\test(\hat{z} | \vx = \vx') \\
    &= \mu_W\paren{\frac{\hat{z} - \bar{\alpha}Z'}{\bar{\sigma}}} \frac{1}{\bar{\sigma}}.
\end{align}
The first equality is from the independence of the model output and the unused training label, and the second equality comes by the change of variables formula for scalar random variables in terms of $\mu_W$, which is a standard normal Gaussian density. 

If $\vx'$ is a training point, we have the following probability density:
\begin{align}
    \mu_\train (\hat{z} | \vx = \vx', y = y') =
    \mu_W\paren{\frac{g_y(\hat{z}) - \bar{\alpha}Z'}{\bar{\sigma}}} \frac{g_y'(\hat{z})}{\bar{\sigma}}.
\end{align}

Here $g_y(\cdot)$ is the inverse of $\prox{\bar{\gamma}\ell(y, \cdot)}{\cdot}$, which by the first-order optimality condition is
\begin{align}
    g_y(z) = z + \bar{\gamma} (\rho'(z) - y),
    \quad
    g_y'(z) = 1 + \bar{\gamma} \rho''(z).
\end{align}
We remind the reader that $\rho''(z) = \rho'(z) (1 - \rho'(z))$. Therefore, the densities can be easily evaluated by numerical integration.

Since the adversary is given the value of the loss, which is monotonic in $\hat{f}(\vx')$, and knows predicted label $\hat{y}(\vx')$, the adversary is equivalent to an adversary based on $\hat{f}(\vx')$ with the densities described above. The optimal adversary is given by
\begin{align}
    A^*(f, \vx', y') = \ind \set{\mu_\train(\hat{f}(\vx') | \vx = \vx', y = y') > \mu_\test(\hat{f}(\vx') | \vx = \vx', y = y')},
\end{align}
and we can compute its MI advantage specific to $(\vx', y')$ as 
\begin{align}
    \adv(A^*, \hat{f}; \vx', y')
    = \int_\reals \max \set{\mu_\train(z | \vx = \vx', y = y') - \mu_\test(z | \vx = \vx', y = y'), 0} dz,
\end{align}
Additionally, we can numerically evaluate this integral, and then we can compute the average sample-specific membership inference advantage as
\begin{align}
    \adv(A^*, \hat{f}) = \expect[\vx', y']{\adv(A^*, \hat{f}; \vx', y')},
\end{align}
which we can easily compute by numerical integration over the Gaussian density of $Z'$ and the fact that $\Pr(y' = 1 | \vx') = \rho'(Z')$.
\end{proof}

\section{Neural network experimental setup}
\label{sec:experimental_setup}

This section provides details on the NN experiments whose results are shown in Figures \ref{fig:MI_vs_params_nn}, \ref{fig:epochwise}, \ref{fig:tradeoff}, and \ref{fig:fixed_MI_val}. Unless otherwise specified, we use the default hyperparameters and initalizations of Pytorch implementations. The NN experiments are run on our internal servers with the following GPUs: NVIDIA TITAN X (Pascal), NVIDIA GeForce RTX 2080 Ti, NVIDIA TITAN RTX, and NVIDIA A100. The choice of which particular GPU is used for each experiment is decided only based on availability of the GPUs in our internal servers.

\subsection{The MI attack}

The MI attack employed in these experiments is the loss-threshold attack \citep{yeom2018, sablayrolles2019white, ye2021enhanced}. Given a trained NN $f$, the data point of interest $\mathbf{z}_0 = (\mathbf{x}_0, y_0)$, and a loss function $\ell$, the prediction $A(f(\mathbf{x}_0), \mathbf{z}_0)$ of this attack is given by:
\begin{align}
    A(f(x_0), y) = \begin{cases} 
        1 &\mbox{if } \ell(y_0, f(\mathbf{x}_0)) < \tau_{\mathbf{z}_0}\\
        0 &\mbox{otherwise}
    \end{cases},
\end{align}
where $\tau_{\mathbf{z}_0}$ is a calibrated threshold. The threshold is learned with the following procedure. Given a full training dataset $\mc{D}$, we train $n_\text{shadow}$ shadow models on random subsamples of this dataset such that for each $\mathbf{z}_0$ in the full dataset, some models are trained on datasets including $\mathbf{z}_0$ and the rest are trained on datasets that do not include $\mathbf{z}_0$. The shadow models have the same architecture and training procedure as the target models that will be attacked. Let $n_{\text{shadow}, \mathbf{z}_0, m=1}$ and $n_{\text{shadow}, \mathbf{z}_0, m=0}$ denote the (random) numbers of shadow models trained on $\mathbf{z}_0$ and not trained on $\mathbf{z}_0$, respectively.
We then evaluate all these shadow models on $\mathbf{z}_0$ and collect all loss values of the shadow models trained on $\mathbf{z}_0$ into a vector $\mathbf{s}_{\mathbf{z}_0, m=1}$ and the loss values of the shadow models not trained on $\mathbf{z}_0$ into a vector $\mathbf{s}_{\mathbf{z}_0, m=0}$. The membership advantage $\text{Adv}_\text{shadow}$ of a threshold $\hat{\tau}_{\mathbf{z}_0}$ is given by:
\begin{align}
    \text{Adv}_{\text{shadow}, \mathbf{z}_0} = \tfrac{\left|\set{s \in \mathbf{s}_{\mathbf{z}_0, m=1} \colon s < \tau_{\mathbf{z}_0}}\right|}{n_{\text{shadow}, \mathbf{z}_0, m=1}}
        - \tfrac{\left|\set{s \in \mathbf{s}_{\mathbf{z}_0, m=0} \colon s < \tau_{\mathbf{z}_0}}\right|}{n_{\text{shadow}, \mathbf{z}_0, m=0}},
\end{align}
which is simply the difference of the empirical true positive rate and false positive rate. Note that there are many optimal thresholds that maximize $\text{Adv}_\text{shadow}$. Indeed, if $\hat{\tau}_{\mathbf{z}_0}$ is one such optimal threshold, then so is any $\tau \in [s^*_{m=1}, s^*_{m=0}]$, where $s^*_{m=1}$ is the closest element in $\mathbf{s}_{m=1}$ that is less than $\tau_{\mathbf{z}_0}$ and $s^*_{m=0}$ is the closest element in $\mathbf{s}_{m=0}$ that is greater than $\tau_{\mathbf{z}_0}$. Thus, we set the attack's calibrated loss threshold as the midpoint: $\tau_{\mathbf{z}_0} = \frac{1}{2}(s^*_{m=1} + s^*_{m=0})$. This sample-based loss threshold attack, wherein a different threshold is learned for each data point $\mathbf{z}_0$, is the attack we use for the CIFAR10 and Multi30k experiments.

A variation of this attack that we apply for the Purchase100 dataset is the global loss threshold, where $\tau_{\mathbf{z}_0} = \tau$ for every $\mathbf{z}_0$. In words, the same threshold value is applied when attacking the model on any data point. The procedure for threshold calibration is the same, except now $\mathbf{s}_{m=1}$ contains the losses for each of the data points each model was trained on and $\mathbf{s}_{m=0}$ contains the losses for the data points the models were not trained on.

\subsection{Evaluation procedure}

To evaluate the attack, we first randomly subsample a training dataset $\mc{S}$ from the full training dataset $\mc{D}$ and train a target model on $\mc{S}$. Denote by $\bar{\mc{S}}$ the data points in $\mc{D}$ that are not in $\mc{S}$.  We collect the losses $t(\mathbf{z}_0)$ of the target model on each data point $\mathbf{z}_0$ in $\mc{S}$ into a vector $\mathbf{t}_{m=1}$ and the losses of the target model on each data point in $\bar{\mc{S}}$ into a vector $\mathbf{t}_{m=0}$. The membership advantage for the target model is:
\begin{align}
    \text{Adv}_{\text{target}} = \tfrac{\left|\set{t(\mathbf{z}_0) \in \mathbf{t}_{m=1} \colon t(\mathbf{z}_0) < \tau_{\mathbf{z}_0}}\right|}{|\mc{S}|}
        - \tfrac{\left|\set{t(\mathbf{z}_0) \in \mathbf{t}_{m=0} \colon t(\mathbf{z}_0) < \tau_{\mathbf{z}_0}}\right|}{\bar{\mc{S}}}.
\end{align}
We repeat this evaluation procedure $n_{\text{target}}$ times, each time training a new target model on a newly sampled $\mc{S}$. The mean and standard deviation of the membership advantage over all experimental runs is what is reported in the paper figures.

Each shadow and target model is trained for $E$ epochs with checkpoints saved every $C$ epochs, where $E$ and $C$ differ per dataset. For the experiments in Section \ref{sec:params_tradeoff_nn} and Figure \ref{fig:MI_vs_params_nn}, the checkpoints for each shadow and target model that achieves the highest classification accuracy rate on the dataset's validation set is used for the experiment. The results in Figures \ref{fig:epochwise} and \ref{fig:tradeoff} are obtained for each checkpoint. For each curve in Figure \ref{fig:fixed_MI_val}, for all shadow and target models, we use the same number of epochs: the number of epochs (out of the checkpoints acquired) that achieves a membership advantage (Figure \ref{fig:fixed_MI}) or test error (\ref{fig:fixed_val}) closest to the one specified in the figure.

\subsection{Datasets and architectures}

We split each dataset into a ``full training dataset'' and a validation set. The full training dataset contains all the points on which membership inference will be performed. Each shadow and target model will be trained on a sample of the full training dataset such that the full training dataset would always contain both members (training points) and non-members (test points) for each model. The validation set is only used for calculating classification test error.

\textbf{Classification on Purchase100.} The Purchase100 dataset is based on Kaggle’s “acquire valued shoppers” challenge dataset subsequently processed by \cite{shokri2017membership}. It contains 197,324 length-600 binary feature vectors, each belonging to 1 of 100 classes. Each feature vector corresponds to a purchaser, and each entry of the vector corresponds to whether or not a particular product was purchased by the customer. The 100 classes correspond to purchasing styles. We use the first 180,000 data points as the full training dataset and the remaining data points for the validation set. We train two-layer neural networks with hidden dimension $w$, which we vary. We set $n_\text{shadow}=50$ and $n_\text{target}=50$. Each model is trained on a random sample of 10,000 data points. We use the ADAM optimizer \citep{kingma2014adam} with a learning rate of 0.001 for $E=3000$ epochs with checkpoints saved every $V=20$ epochs. For Figure \ref{fig:tradeoff}, we save checkpoints every $V=1$ epoch and only display the results for less than 3000 epochs for better visualization (each curve uses a different number of epochs, according to which provides best visualization).

\textbf{Image classification on CIFAR10.} The CIFAR10 dataset \citep{krizhevsky2009learning} contains 60,000 32$\times$32 RGB images, each belonging to 1 of 10 object classes. We use the 50,000 images in the official training dataset as our full training dataset, and the 10,000 images in the official validation dataset as our validation set. We train ResNet18 models \citep{he2016deep} to perform image classification on the dataset. To vary the models' widths, we follow \cite{nakkiran2021deep} and use convolutional layer widths (number of filters) of $[w, 2w, 4w, 8w]$ for different $w$ values. Note that $w=64$ yields the original ResNet18 architecture. We set $n_{\text{shadow}}=50$ and $n_{\text{target}}=50$, where each model is trained on a random sample of 25,000 images. We train for 50,000 gradient steps using the ADAM optimizer with a batch size of 128 (amounting to $\approx 256$ epochs through the training dataset), a learning rate of 0.0001 and the cross-entropy loss. Data augmentation is a common technique used in image classification, and so we also employ random translations of up to 4 pixels and random horizontal flipping during training, as was done by \cite{nakkiran2021deep}.

\textbf{Language translation on Multi30K.} The Multi30K dataset \citep{elliott2016multi30k} consists of 29,001 pairs of English-German sentences. We perform English to German translation on these sentences using the Transformer architecture \citep{vaswani2017attention}. To vary the models' widths, we follow \cite{nakkiran2021deep} and set the encoder/decoder feature sizes to $w$ and the fully connected layers' dimensions to $4w$ for different values of $w$. We train for 300 epochs using the ADAM optimizer with a learning rate of $0.0001$, a batch size of 128, and the cross-entropy loss over each token. We set $n_\text{shadow}=15$ and $n_\text{target}=15$ and train each model on a random sample of $14,500$ sentence pairs. In calculating the loss of a sentence pair for performing membership inference, we sum the cross-entropy loss values over all tokens in the sentence and divide by the sentence length.

\section{Additional Experiments}
\label{sec:additional-experiments}

\subsection{Blessing of Dimensionality for Multi30K}
In Figure \ref{fig:fixed_MI_val_multi30k}, we show the equivalent of Figure \ref{fig:fixed_MI_val} in the main paper for the transformer architecture on the Multi30k dataset. Similarly to the Purchase100 and CIFAR10 datasets, increasing the width of the neural network here improves either privacy (i.e.\ decreases membership advantage) or test accuracy when holding the other fixed via proper epoch tuning.

\begin{figure}[t]
	\centering
	\begin{subfigure}{0.4\textwidth}
    	\includegraphics[width=0.8\textwidth]{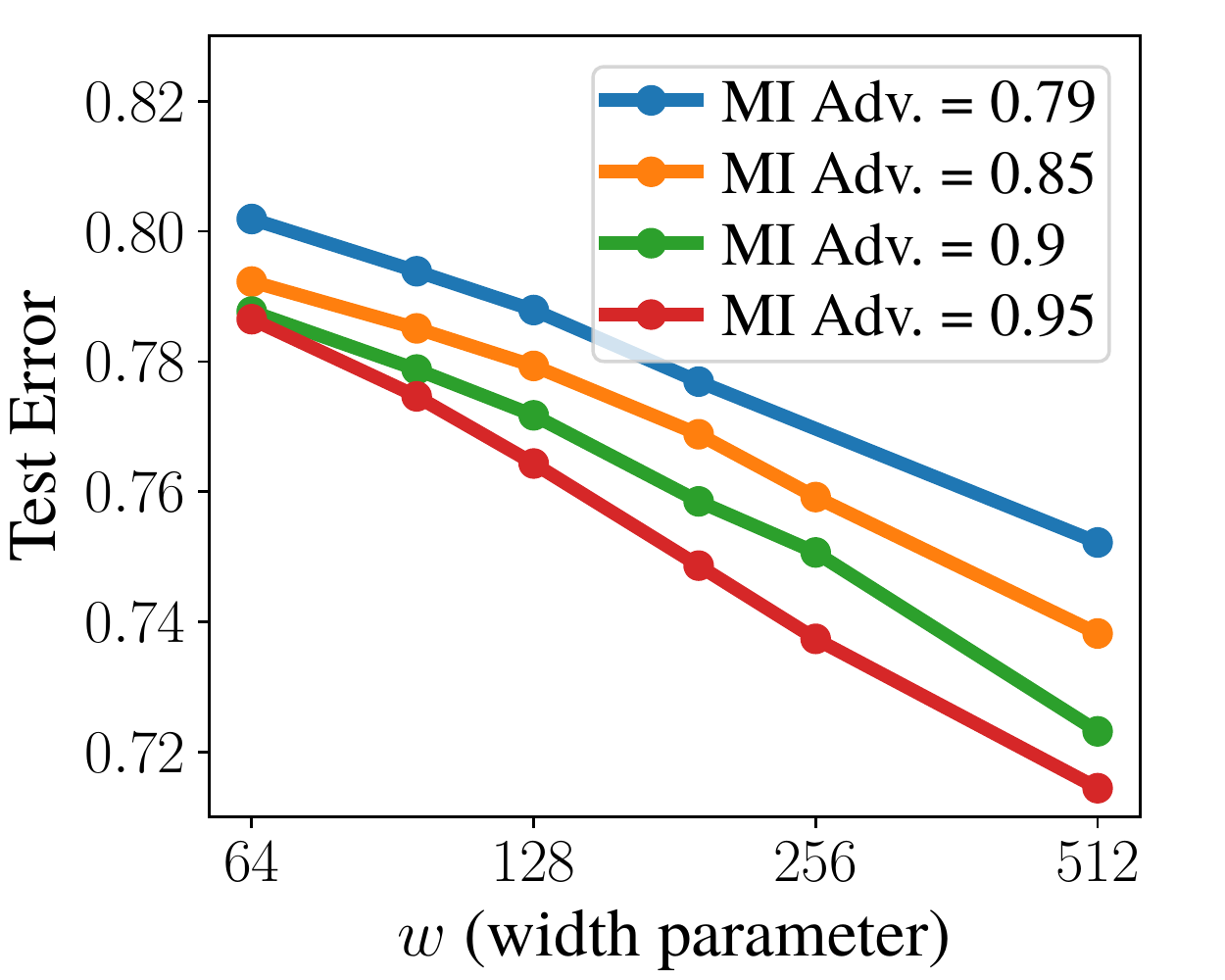}
    	\caption{Test error vs.\ net width for fixed MI adv.}
    	\label{fig:fixed_MI_multi30k}
	\end{subfigure}
	\begin{subfigure}{0.4\textwidth}
    	\includegraphics[width=0.8\textwidth]{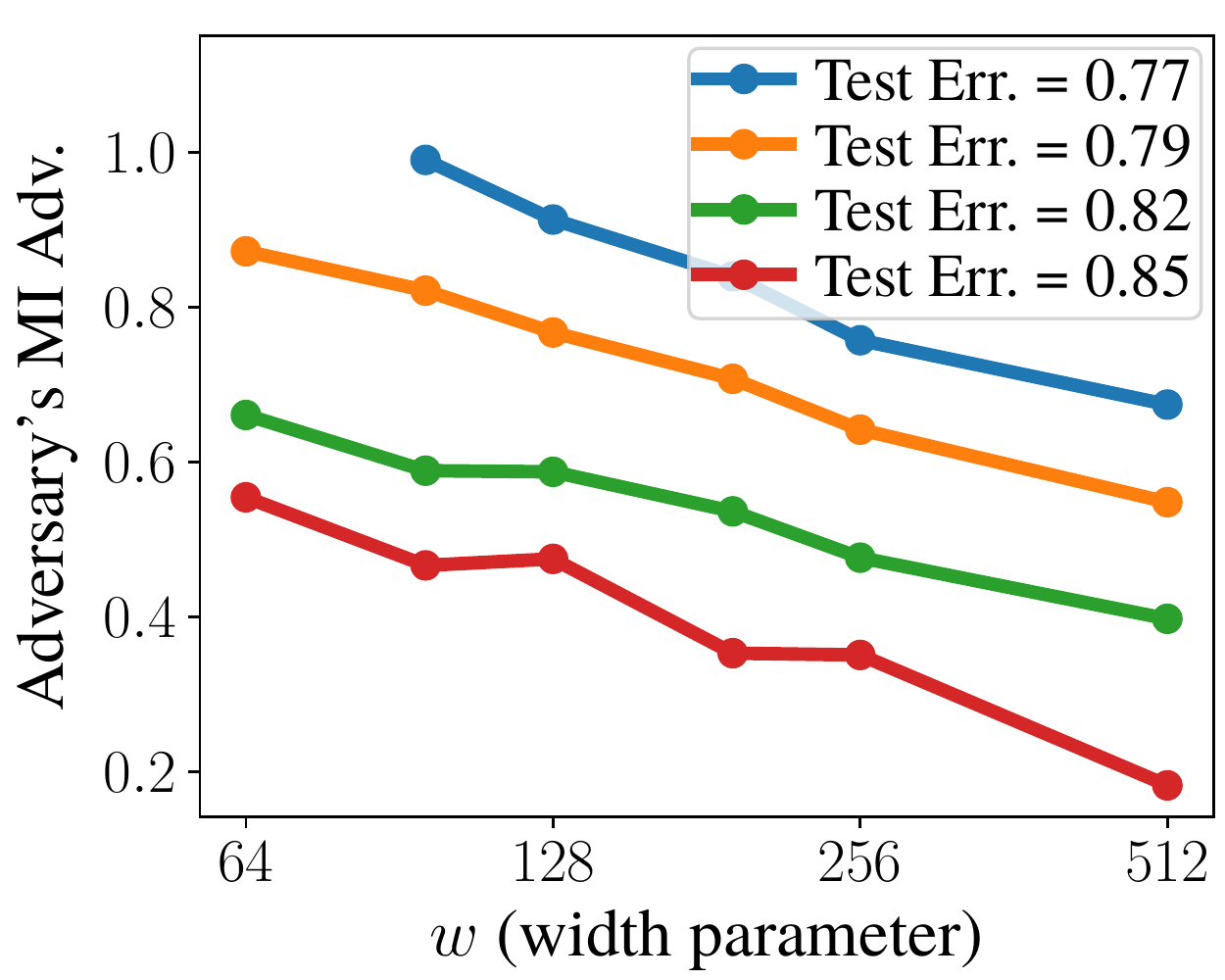}
    	\caption{MI adv.\ vs.\ net width for fixed test error.}
    	\label{fig:fixed_val_multi30k}
	\end{subfigure}
	\vspace{8pt}
	\caption{\textbf{Overparameterization with early stopping eliminates the privacy--utility trade-off on Multi30k.} This is similar to Figure~\ref{fig:fixed_MI_val} in the main body, but performed on the Multi30k dataset with the Transformer architecture. (a)~For each network width, we train the network until it reaches a given MI advantage value. We then plot the test error of the networks. Observe how test error decreases with parameters at a fixed MI advantage value. Thus, this eliminates the privacy--utility trade-off. Proper tuning of parameters and epochs together improves model accuracy without damaging its privacy. (b) Same as (a) but switching the roles of MI advantage and test error.}
	\label{fig:fixed_MI_val_multi30k}
\end{figure}

\begin{figure}[t]
	\centering
	\includegraphics[width=0.75\textwidth]{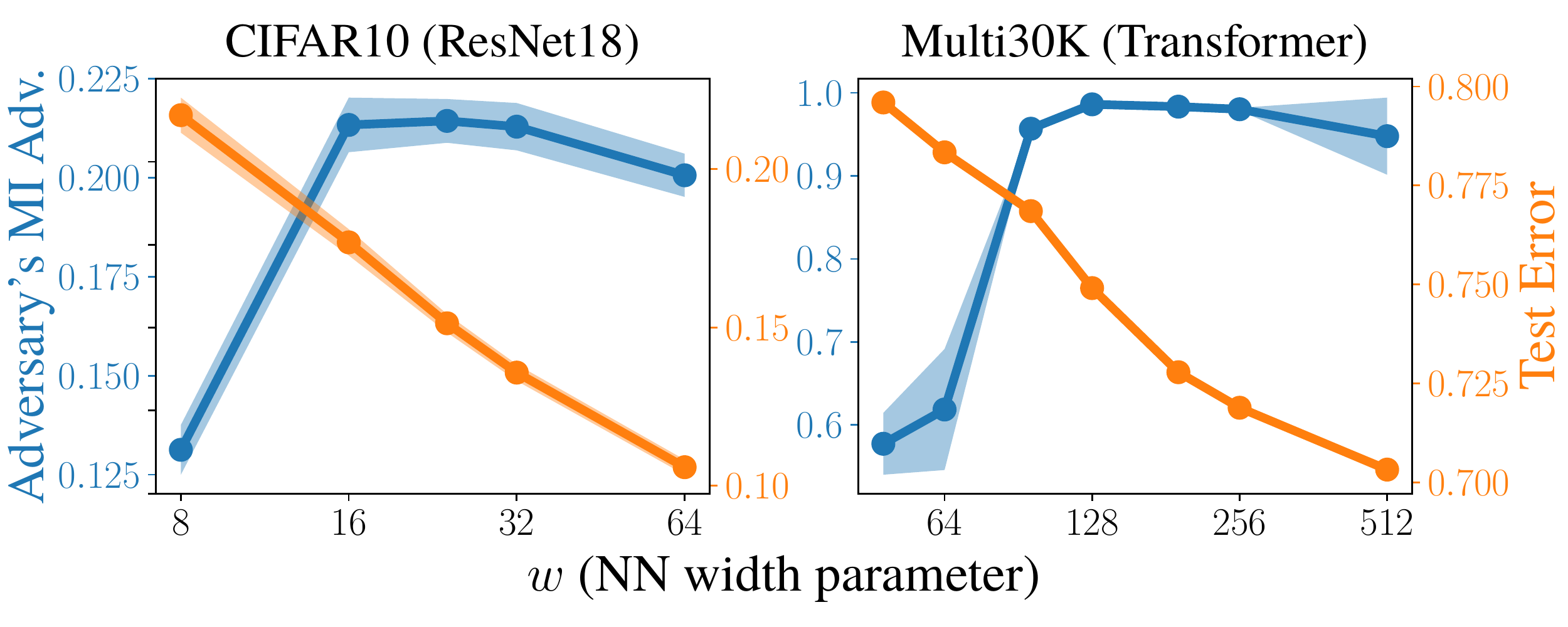}
	\caption{\textbf{Privacy vs.\ parameters (global loss threshold attack).} We repeat the experiment in Figure~\ref{fig:MI_vs_params_nn}, but now using the global (instead of sample-specific) loss threshold attack. Similarly, wider networks generally suffer from higher vulnerability to MI attacks while achieving lower test error.}
	\label{fig:MI_vs_params_nn_global}
\end{figure}

\begin{figure}[t!]
	\centering
	\includegraphics[width=0.75\textwidth]{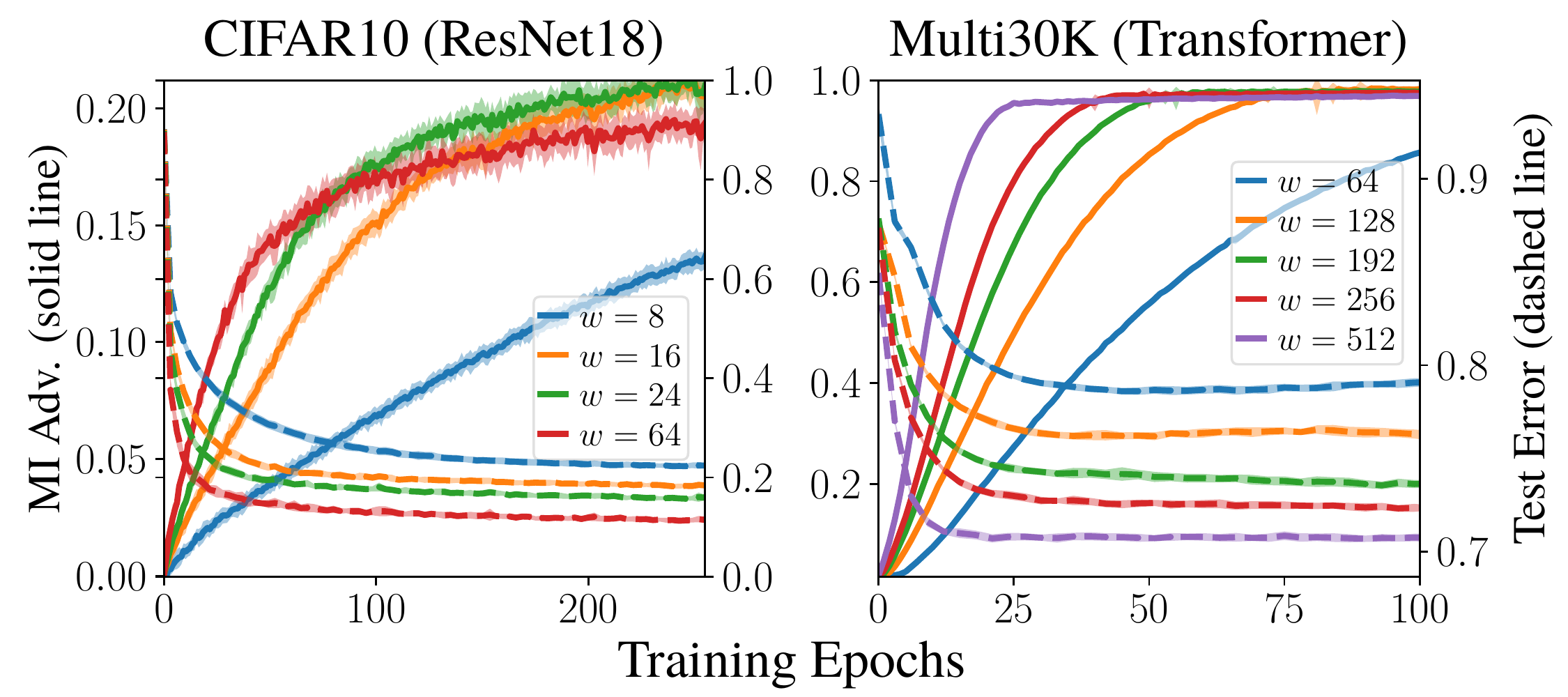}
	\caption{\textbf{Privacy vs.\ epochs (global loss threshold attack).} We repeat the experiment in Figure~\ref{fig:epochwise}, but now using the global (instead of sample-specific) loss threshold attack. Again, as epochs increase, membership advantage increases while test error decreases.}
	\label{fig:epochwise_global}
\end{figure}

\begin{figure}[t]
	\centering
	\includegraphics[width=0.7\textwidth]{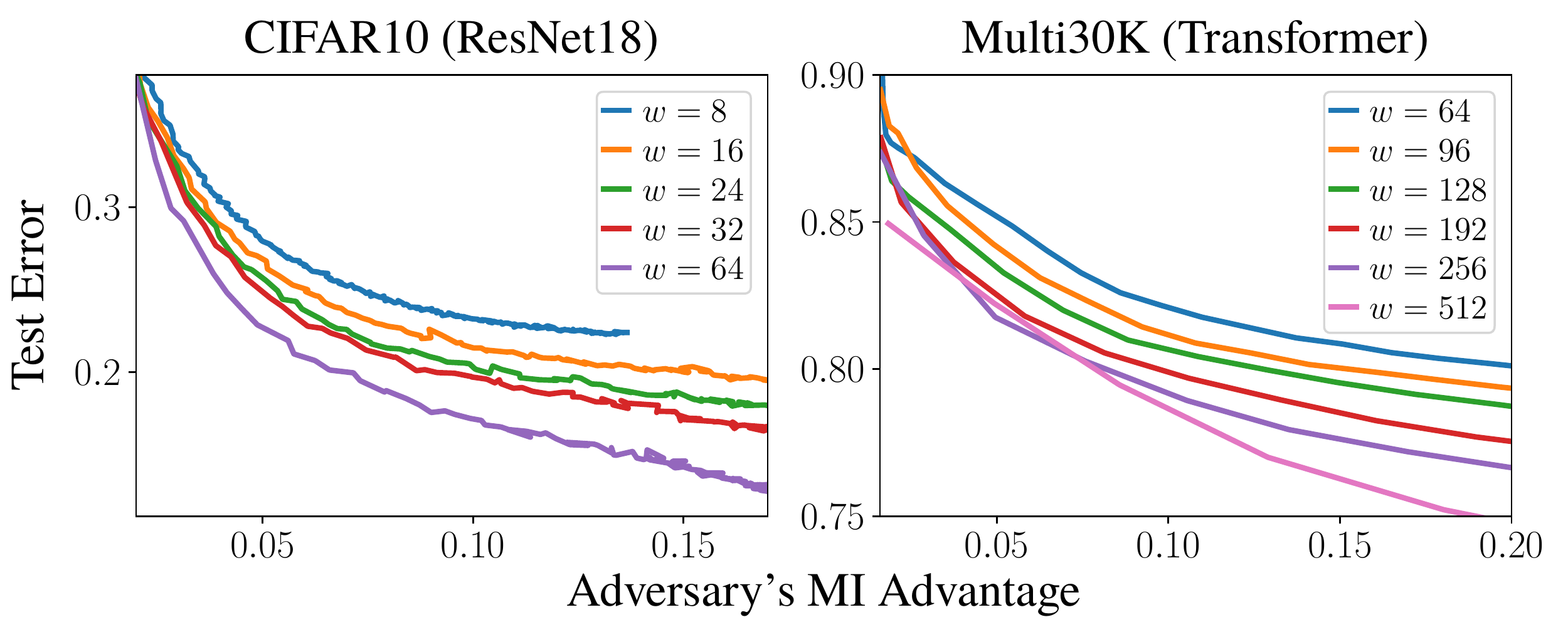}
	\caption{\textbf{Trade-offs (global loss threshold attack).} We repeat the experiment in Figure~\ref{fig:tradeoff}, but now using the global (instead of sample-specific) loss threshold attack. We observe again how wider networks are closer to the lower-left portion of the graph, indicating better privacy and better test accuracy compared to their narrower counterparts.}
	\label{fig:tradeoff_global}
\end{figure}

\begin{figure}[t]
	\begin{subfigure}{0.48\textwidth}
    	\includegraphics[width=\textwidth]{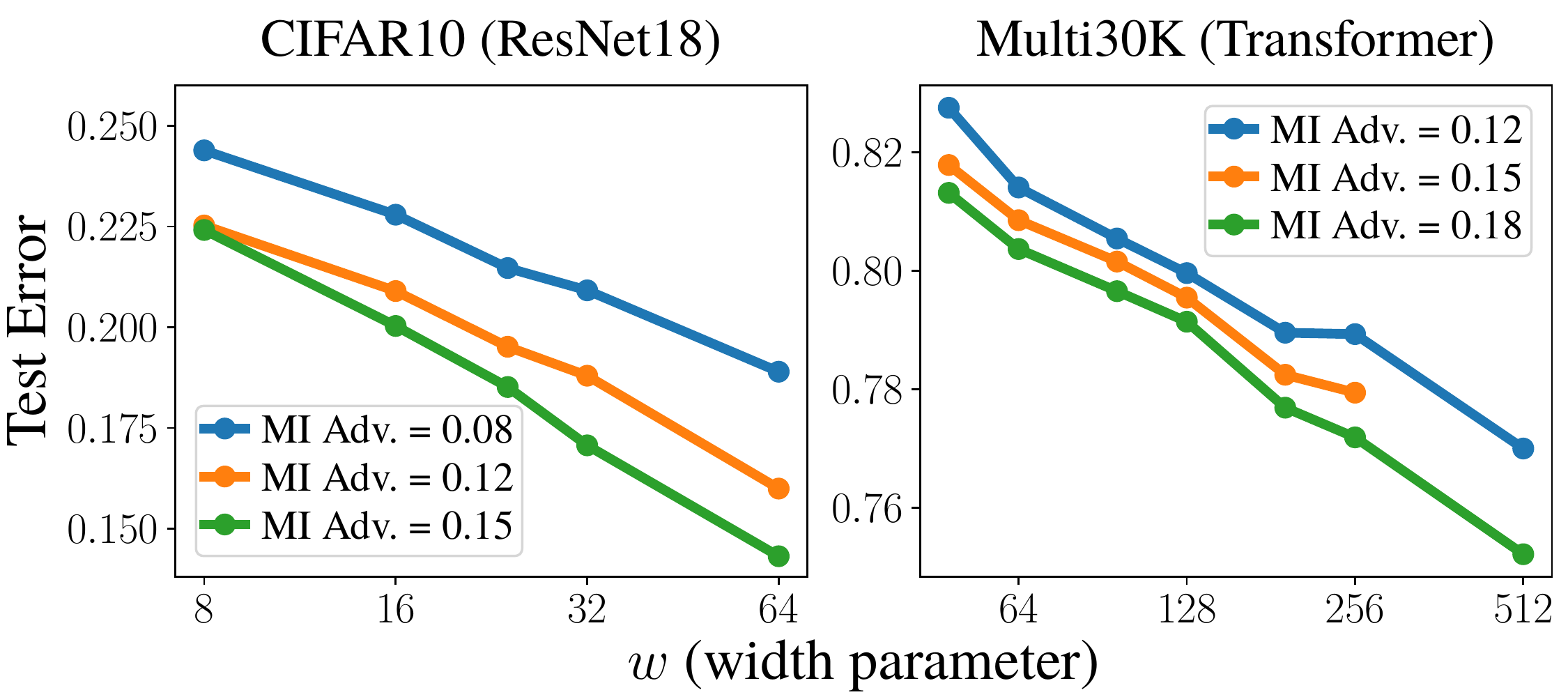}
    	\caption{Test error vs.\ net width for fixed MI adv.}
    	\label{fig:fixed_MI_global}
	\end{subfigure}
	\hspace*{\fill}
	\begin{subfigure}{0.48\textwidth}
    	\includegraphics[width=\textwidth]{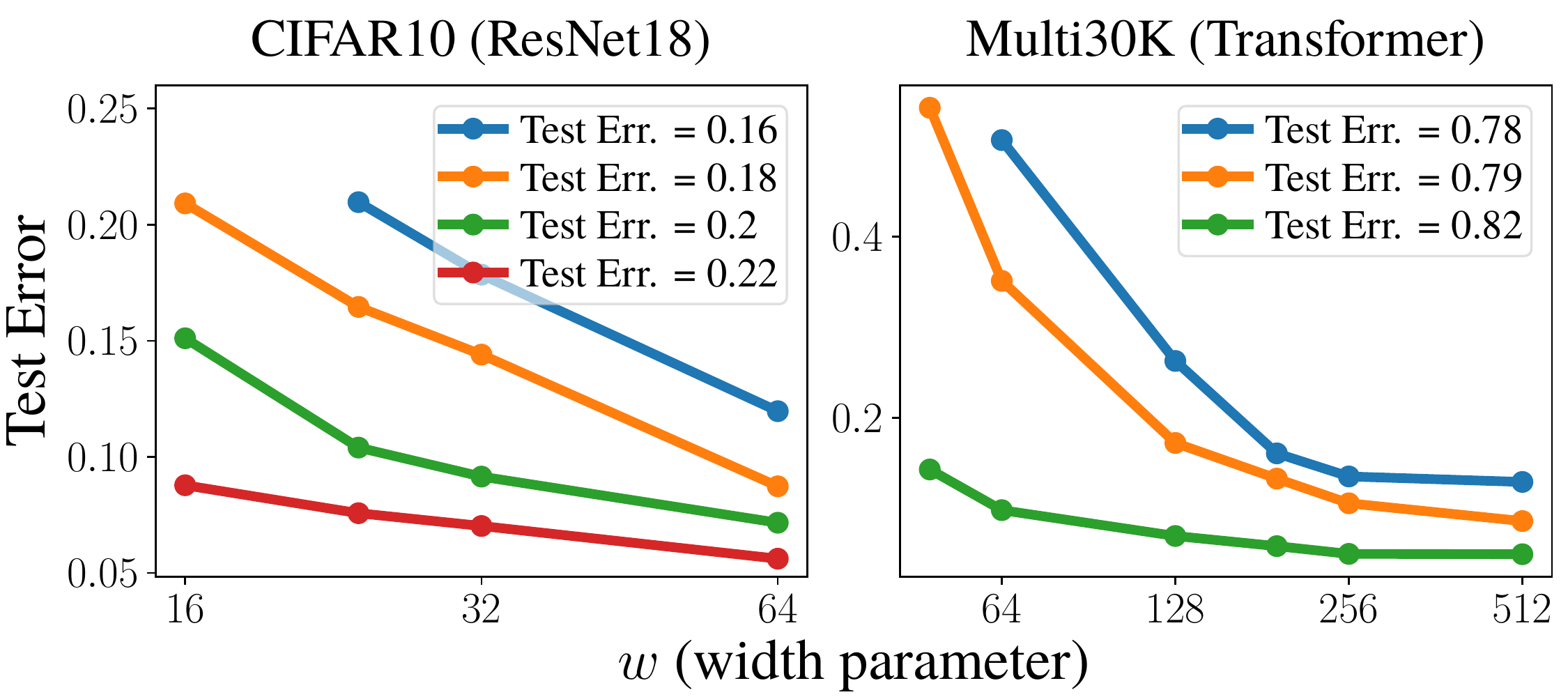}
    	\caption{MI adv.\ vs.\ net width for fixed test error.}
    	\label{fig:fixed_val_global}
	\end{subfigure}
	\vspace{8pt}
	\caption{\textbf{Overparameterization with early stopping eliminates the privacy--utility trade-off (global loss threshold).} Similar to Figures~\ref{fig:fixed_MI_val} and \ref{fig:fixed_MI_val_multi30k}, but using the global loss threshold. Increasing the parameters can improve either privacy or test accuracy when keeping the other fixed (by epoch tuning).}
	\label{fig:fixed_MI_val_global}
\end{figure}

\subsection{Global Loss Threshold Attack}

In Figures \ref{fig:MI_vs_params_nn}, \ref{fig:epochwise}, \ref{fig:tradeoff}, and \ref{fig:fixed_MI_val}, we used the sample-specific loss threshold attack for CIFAR10 and Multi30K, where a different loss threshold is learned for each data point. Here, we repeat the same experiments using the global loss threshold, where a single threshold value is used for all the data points. Note that in the mentioned figures, we already employed the global loss threshold attack for Purchase100. The trends we observe for the global loss threshold attack are similar to those of the sample-specific loss threshold attack. The results are shown in Figures \ref{fig:MI_vs_params_nn_global}, \ref{fig:epochwise_global}, \ref{fig:tradeoff_global}, and \ref{fig:fixed_MI_val_global}. We use $n_{\text{shadow}} = n_{\text{target}} = 15$ for both datasets in this experiment.

\subsection{Privacy-Utility Trade-offs for DP-SGD on CIFAR10}
We perform the same experiment of Figure \ref{fig:tradeoff} for CIFAR10 with ResNet18 models trained with DP-SGD \citep{abadi2016deep}. In DP-SGD, gradients are clipped to a maximum bound, and noise is added to the gradients before the gradient descent step. The model training procedure is guaranteed to be $(\epsilon, \delta)$ differentially private for some $\epsilon$ and $\delta$ according to the amount of noise added and the number of training epochs. The addition of noise also serves as a form of regularization. We thus obtain the regularization-wise privacy-utility trade-off for each network width by varying the amount of noise added. Specifically, we set the gradient clipping bound to 1, the number of epochs to 200, and $\delta$ to $\frac{1}{25000}$. For each $\epsilon \in \{1, 2, 3, ..., 14, 15, 16, 20, 50, 100\}$ and each learning rate in $\{0.1, 0.5, 1, 2, 4, 8\}$, we train 5 networks with noise added to the gradients such that the procedure is $(\epsilon, \delta)$ differentially private. Smaller $\epsilon$ parameters yield more noise, which serves as increased regularization. We try different learning rates as it has been observed that learning rate tuning can affect DP-SGD performance. We apply the global loss threshold attack and plot the mean test errors and mean membership advantage across the 5 networks for each $\epsilon$ and learning rate for different model widths in Figure~\ref{fig:dpsgd}. For each network width, we only include its Pareto optimal points. That is, we exclude a point if there exists another point that has both lower test error and lower membership advantage. We observe the same phenomenon as in Figure~\ref{fig:tradeoff}. Wider networks enjoy better privacy-utility trade-offs than narrower networks.

\subsection{TPR at FPR=1\%}

In Figure~\ref{fig:tpr_fpr_1percent}, we perform the same experiment as in Figure~\ref{fig:tradeoff} of the main paper, but we instead use the global loss threshold attack and report the maximum achievable true positive rate (TPR) when the false positive rate (FPR) is constrained to be at most 1\%. For the loss threshold attack, the adversary predicts the data point to be a member if the model's loss on the data point is below some $\tau$. When $\tau$ is increased, the adversary more frequently predicts the data point as being a member. This increases the adversary's TPR, but it will also increase its FPR. For the attack used in Figure~\ref{fig:tpr_fpr_1percent}, we choose the global thresholds for each individual network that maximizes the TPR under the constraint that the FPR is at most 1\%. We refer readers to \cite{carlini2021membership} for additional discussion on using the metric of TPRs for constrained FPRs. In this metric, we still observe the same blessing of dimensionality: wider networks can achieve lower test error and lower MI adversary TPRs than their narrower counterparts.

\begin{figure}[t]
	\centering
	\includegraphics[width=0.4\textwidth]{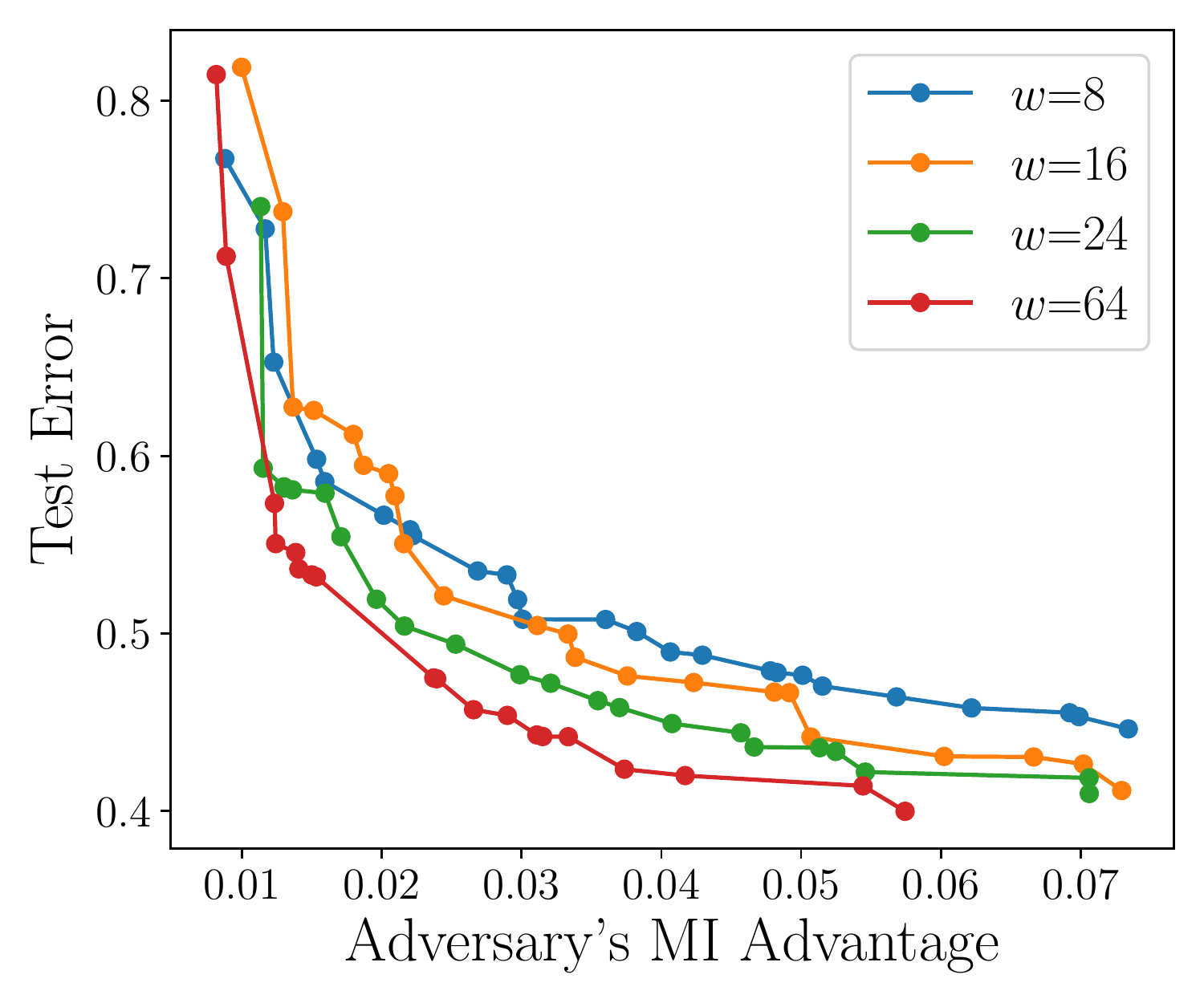}
	\caption{\textbf{DP-SGD Trade-off.} We train ResNet18 networks on CIFAR-10 with DP-SGD. We sweep through $\epsilon \in \{1, 2, 3, ..., 15, 16, 20, 50, 100\}$ and learning rates $\{0.1, 0.5, 1, 2, 4, 8\}$. For each $\epsilon$ and learning rate, we train 5 networks. Each point on the plot corresponds to the mean test error and mean MI advantage of the global loss threshold attack over the 5 networks for some $\epsilon$ and learning rate. We only include points that are Pareto optimal---we exclude a point if there exists another point with both lower test error and lower MI advantage. We fix the clipping bound to 1 and the number of epochs to 200. The plot shows that wider ResNet18 networks achieve better privacy--utility trade-offs than narrower networks when tuning the DP-SGD noise amount added.}
	\label{fig:dpsgd}
\end{figure}

\begin{figure}[t]
	\centering
	\includegraphics[width=0.8\textwidth]{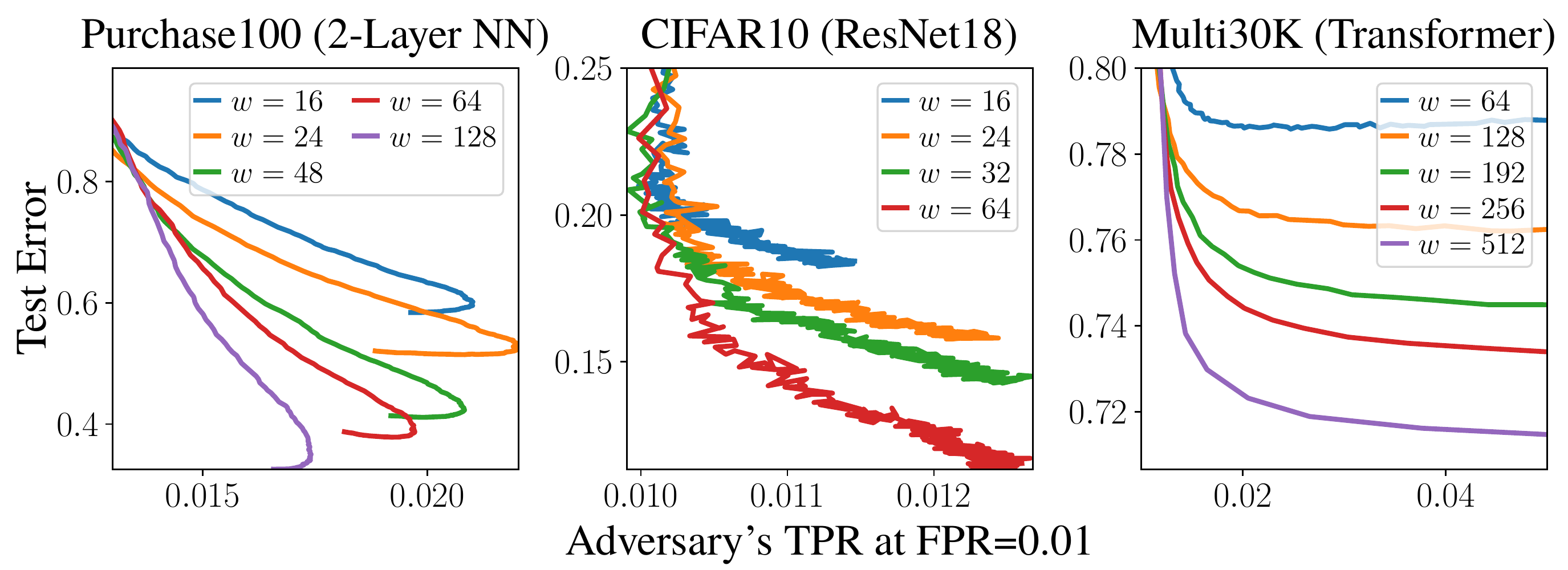}
	\caption{\textbf{TPR at FPR=1\%.} We show the privacy-utility trade-offs similar to Figure~\ref{fig:tradeoff} but reporting the global loss threshold's true positive rate (TPR) using the threshold value that maximizes TPR under the constraint that the false positive rate $\leq$ 0.01. We again observe wider networks enjoying better privacy-utility trade-offs than narrower ones.}
	\label{fig:tpr_fpr_1percent}
\end{figure}

\subsection{Parameter-Wise Privacy-Utility Trade-Off for Support Vector Machines}
\label{sec:svm} 
\begin{figure*}[t]
	\centering
	\includegraphics[width=\textwidth]{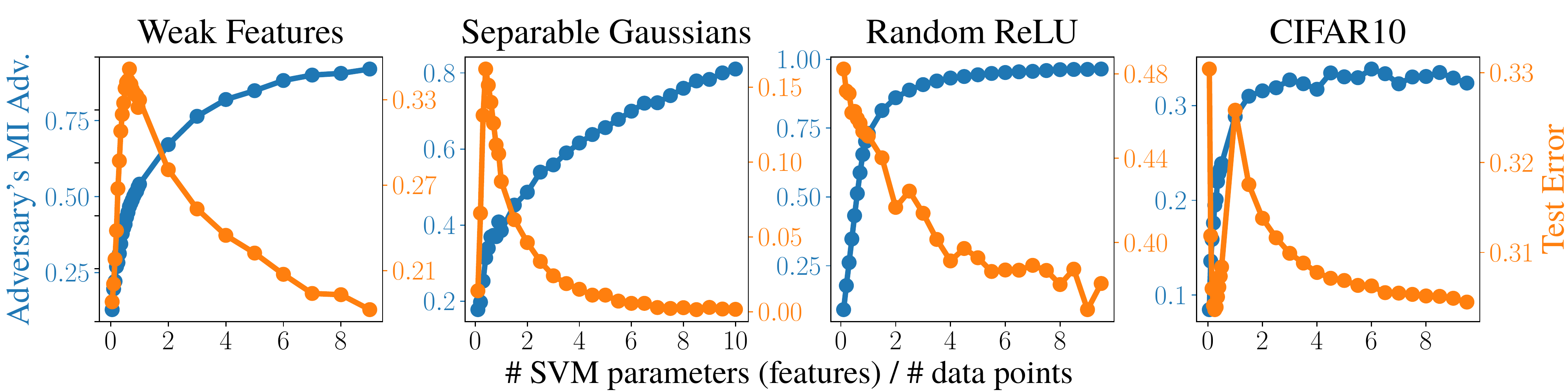}
	\caption{\textbf{Privacy vs.\ parameters (SVMs).} We demonstrate on SVMs for a variety of feature models how increasing overparameterization increases the adversary's MI advantage on the SVM model even as it decreases validation error. Thus, the number of parameters induces a privacy--utility trade-off.}
	\label{fig:MI_vs_params_svm}
\end{figure*}

We now consider support vector machines (SVMs), plotting both an adversary's membership advantage and the SVM model's validation error as a function of the number of parameters in Figure~\ref{fig:MI_vs_params_svm} for a variety of data models. We observe how MI increases (thus damaging privacy) while test error decreases (yielding a more accurate model) as the number of parameters grows. We consider data models based on those that have been shown to exhibit double descent in the overparameterized machine learning literature, including the weak features ensemble from \cite{muthukumar2021classification}, separable Gaussians with irrelevant features (based on synthetic dataset 1 of \citealp{belkin2018understand}), random ReLU features \citep{montanari2019generalization}, and random projections on two classes of CIFAR10. For the MI attack, we estimate the optimal LRT adversary \citep{tan2022parameters} by approximating the model output distributions as discrete histograms using Monte Carlo sampling over a minimum of 20,000 trials.

\subsubsection{SVM experimental setup}

This subsection provides details on the SVM experiments whose results are shown in Figure \ref{fig:MI_vs_params_svm}. For all SVM models, we use scikit-learn's SVC class \citep{pedregosa2011scikit}. When the number of SVM parameters is smaller than the number of data points in the training dataset, we add regularization $C=1$, where $C$ is the corresponding regularization parameter in scikit-learn's SVC class. Else, we use $C=10^{20}$, essentially applying no regularization to yield the hard-margin SVM. The hard-margin SVM has been studied considerably in the double descent literature \citep{montanari2019generalization, muthukumar2021classification}, especially with regards to its relationship to logistic regression trained with gradient descent \citep{soudry2018implicit, ji2019implicit}.

We use the optimal MI adversary \citep{tan2022parameters}, which is a likelihood ratio test, as our MI attack. Suppose we are given two discrete distributions over values $s_i$ with probability mass functions $q_{m=0}$ and $q_{m=1}$. The optimal adversary $A^*$ is defined by:
\begin{align}
    A^*(s_i) = \begin{cases}
    1 &\mbox{if } q_{m=1}(s_i) > q_{m=0}(s_i),\\
    0 &\mbox{otherwise}
    \end{cases}.
\end{align}

For each experiment, the general procedure is as follows. We first generate a $D$-dimensional data point $\mathbf{x}_0$ with binary label $y_0$, for some $D$. This is the data point on which MI will be performed. Then, for an integer $p$, we perform the following procedure $L$ times. We first generate an $n \times p$ training dataset matrix $\mathbf{X}$, for some $n$, and a corresponding label vector $\mathbf{y}$. Generally, these are distributed in the same way as $(\mathbf{x}_0, y_0)$. All experiments here are binary classification tasks, so $\mathbf{y}_i \in \{-1, +1\}$ for $i \in \{1, 2, ..., n\}$. We then apply label noise to $\mathbf{y}$: we flip each label $\mathbf{y}_i$ to the other class with probability $\alpha$. Afterwards, we learn an SVM on $\mathbf{X}$ and $\mathbf{y}$. We then denote by $\hat{y}_0$ the signed distance of $\mathbf{x}_0$ to the decision hyperplane of the learned SVM. We collect the $\hat{y}_0$ of all $L$ learned SVMs into an output vector $\mathbf{\hat{y}}_{m=0}$. We then repeat the same procedure another $L$ times, but this time, before learning the SVM on $\mathbf{X}$ and $\mathbf{y}$, we first replace the first rows $\mathbf{X}_1 = \mathbf{x}_0$ and $\mathbf{y}_1 = y_0$. Label noise is never applied to $y_0$. We collect the resulting $L$ signed distances to the learned SVM hyperplanes into the output vector $\mathbf{\hat{y}}_{m=1}$. We form discrete histograms for both $\mathbf{\hat{y}}_{m=0}$ and $\mathbf{\hat{y}}_{m=1}$ with bin width $b$. Finally, we perform the optimal adversary attack on these histograms and measure the corresponding membership advantage. This entire experiment is repeated for multiple values $p$ to generate Figure \ref{fig:MI_vs_params_svm}.

The following subsections provide the distributions of $\mathbf{x}_0$, $y_0$, $\mathbf{X}$, and $\mathbf{y}$, as well as the hyperparameters $n$ (number of data points), $D$ (full data dimensionality), label noise probability $\alpha$, $L$ (number of samples used to form the histogram), histogram bin width $b$, and the set of number of features $p$ investigated for each data model.

\subsubsection{Weak features}

The weak features experiment is based on the weak features ensemble discussed in Definition 9 of \cite{muthukumar2021classification}. In our experiment, we let $D=1000$, $\mathbf{x}_0 \sim \mathcal{N}(\mathbf{1}_D, \mathbf{I}_D)$, and $y_0 = 1$. We perform the experiment for $p \in \{5, 10, 15, ..., 95, 100, 200, 300, ..., 900\}$ with number of data points $n=100$, number of samples $L = 20,000$, histogram bin width $b=0.05$, and label noise probability $\alpha=0.2$. The $n \times p$ training dataset matrix $\mathbf{X}$ is generated such that the i'th row $\mathbf{X}_i \sim \mathcal{N}(z_i, \mathbf{I}_p)$, with $z_i \sim \mathcal{N}(0, 1)$. The elements of the label vector $\mathbf{y}$ are defined by $y_i = \text{sign}(z_i)$. In essence, each element of a training data point $\mathbf{X}_i$ is the true signal $z_i$ (on which the label $y_i$ is based) corrupted by Gaussian noise.

\subsubsection{Separable Gaussians}

The separable Gaussians model is based on synthetic dataset 1 of \cite{belkin2018understand} with some modifications. In our experiment, we set $D=1000$ and generate $\mathbf{x}_0$ by sampling its individual elements as:
\begin{align}
    \mathbf{x}_{0,j} \sim \begin{cases}
    \mathcal{N}(1, 1) &\mbox{if } j \leq 100 \\
    \mathcal{N}(0, 1) &\mbox{otherwise}
    \end{cases},
\end{align}
with true label $y_0 = 1$. We perform the experiment for $p \in \{10, 20, 30, ..., 90, 100, 150, 200, 250, ..., 1000\}$ with number of data points $n=100$, number of samples $L = 10,000$, and histogram bin width $b=0.05$.. Each element of the label vector $\mathbf{y}_i$ is randomly selected from $\{-1, +1\}$ with uniform probability. The individual elements (row $i$ and column $j$) of the training dataset matrix $\mathbf{X}$ are distributed as: 
\begin{align}
    \mathbf{X}_{i, j} \sim \begin{cases}
    \mathcal{N}(\mathbf{y}_i, 1) &\mbox{if } j \leq 100 \\
    \mathcal{N}(0, 1) &\mbox{otherwise}
    \end{cases}.
\end{align}
Label noise with probability $\alpha = 1$ is then applied to $\mathbf{y}$ after $\mathbf{X}$ is generated. Essentially, the first $\min(100, p)$ features of each data point depend on its true class, and the remaining features are irrelevant (independent of the class). Thus, in the overparameterized regime, as $p$ increases, we are including more irrelevant features to the model.

\subsubsection{Random ReLU features}

The random ReLU features model has been studied by multiple papers, such as \cite{rahimi2007random} and \cite{montanari2019generalization} (section 3). In essence, it is a two-layer ReLU neural network with fixed first-layer random weights. Different from the previous SVM data models, here $\mathbf{x}_0$ is defined differently for each trained SVM model because of the random projections. Instead, there is a latent data vector $\mathbf{z}_0$ that is kept fixed for all experiments and on which MI is performed. We set $D=200$, and generate $\mathbf{z}_0$ by sampling it from $\mathcal{N}(0, \mathbf{I}_D)$. We perform the experiment for $p \in \{10, 20, 30, ..., 90, 100, 150, 200, ..., 950\}$ with number of data points $n=100$, number of samples $L = 100,000$, histogram bin width $b=0.001$, and no label noise. To generate the training data, a random $p \times D$ ``featurizer'' matrix $\mathbf{W}$ is first generated by sampling each row independently from the $D$-dimensional unit sphere. Then, an $n \times D$ feature data matrix $\mathbf{Z}$ is generated by sampling each element iid standard normal. The training data matrix $\mathbf{X} = \max(0, \mathbf{Z}\mathbf{W}^T)$, where the $\max$ operation is applied elementwise. The MI data point $\mathbf{x}_0$ is defined as $\mathbf{x}_0 = \max(0, \mathbf{z}_0^\top \mathbf{W}^\top)$. Note that since $\mathbf{W}$ is sampled for each trained SVM, $\mathbf{x}_0$ changes for each experimental run. To generate the labels of the data points, first, a random vector $\beta$ is sampled uniformly from the $D$-dimensional sphere of radius 4 (such that $||\beta||_2 = 4$). Then, $y_i$ is assigned class $1$ with probability $\frac{1}{1 + e^{-\mathbf{Z}\beta}}$ and class $-1$ otherwise. The label $y_0$ of $x_0$ is defined similarly and is assigned class $1$ with probability $\frac{1}{1 + e^{-\mathbf{Z}\beta}}$ and class $-1$ otherwise. Essentially, the class of a data point depends only on $\mathbf{Z}$, and the training set consists of random projections of $\mathbf{Z}$ that are then passed through the ReLU operation.

\subsubsection{CIFAR10}

To experiment on real data, we train SVMs on random projections of a subset of the CIFAR10 dataset \citep{krizhevsky2009learning}. We first define $\mathbf{z}_0$ to be the first image of the training dataset with class ``airplane'' converted to grayscale and then vectorized. We perform the experiment for $p \in \{10, 20, 30, ..., 90, 100, 200, 300, ..., 1800, 1900\}$ with number of data points $n = 200$, number of samples $L = 10,000$, histogram bin width $b=0.05$, and no label noise. To generate the $n \times p$ data matrix, we first randomly sample $\frac{n}{2}$ images uniformly from the ``airplane'' images of the dataset (excluding $\mathbf{x}_0$) and $\frac{n}{2}$ images from the ``automobile'' images of the dataset. We convert the images to grayscale, vectorize them, and collect them into a matrix $\mathbf{Z}$ (where each row is a vectorized image). Since each image is of size $32 \times 32$, the vectorized image is $D=1024$ dimensional. We then sample a $p \times 1024$ random projections matrix $\mathbf{W}$, where each row is sampled uniformly from the $1024$-dimensional unit sphere. Finally, the data matrix $\mathbf{X} = \mathbf{Z}\mathbf{W}^\top$. The MI point $\mathbf{x}_0 = \mathbf{z}_0^\top \mathbf{W}^\top$. The labels of each data point is $-1$ if it originated from an ``airplane'' image and $+1$ if it originated from an ``automobile'' image.

\vfill

\end{document}